\documentclass[11pt]{article}

\usepackage[utf8]{inputenc} 
\usepackage[T1]{fontenc}    
\usepackage{hyperref}       
\usepackage{url}            
\usepackage{booktabs}       
\usepackage{amsfonts}       
\usepackage{nicefrac}       
\usepackage{microtype}      
\usepackage{tcolorbox}
\usepackage{bookmark}
\usepackage{fullpage}
\usepackage{enumerate}
\usepackage{enumitem}
\usepackage{color}
\usepackage{array}
\usepackage{wrapfig}
\usepackage{bbm}
\usepackage{booktabs}
\usepackage[sort]{natbib}
\hypersetup{
	colorlinks = true,
	citecolor = blue,
	linkcolor = black
}

\usepackage{graphicx}
\usepackage{caption}
\usepackage{subcaption}
\usepackage{amsmath}
\usepackage{amsthm}
\usepackage{amssymb}
\usepackage{tikz}
\usepackage{mathtools}
\usepackage{tablefootnote}
\usepackage{multirow}
\usepackage{xcolor}
\usetikzlibrary{arrows}

\allowdisplaybreaks[4]

\usepackage{mathrsfs}

\usepackage{algorithm}
\usepackage{algorithmic}
\usepackage{bm,todonotes}


\def\robc{\text{rob,c}}
\def\robp{\text{rob,p}}

\def\diam{\mathrm{diam}}

\allowdisplaybreaks

\newtheorem{thm}{Theorem}[section]
\newtheorem{lem}{Lemma}[section]

\newtheorem{prop}{Proposition}[section]
\newtheorem{asmp}{Assumption}[section]

\setlength\extrarowheight{-2pt}

\usepackage{xcolor}





















\def\1{\bm{1}}






\def\rmP{{\mathbf{P}}}





\DeclareMathAlphabet{\mathsfit}{\encodingdefault}{\sfdefault}{m}{sl}
\SetMathAlphabet{\mathsfit}{bold}{\encodingdefault}{\sfdefault}{bx}{n}






\def\0{{\bf 0}}
\def\1{{\bf 1}}

\def\AM{{\mathcal A}}

\def\GM{{\mathcal G}}
\def\FM{{\mathcal F}}

\def\NM{{\mathcal N}}
\def\OM{{\mathcal O}}
\def\PM{{\mathcal P}}

\def\SM{{\mathcal S}}
\def\TM{{\mathcal T}}

\def\VM{{\mathcal V}}

\def\XM{{\mathcal X}}

\def\RB{{\mathbb R}}
\def\EB{{\mathbb E}}
\def\ZB{{\mathbb Z}}
\def\PB{{\mathbb P}}

\def\argmax{\mathop{\rm argmax}}
\def\argmin{\mathop{\rm argmin}}











\begin{document}

\title{
Robust Markov Decision Processes without Model Estimation
}
\author{
Wenhao Yang\thanks{Academy for Advanced Interdisciplinary Studies, Peking University; email: \texttt{yangwenhaosms@pku.edu.cn}. } \\
\and\hspace{-10pt}
Han Wang\thanks{Computing Science, University of Alberta. } \\
\and\hspace{-10pt}
Tadashi Kozuno\thanks{OMRON SINIC X, Japan. } \\
\and\hspace{-10pt}
Scott M. Jordan\footnotemark[2] \\
\and\hspace{-10pt}
Zhihua Zhang\thanks{School of Mathematical Sciences, Peking University. } \\
}
\maketitle

\begin{abstract}%
    Robust Markov Decision Processes (MDPs) are receiving much attention in learning a robust policy which is less sensitive to environment changes. There are an increasing number of works analyzing sample-efficiency of robust MDPs. However, there are two major barriers to applying robust MDPs in practice. First, most works study robust MDPs in a model-based regime, where the transition probability needs to be estimated and requires a large amount of memories $\OM(|\SM|^2|\AM|)$. Second, prior work typically assumes a strong oracle to obtain the optimal solution as an intermediate step to solve robust MDPs. However, in practice, such an oracle does not exist usually. To remove the oracle, we transform the original robust MDPs into an alternative form, which allows us to use stochastic gradient methods to solve the robust MDPs. Moreover, we prove the alternative form still plays a similar role as the original form. With this new formulation, we devise a sample-efficient algorithm to solve the robust MDPs in a model-free regime, which does not require an oracle and trades off a lower storage requirement $\OM(|\SM||\AM|)$ with being able to generate samples from a generative model or Markovian chain. Finally, we validate our theoretical findings via numerical experiments,  showing the efficiency with the alternative form of robust MDPs. 

\end{abstract}

\section{Introduction}
\label{sec: intro}
Current popular reinforcement learning (RL) algorithms rarely consider the distribution shift from simulation environments to real-world environments, which might make an RL agent suffer from a performance drop. From a theoretical perspective, a small perturbation of reward and transition probability can cause an optimal policy to become sub-optimal and a significant change in the value function \citep{mannor2004bias}.
To alleviate sensitivity in environment shift,  one combines MDPs \citep{sutton2018reinforcement} with a DRO problem \citep{duchi2016variance, duchi2018learning, namkoong2016stochastic, shapiro2017distributionally} to optimize the policy over the worst distribution within a region of the possible transition functions. And this region is called ``uncertainty set''. The mathematical model of this problem is  called robust MDPs \citep{wiesemann2013robust,iyengar2005robust,satia1973markovian}  (see Section~\ref{sec: preli} for more details).

How to design a computationally efficient and sample-efficient algorithm for solving robust MDPs is a challenge. There exists some learning algorithms with polynomially computational complexity\citep{goyal2018robust, ho2020partial, ho2018fast}, but it is still large in practice in terms of space memory, and they require the knowledge of underlying transition probabilities and rewards. In a data-driven scenario, other works \citep{si2020distributionally, zhou2021finite, yang2021towards,panaganti2022sample} give the sample complexity of robust bandits and MDPs without the knowledge of underlying transition functions and rewards but only offline data. But these works ignore the computation complexity of solving DRO problems, which is expensive, and assume the optimal solution of a DRO problem can be obtained exactly from an oracle. Moreover, these two lines of works rely on either the true value or empirical estimation of transition functions and rewards, which requires a large space to store the model in memory. Therefore, a core question remains open:
\begin{quote}
    \textit{Can we design a practical algorithm with a low storage requirement to solve robust MDPs with sample-efficiency guarantees? }
\end{quote}
In this paper we would address this issue by design an efficient algorithm with only $\OM(|\SM||\AM|)$ storage, which is model-free \citep{chen2016stochastic}. And we offer the following main contributions.


\paragraph*{Contributions.}
Rather than solving original robust MDPs, we propose a surrogate of robust MDPs, where we remove the constraint on transition functions and instead treat it as a penalty in the value function. And we call the original one the ``constrained'' problem and the surrogate the ``penalized'' problem. The two different problems connect with each other via Lagrangian duality \citep{boyd2004convex}. The motivation from the transformation is two-staged. First, in order to design a model-free algorithm, we need to leverage the dual form \citep{shapiro2017distributionally} of the DRO problem, which could allow us to apply stochastic gradient methods. Second, solving a constrained DRO problem from its dual form will suffer from unbounded gradients \citep{namkoong2016stochastic}, which makes stochastic gradient method fail to converge. Thus, we introduce the penalized version which provide bounded gradient and finite-sample convergence guarantees.

In Section~\ref{sec: prmdp}, to validate whether the penalized robust MDPs is well-defined, we establish the same fundamental propositions used to develop constrained robust MDPs \citep{iyengar2005robust}. To be concrete, we show the Bellman equation still exists in the penalized setting and establish statistical results with a generative model \citep{azar2013minimax}. Comparing to constrained robust MDPs in \citet{yang2021towards}, we find the statistical results are similar, which guarantee the reasonability of the penalized version.


With the penalized form, the dual form of the DRO problem can be regarded as a risk minimization problem. Thus, it is natural to solve it by a stochastic gradient method, from which we do not require an oracle to the DRO problem solutions anymore. Leveraging on this, in Section~\ref{sec: generative}, we design a ``Q-learning'' type algorithm and prove the sample complexity of our algorithm is polynomially dependent on the robust MDPs' parameters (see the detail in Section~\ref{sec: generative}), including state-action space size, discount factor, size of uncertainty set, etc.

The previous approach required independent samples for each state-action pair, but in practice such a generating mechasim might not exist. This creates the algorithm in the generative model setting would be restricted in some scenarios. Instead, in Section~\ref{sec: markovian}, we consider a more realistic and difficult data generating mechanism, named Markovian data, where we can only observe one trajectory following a given behavior policy. Different from the generative model, only one sample could be generated for current visiting state-action pair in this setting. Again, under some regular assumptions, we design a ``Q-learning'' type algorithm and prove its sample complexity in this setting. However, the result relies heavily on some parameters than the generative model setting but is still polynomially dependent on the robust MDPs' parameters. 

Finally, in Section~\ref{sec: experiment}, we conduct numerical experiments to demonstrate the utility of the penalized robust MDP formulation as a practical and efficiently solvable alternative to the constrained robust MDP formulation. 


\paragraph*{Related Work.}
Robust MDPs were proposed by \citet{iyengar2005robust,nilim2005robust,satia1973markovian} to alleviate the sensitivity of optimal policies and value functions w.r.t. estimation errors of transition functions and rewards. Given the access to the true transition functions and rewards, many works  have developed computationally efficient algorithms to solve the robust MDPs~\citep{wiesemann2013robust,xu2006robustness, lim2013reinforcement, goyal2018robust,ho2018fast,ho2020partial}. If the true environment is unknown but samples can be  generated from the environment, there are various works proving sample complexity bounds that tell us how many samples are sufficient to guarantee an accurate solution. 
In terms of model-based methods, \citet{zhou2021finite, panaganti2022sample, yang2021towards, shi2022distributionally} constructed empirical estimation of the transition functions and rewards from the samples. And they applied a variant of value iteration \citep{sutton2018reinforcement} with the estimated model to solve robust MDPs. Although they gave the sample complexity of their algorithms, they did not consider the computation complexity of solving robust MDPs. For model-free methods, \citet{liu2022distributionally} proposed a robust Q-learning algorithm to learn the robust Q-value function by multilevel Monte-Carlo method. Subsequently, \cite{pmlr-v206-wang23b} showed the sample-complexity of this algorithm. And both of them require an oracle to solve the DRO problem.


Despite the accomplishments of previous works, it is still unknown how to design an algorithm requiring less memory space (model-free) and theoretically efficient. In the primal form of constraint robust MDPs, we need to solve the DRO problem with $|\SM|^2|\AM|$ variables which requires significant computational and memory resources  \citep{namkoong2016stochastic, duchi2018learning}, from which a model-free algorithm is unlikely to be designed. Instead, if we solve the DRO problem from its Lagrangian dual form \citep{shapiro2017distributionally}, it is possible to design a model-free algorithm . We provide the details in Sections~\ref{sec: preli} and~\ref{sec: prmdp}. Because of the unbounded issue in the constraint problem,  \citet{sinha2017certifying} changes the constraint problem to the penalty term in objective function. Using the penalty form, \citet{qi2021online, Jin2021non} provide a theoretically efficient gradient method for DRO problem. Inspired by this transformation, we apply it to robust MDPs and design a sample-efficient and model-free algorithm.

Moreover, to deal with the Markovian data setting, the algorithm we propose in Section~\ref{sec: markovian} can also be regarded as a two-time-scale stochastic optimization problem. For linear case, several works \citep{konda2004convergence,kaledin2020finite,gupta2019finite,doan2020finite} has studied the finite-sample results. For non-linear case, there are also some works \citep{zeng2021two,doan2021nonlinear, doan2021finite,mokkadem2006convergence} study the finite-sample results. However, due to the non-smoothness of Q-learning, we can not apply the results of prior works directly. Moreover, to control the noise induced by Markovian data, we adapt a Poisson equation method \cite{benveniste2012adaptive,li2023online,metpri87} in this paper. 

The remainder of this paper is organized as follows. In Section~\ref{sec: preli} we review distributionally robust optimization and robust Markov decision processes. In Section~\ref{sec: prmdp} we present rnative formulation for robust Markov decision processes. nt aWe then present our main results with a generative model and a Markovian data mechanism in Sections~\ref{sec: generative}  and~\ref{sec: markovian}, respectively. We conduct experimental analysis in Section~\ref{sec: experiment}, and conclude our work in Section~\ref{sec: conc}. We leave the proof details to the appendix. 


\section{Preliminaries}
\label{sec: preli}
For any finite set $\XM$, we denote the set of  probability distributions on $\XM$ as $\Delta(\XM)$. For any two probability distributions $P,Q$ with a finite support $\XM$, $Q\ll P$ signifies $Q$ is absolutely continuous w.r.t.~$P$, which means for any $x\in\XM$, $P(x)=0$ implies $Q(x)=0$. For a convex function $f$ satisfying $f(1)=0$, we define the $f$-divergence by $D_f(Q\|P):=\sum_{x\in\XM}f\left(\frac{Q(x)}{P(x)}\right)P(x)$ for $Q\ll P$ and $D_f(Q\|P):=+\infty$ for $Q$ is not absolutely continuous w.r.t.~$P$. 
For a function $f:\Omega\rightarrow\RB\cup\{-\infty,+\infty\}$, its convex conjugate is defined by $f^*(t):=\sup_{s\in\Omega}\{st-f(s)\}$. For a random variable $X$, we denote the sigma-algebra generated by $X$ as $\sigma(X)$. For a sequence of random variables $\{X_t\}_{t=1}^T$, we denote the sigma-algebra generated by $\{X_t\}_{t=1}^T$ as $\sigma(\{X_t\}_{t=1}^T):=\sigma\left(\bigcup_{t=1}^T\sigma(X_t)\right)$.

\paragraph*{Distributionally Robust Optimization} Let $P^*(\cdot)$ be a probability distribution on a set $\XM$ and $V$ be a real-valued function on $\XM$. The constrained DRO problem \citep{shapiro2017distributionally} is formulated as:
\begin{align}
    \label{eq: drop}
    \mathcal{R}_{c}(P^*,V):=\inf_{D_{f}(P\|P^*)\le\rho,\atop P\in\Delta(\XM)} \sum_{x\in \XM} P(x)V(x),
\end{align}
and its dual form is:
\begin{align}
    \label{eq: drod}
    \mathcal{R}_{c}(P^*,V)=\sup_{\lambda\ge0,\eta\in\RB}&\Bigg[-\lambda\sum_{x\in \XM} P^*(x)f^\dagger_{\eta, \lambda, V}(x)-\lambda\rho+\eta\Bigg],
\end{align}
where $f^\dagger_{\eta, \lambda, V}(x):=f^*\left(\frac{\eta-V(x)}{\lambda}\right)$, and $\lambda$ ($\geq 0$) and $\eta$ are the dual variables w.r.t.\ constraints $D_f(P\|P^*)\le\rho$ and  $\sum_{x\in\XM}P(x)=1$, respectively. Usually, we make some assumptions on the function $f$.
\begin{asmp}
    \label{asmp: f}
    $f(t)$ is a convex function on $\RB$. It satisfies $f(1)=0$ and $f(t):=+\infty$ when $t<0$, and differentiable on $\RB_+$.
\end{asmp}

Due to the unbounded gradient issue in \ref{eq: drod} \citep{namkoong2016stochastic}, some works replace the constraint $D_f(P\|P^*)\le\rho$ with penalty \citep{Jin2021non, qi2021online, sinha2017certifying}:
\begin{align}
    \label{eq: pdrop}
    \mathcal{R}_{p}(P^*,V):=\inf_{P\in\Delta(\XM)} \sum_{x\in \XM} P(x)V(x)+\lambda D_{f}(P\|P^*).
\end{align}
Similar to \eqref{eq: drod}, the dual problem of~\eqref{eq: pdrop} is:
\begin{align}
    \label{eq: pdrod}
    \mathcal{R}_{p}(P^*,V)=\sup_{\eta\in\RB}&\left[-\lambda\sum_{x\in \XM} P^*(x)f^\dagger_{\eta, \lambda, V}(x)+\eta\right],
\end{align}
where $\eta$ is the dual variable w.r.t.~constraint $\sum_{x\in\XM}P(x)=1$. The robustness parameter for the constrained DRO problem is $\rho$, while it is $\lambda$ for the penalized DRO problem.

\paragraph*{Robust Markov Decision Processes}

An MDP is defined by the tuple $\langle\SM,\AM,P^*,R,\gamma\rangle$, where $\SM$ is a finite state space, $\AM$ is a finite action space, $P^*\colon \SM\times\AM\rightarrow\Delta(\SM)$ is the transition function, $R\colon  \SM\times\AM \rightarrow [0,1]$ is the reward function, and $\gamma\in[0,1)$ is the discount factor. A stationary policy is a function $\pi\colon \SM\rightarrow\Delta(\AM)$. A trajectory induced by a policy $\pi$ and $P$ is $(s_0, a_0, s_1, a_1, \cdots)$, where $s_{t+1}\sim P(\cdot|s_t,a_t)$, $a_t\sim\pi(\cdot|s_t)$ and $s_0$ is given or generated from an initial distribution. A robust MDP considers a set $\PM$ of transition functions within a small region around $P^*$. In the literature \citep{iyengar2005robust,wiesemann2013robust}, a $(s,a)$-rectangular uncertainty set w.r.t.\ a $f$-divergence is considered. Formally, the uncertainty set is defined by $\PM:=\otimes_{s,a\in\SM\times\AM}\PM_{s,a}(\rho)$, where 
\begin{align*}
    \PM_{s,a}(\rho):=\Bigg\{&P(\cdot|s,a)\in\Delta(\SM)\Bigg{|}D_f(P(\cdot|s,a)\|P^*(\cdot|s,a))\le\rho\Bigg\}.
\end{align*}
The value function under a policy $\pi$ on an MDP is defined by $V^\pi_{P^*}(s):=\EB_{\pi,P^*}\left[\sum_{t=0}^{\infty}\gamma^t R(s_t,a_t)|s_0=s\right]$. In a robust MDP, there is a robust value function, which considers the worst case evaluation of value for all transition functions $P \in \PM$, i.e., $V_{\robc}^\pi(s):=\inf_{P\in\PM}V_{P}^{\pi}(s)$, where ``$\text{c}$'' stands for word ``constraint.''
In this setting, it is shown the optimal robust value function $V_{\robc}^*:=\max_{\pi}V_{\robc}^\pi$ satisfies a Bellman equation $V_{\robc}^*=\TM_{\robc}V_{\robc}^*$ \citep{iyengar2005robust, zhou2021finite}, where the robust Bellman operator $\TM_{\robc}$ is defined by:
\begin{align}
    \label{eq: robcP}
    \TM_{\robc}V (s)&:=\max_{a\in\AM}\Bigg(R(s,a)+\gamma\inf_{P(\cdot|s,a)\in\PM_{s,a}(\rho)}\sum_{s'\in\SM}P(s'|s,a)V(s')\Bigg)
\end{align}
for any $V\in\VM:=\left[0,1/(1-\gamma)\right]^{|\SM|}$. Indeed, the inner problem $\inf_{P(\cdot|s,a)\in\PM_{s,a}(\rho)}\sum_{s'\in\SM}P(s'|s,a)V(s')$ is a DRO problem. We leverage the dual form of the DRO problem in Eqn.~\eqref{eq: drod} and can rewrite the robust Bellman operator $\TM_{\robc}$ by:
\begin{align}
    \label{eq: cons}
    \TM_{\robc}V (s)&:=\max_{a\in\AM}\Bigg{(}R(s,a)+\gamma\sup_{\lambda\ge0,\eta\in\RB}\Bigg[-\lambda\rho+\eta-\lambda\sum_{s'\in \SM} P^*(s'|s,a)f^\dagger_{\eta, \lambda, V}(s')\Bigg]\Bigg{)},
\end{align}
where $\lambda$ is the dual variable w.r.t.\ constraint $P(\cdot|s,a)\in\PM_{s,a}(\rho)$, and $\eta$ is the dual variable w.r.t~constraint $\sum_{s'\in\SM}P(s'|s,a)=1$.


When transition function $P^*$ is unknown, we can estimate it via offline dataset and substitute the empirical estimator $\widehat{P}$ for $P^*$. Then the empirical uncertainty set $\widehat{\PM}_{s,a}(\rho)$ is defined by:
\begin{align*}
    \widehat{\PM}_{s,a}(\rho):=\Bigg\{&P(\cdot|s,a)\in\Delta(\SM)\Bigg{|}D_f(P(\cdot|s,a)\|\widehat{P}(\cdot|s,a))\le\rho\Bigg\},
\end{align*}
and the corresponding empirical robust Bellman operator is defined by:
\begin{align*}
    \widehat{\TM}_{\robc}V (s)&:=\max_{a\in\AM}\Bigg(R(s,a)+\gamma\inf_{P\in\widehat{\PM}_{s,a}(\rho)}\sum_{s'\in \SM}P(s'|s,a)V(s')\Bigg).
\end{align*}

By the dual form~\eqref{eq: cons}, for each $(s,a)$ pair, we can sample $s'\sim P^*(\cdot|s,a)$ to get a stochastic unbiased gradient update the dual variable $\eta$. Once ~\eqref{eq: cons} is solved approximately, then we can obtain the near-optimal robust value function by the Q-learning algorithm. In this way, we can avoid estimating the transition functions and obtain a model-free method.






\section{Alternative Form of Robust MDPs}
\label{sec: prmdp}
However, solving \eqref{eq: cons} by stochastic gradient descent will suffer from unbounded gradient issue. Thus, it is impossible to derive theoretical guarantee for the convergence of stochastic gradient method from the dual form \eqref{eq: cons} \citep{bubeck2015convex}. To overcome this limitation, we propose a novel penalty version of robust value function with robustness parameter $\rho$  replaced by $\lambda$:
\begin{align}
    \label{eq: robpV}
    V^{\pi}_{\robp}(s):=&\inf_{P\in\Delta(\SM)^{|\SM||\AM|}}\EB_{P,\pi}\Big{[}\sum_{t=0}^{\infty}\gamma^t (R(s_t,a_t)+\lambda\gamma D_f(P(\cdot|s_t,a_t)\|P^*(\cdot|s_t,a_t)))\Big{|}s_0=s\Big{]}.
\end{align}
Similarly, we can also define a robust Bellman operator: 
\begin{align}
    \label{eq: pen}
    &\TM_{\robp}V (s):=\max_{a\in\AM}\Bigg{(}R(s,a)+\gamma\inf_{P(\cdot|s,a)\in\Delta(\SM)}
    \Bigg[\sum_{s'\in \SM}P(s'|s,a)V(s')+\lambda D_f(P(\cdot|s,a)\|P^*(\cdot|s,a))\Bigg]\Bigg{)}.
\end{align}
Similar to $\TM_{\robc}$, the dual form of $\TM_{\robp}$ is:
\begin{align*}
    &\TM_{\robp}V (s):=\max_{a\in\AM}\Bigg{(}R(s,a)+\gamma\sup_{\eta\in\RB}\left[-\lambda\sum_{s'\in \SM} P^*(s'|s,a)f^\dagger_{\eta, \lambda, V}(x)+\eta\right]\Bigg{)}.
\end{align*}
For Q-value function, the robust Bellman operator is defined by:
    \begin{align}
        &\TM_{\robp}Q (s,a):=R(s,a)+\gamma\sup_{\eta\in\RB}\left[-\lambda\EB_{s'\sim P^*_{s,a}} f^*\left(\frac{\eta-\max_{a'}Q(s',a')}{\lambda}\right)+\eta\right].
        \label{eq: robpQ}
    \end{align}

In a high-level idea, \eqref{eq: pen} and \eqref{eq: cons} are connected with each other via Lagrange duality. The next proposition shows that the optimal robust value function $\max_{\pi} V_{\robp}^\pi$ is exactly the fixed point of $\TM_{\robp}$, which illustrates the reasonability of the penalized form. We defer the proof to Appendix~\ref{apd: preli}.
\begin{prop}
    \label{prop: pen_def}
    $\TM_{\robp}$ is a $\gamma$-contraction operator on $\VM$. Thus, a fixed point $V^*_{\robp}$ exists, and $V^*_{\robp}=\max_{\pi}V^\pi_{\robp}$.
\end{prop}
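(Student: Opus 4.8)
The plan is to treat the two assertions separately: first that $\TM_{\robp}$ is a $\gamma$-contraction on $\VM$ (so Banach's fixed point theorem gives a unique $V^*_{\robp}$), and then that this fixed point equals $\max_\pi V^\pi_{\robp}$. For the contraction I would fix $(s,a)$ and abbreviate $g_V(s,a) := \inf_{P(\cdot|s,a)\in\Delta(\SM)}[\sum_{s'}P(s'|s,a)V(s') + \lambda D_f(P(\cdot|s,a)\|P^*(\cdot|s,a))]$, so that $\TM_{\robp}V(s) = \max_a(R(s,a)+\gamma g_V(s,a))$. Since the penalty term is independent of $V$, inserting a near-minimizer for $V_2$ into the objective for $V_1$ gives $g_{V_1}(s,a)-g_{V_2}(s,a)\le \sum_{s'}P(s'|s,a)(V_1(s')-V_2(s'))\le\|V_1-V_2\|_\infty$, and by symmetry $|g_{V_1}-g_{V_2}|\le\|V_1-V_2\|_\infty$; nonexpansiveness of $\max_a$ then yields $\|\TM_{\robp}V_1-\TM_{\robp}V_2\|_\infty\le\gamma\|V_1-V_2\|_\infty$. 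I would also check invariance of $\VM$: taking $P=P^*$ (where $D_f(P^*\|P^*)=0$ because $f(1)=0$) bounds $g_V$ above by $1/(1-\gamma)$, while $D_f\ge 0$ by Jensen and $V\ge 0$ bound it below by $0$, so $0\le\TM_{\robp}V(s)\le 1+\gamma/(1-\gamma)=1/(1-\gamma)$. As $\VM$ is closed in $\RB^{|\SM|}$, hence complete, Banach applies.

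For the identification I would introduce the policy-evaluation operator $\TM^\pi_{\robp}V(s):=\sum_a\pi(a|s)(R(s,a)+\gamma g_V(s,a))$, and for a fixed stationary kernel $P$ the operator $\TM^{\pi,P}V(s):=\sum_a\pi(a|s)(R(s,a)+\gamma\sum_{s'}P(s'|s,a)V(s')+\gamma\lambda D_f(P(\cdot|s,a)\|P^*(\cdot|s,a)))$. Both are monotone $\gamma$-contractions by the same estimate as above, so each has a unique fixed point; I write $\tilde V^\pi$ for that of $\TM^\pi_{\robp}$ and observe that the fixed point of $\TM^{\pi,P}$ is exactly $V^{\pi,P}$, the penalized value under the fixed kernel $P$, so that $V^\pi_{\robp}=\inf_P V^{\pi,P}$ by definition. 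The crux is the lemma $\tilde V^\pi=V^\pi_{\robp}$.

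I would prove this lemma in two directions. For ``$\le$'': because $g_V(s,a)$ is an infimum over $P(\cdot|s,a)$, one has $\TM^\pi_{\robp}V\le\TM^{\pi,P}V$ pointwise for every $P$, hence $\tilde V^\pi=\TM^\pi_{\robp}\tilde V^\pi\le\TM^{\pi,P}\tilde V^\pi$; iterating the monotone map $\TM^{\pi,P}$ upward gives $\tilde V^\pi\le V^{\pi,P}$, and taking the infimum over $P$ yields $\tilde V^\pi\le V^\pi_{\robp}$. For ``$\ge$'': by compactness of the simplex and lower semicontinuity of $D_f$ on its effective domain $\{P:P\ll P^*\}$, the per-$(s,a)$ infimum defining $g_{\tilde V^\pi}(s,a)$ is attained, producing a single stationary minimizing kernel $P^\pi$ with $\TM^{\pi,P^\pi}\tilde V^\pi=\TM^\pi_{\robp}\tilde V^\pi=\tilde V^\pi$; uniqueness of fixed points then forces $\tilde V^\pi=V^{\pi,P^\pi}\ge\inf_P V^{\pi,P}=V^\pi_{\robp}$. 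Combining gives $\tilde V^\pi=V^\pi_{\robp}$.

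Finally I would close with the standard greedy step. Maximizing the convex combination $\sum_a\pi(a|s)(\cdot)$ over $\pi(\cdot|s)\in\Delta(\AM)$ concentrates mass on a best action, so $\TM_{\robp}V=\max_\pi\TM^\pi_{\robp}V$ pointwise, attained by a deterministic greedy policy $\pi^*$. Thus $V^*_{\robp}=\TM_{\robp}V^*_{\robp}\ge\TM^\pi_{\robp}V^*_{\robp}$ for every $\pi$, and iterating the monotone contraction $\TM^\pi_{\robp}$ gives $V^*_{\robp}\ge\tilde V^\pi=V^\pi_{\robp}$, whereas $\pi^*$ gives $V^*_{\robp}=\TM^{\pi^*}_{\robp}V^*_{\robp}=\tilde V^{\pi^*}=V^{\pi^*}_{\robp}$; together these give $V^*_{\robp}=\max_\pi V^\pi_{\robp}$. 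The main obstacle is the ``$\ge$'' direction of the lemma: it rests on $(s,a)$-rectangularity, namely that the adversary's penalized problem separates across state-action pairs so that a stationary greedy worst-case kernel $P^\pi$ exists and attains the infimum, which in turn requires the lower semicontinuity of $D_f$ and compactness of the effective domain on the simplex.
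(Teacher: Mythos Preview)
Your proposal is correct and follows essentially the same route as the paper's own proof: both establish the contraction by shifting a (near-)minimizer between the two inner problems, introduce the policy-evaluation operator $\TM^\pi_{\robp}$, prove the key lemma that its fixed point equals $V^\pi_{\robp}$ via the two inequalities (one by iterating a monotone operator, the other by extracting a stationary minimizing kernel $P^\pi$), and conclude with the greedy policy $\pi^*$. Your write-up is in fact more careful than the paper's in a few places---you check invariance of $\VM$, and you name the compactness/lower-semicontinuity ingredients behind the existence of $P^\pi$---but the architecture is the same.
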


Proposition~\ref{prop: pen_def} shows the penalized robust MDPs share the similar basic properties as constraint MDPs do. Subsequently, we provide a stronger connection between these two forms. In Theorem~\ref{thm: rholambda}, we show for each given constraint robust MDP, there exists a penalized robust MDP, whose value functions are exactly the same.
\begin{thm}
    \label{thm: rholambda}
    For a given robust MDP with parameters $\langle\SM,\AM,R,P^*,\gamma\rangle$ and $f(\cdot)$-divergence, for a given constraint parameter $\rho>0$ there exists a penalty parameter $\lambda>0$, such that $V_{\robc}^*(\mu)=V_{\robp}^*(\mu)$, where $\mu\in\Delta(\SM)$ is a given initial distribution. 
    Similarly, for a given penalty parameter $\lambda>0$, there exists a constraint parameter $\rho>0$, such that $V_{\robp}^*(\mu)=V_{\robc}^*(\mu)$.
\end{thm}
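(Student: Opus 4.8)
The plan is to turn the two-sided equivalence into a one-dimensional intermediate-value argument. Fix the MDP and the divergence $f$, and view the optimal robust value at the initial distribution as a scalar function of the robustness parameter: set $F_c(\rho):=V^{*}_{\robc}(\mu)=\sum_{s}\mu(s)V^{*}_{\robc}(s)$ and $F_p(\lambda):=V^{*}_{\robp}(\mu)$. By Proposition~\ref{prop: pen_def} and its constrained counterpart from \citet{iyengar2005robust}, each of $V^{*}_{\robc}$ and $V^{*}_{\robp}$ is the unique fixed point of a $\gamma$-contraction, so each $F$ is a fixed nonnegative-weighted average of that fixed point. It therefore suffices to prove that $F_c$ and $F_p$ are continuous and monotone on $(0,\infty)$ and that their ranges coincide; the intermediate value theorem then produces, for each $\rho>0$, a $\lambda>0$ with $F_p(\lambda)=F_c(\rho)$, and symmetrically in the reverse direction.

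First I would establish monotonicity through the Bellman operators. Enlarging $\rho$ enlarges every set $\PM_{s,a}(\rho)$, so the inner infimum in \eqref{eq: robcP} can only decrease and $\TM_{\robc}V$ decreases pointwise in $\rho$ for each fixed $V$; since $\TM_{\robc}$ is monotone, comparing fixed points via $V^{*}_{\rho_1}=\TM^{(\rho_1)}_{\robc}V^{*}_{\rho_1}\le\TM^{(\rho_2)}_{\robc}V^{*}_{\rho_1}$ and iterating shows $F_c$ is non-increasing in $\rho$. Dually, increasing $\lambda$ increases the penalty in \eqref{eq: pen}, so $\TM_{\robp}V$ increases pointwise in $\lambda$ and $F_p$ is non-decreasing in $\lambda$.

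Next I would prove continuity of the two fixed points and identify the endpoint limits. For fixed $V$, the constrained inner value $\mathcal{R}_c(P^*,V)$ is convex and non-increasing in $\rho$ (it is the optimal value of a convex program as a function of its constraint level), while the penalized inner value $\mathcal{R}_p(P^*,V)$, being an infimum over $P$ of functions that are affine in $\lambda$, is concave and non-decreasing in $\lambda$; convexity (resp.\ concavity) on an open interval forces continuity of each inner value in its parameter. I then lift this to the fixed points by the standard contraction estimate, writing $\theta$ for a generic robustness parameter and $\TM_\theta$ for the corresponding operator family, namely $(1-\gamma)\,\|V^{*}_{\theta}-V^{*}_{\theta'}\|_\infty\le\sup_{s}|\TM_{\theta}V^{*}_{\theta'}(s)-\TM_{\theta'}V^{*}_{\theta'}(s)|$, whose right-hand side is governed by the just-established continuity of the inner value. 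For the limits, as $\rho\to0^+$ the set $\PM_{s,a}(\rho)$ shrinks to $\{P^*(\cdot|s,a)\}$ and as $\lambda\to\infty$ the penalty forces $P=P^*$, so both operators tend to the nominal Bellman operator and $F_c(\rho)\to V^{*}_{P^{*}}(\mu)$, $F_p(\lambda)\to V^{*}_{P^{*}}(\mu)$; conversely, as $\rho\to\infty$ only absolute continuity $P\ll P^*$ survives, and as $\lambda\to0^+$ the penalty vanishes under the same requirement, so both operators tend to the fully adversarial operator and the two values tend to a common limit $V^{*}_{\mathrm{adv}}(\mu)$. Hence $F_c$ and $F_p$ are continuous and monotone with identical endpoint limits, so their ranges coincide and the intermediate value theorem applies in both directions.

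The step I expect to be the main obstacle is controlling the two endpoints of the common range, where the intermediate value theorem needs the target value attained at a \emph{finite}, strictly positive parameter rather than only in the limit. I would resolve this by strict monotonicity away from the degenerate regime: by the envelope theorem $\tfrac{d}{d\lambda}\mathcal{R}_p(P^*,V)=D_f(P^{*}_{\lambda}\|P^*)$, where $P^{*}_{\lambda}$ is the penalized minimizer, so $F_p$ can be locally flat only where the worst-case transition equals $P^*$, i.e.\ only where it has already reached $V^{*}_{P^{*}}(\mu)$; an analogous sensitivity statement in $\rho$ holds for $F_c$. Consequently, for every finite $\rho>0$ the value $F_c(\rho)$ lies strictly between $V^{*}_{\mathrm{adv}}(\mu)$ and $V^{*}_{P^{*}}(\mu)$ --- except in the trivial case where the adversary is powerless (e.g.\ deterministic transitions), in which both limits collapse to a single point and any parameter works --- placing $F_c(\rho)$ in the open interval on which $F_p$ is a continuous bijection and guaranteeing a finite matching $\lambda$. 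A secondary technical point is that the continuity estimate must hold uniformly enough across states to feed the contraction bound, but since $\SM$ and $\AM$ are finite this reduces to finitely many scalar continuity statements and poses no real difficulty.
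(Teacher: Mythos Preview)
Your proposal is correct and follows essentially the same route as the paper: show that $F_c(\rho)$ and $F_p(\lambda)$ are monotone and continuous in their respective parameters, verify that they share the common endpoint limits $V^{*}_{P^{*}}(\mu)$ and the ``fully adversarial'' value, and finish with the intermediate value theorem, with continuity lifted from the inner DRO value to the fixed point through the contraction inequality $(1-\gamma)\|V^{*}_{\theta}-V^{*}_{\theta'}\|_\infty\le\|\TM_{\theta}V^{*}_{\theta'}-\TM_{\theta'}V^{*}_{\theta'}\|_\infty$. The one place you go beyond the paper is your envelope-theorem argument for strict monotonicity to guarantee a \emph{finite} matching parameter; the paper simply records the half-open range $[V_0(\mu),V^{*}_{P^{*}}(\mu))$ for $F_p$ and invokes IVT without making that endpoint discussion explicit, so your treatment there is actually a bit more careful.
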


Besides, in a data-driven scenario, we provide a result showing that robustness parameter $\lambda$ plays a similar role in penalized robust MDPs with robustness parameter $1/\rho$ in constrained robust MDPs in a finite-sample regime in the following theorem.
\begin{thm}[Statistical Equivalence]
    \label{thm: prmdp_stat}
    Suppose we access a generative model and estimate $\widehat{P}(s'|s,a)=\frac{1}{n}\sum_{i=1}^n\1(X_{i}^{(s,a)}=s')$, where $X_{i}^{(s,a)}\sim P^*(\cdot|s,a)$ are independent random variables.  Choosing $f(s)=(s-1)^2$ where $s\ge0$, with probability $1-\delta$, we have:
    \begin{align*}
        \left\|\widehat{V}_{\robp}^*-V_{\robp}^*\right\|_{\infty}\le\widetilde{\OM}\left(\frac{\max\left\{\frac{1}{\lambda(1-\gamma)^2},\lambda\right\}}{(1-\gamma)\sqrt{n}}\right).
    \end{align*}
    Furthermore, there exists a class of penalized robust MDPs with $f(s)=(s-1)^2$, such that for every $(\varepsilon,\delta)$-correct robust RL algorithm, when $\lambda=\OM(1-\gamma)$, the total number of samples needed is at least:
    \[
        \widetilde{\Omega}\left(\frac{|\SM||\AM|\lambda^2}{\varepsilon^2(1-\gamma)^3}\right).
    \]
    Additionally, when $\lambda=\Omega(1-\gamma)$, the total number of samples needed is at least:
    \[
        \widetilde{\Omega}\left(\frac{|\SM||\AM|}{\varepsilon^2(1-\gamma)^3}\min\left\{\frac{1}{16},\frac{\lambda\gamma(1-\gamma)}{2\gamma-1}\right\}\right).
    \]
\end{thm}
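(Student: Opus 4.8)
The plan is to prove Theorem~\ref{thm: prmdp_stat} in three separate pieces: the upper bound on the estimation error, and the two minimax lower bounds. I focus first on the upper bound, since it drives the main technical work.

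\textbf{Upper bound.}
The key idea is to bound the error via the contraction property established in Proposition~\ref{prop: pen_def}. Since both $\widehat{V}_{\robp}^*$ and $V_{\robp}^*$ are fixed points of their respective Bellman operators $\widehat{\TM}_{\robp}$ and $\TM_{\robp}$, I would use the standard fixed-point perturbation bound
\begin{align*}
\left\|\widehat{V}_{\robp}^*-V_{\robp}^*\right\|_{\infty}\le\frac{1}{1-\gamma}\left\|\widehat{\TM}_{\robp}V_{\robp}^*-\TM_{\robp}V_{\robp}^*\right\|_{\infty},
\end{align*}
which reduces everything to controlling, for a \emph{fixed} $V=V_{\robp}^*\in\VM$ and each $(s,a)$, the difference between the two dual objectives evaluated at $\widehat{P}$ and $P^*$. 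Specializing to $f(s)=(s-1)^2$, the conjugate $f^*$ is an explicit quadratic, so $f^\dagger_{\eta,\lambda,V}(s')$ is a smooth function of $(\eta-V(s'))/\lambda$; the inner supremum over $\eta$ has a closed form (this is the $\chi^2$-divergence case, yielding a variance-penalized objective). First I would solve for the optimal $\eta^*$ and plug back in, reducing each per-$(s,a)$ term to an expression involving the empirical mean and empirical variance of $V(X^{(s,a)})$ versus their population counterparts. Then I would apply a concentration bound (Bernstein or Hoeffding plus a variance-concentration lemma) over the $n$ i.i.d.\ samples $X_i^{(s,a)}$, and take a union bound over the $|\SM||\AM|$ pairs, which produces the $\widetilde{\OM}$ (log factors hidden) and the $1/\sqrt{n}$ rate. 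The $\max\{1/(\lambda(1-\gamma)^2),\lambda\}$ factor should emerge from tracking how the sensitivity of the closed-form dual objective to perturbations of $\widehat P$ scales in $\lambda$: small $\lambda$ makes the penalty weak and the optimal $\eta^*$ (hence the objective) highly sensitive, giving the $1/\lambda$ branch, while large $\lambda$ contributes the linear-in-$\lambda$ branch through the divergence term itself.

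\textbf{Lower bounds.}
For the two minimax lower bounds I would use the standard Le Cam / reduction-to-testing machinery for $(\varepsilon,\delta)$-correct algorithms, adapting the hard-instance construction for (non-robust) MDP sample complexity (as in \citet{azar2013minimax}) to the penalized robust setting. The plan is to build a family of penalized robust MDPs indexed by a hidden parameter, where distinguishing two nearby instances to accuracy $\varepsilon$ in the robust value $V_{\robp}^*$ requires separating transition kernels that differ by $O(\varepsilon)$ in the relevant direction; a KL-divergence / information-theoretic argument (Fano or pairwise Le Cam) then lower-bounds the number of samples per state-action pair, and multiplying by $|\SM||\AM|$ gives the stated $|\SM||\AM|/(\varepsilon^2(1-\gamma)^3)$ skeleton. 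The two regimes, $\lambda=\OM(1-\gamma)$ and $\lambda=\Omega(1-\gamma)$, correspond to whether the penalty is weak enough that the robust value behaves essentially like the nominal value (giving the extra $\lambda^2$ factor from the weakened signal) or strong enough that the penalty term dominates and caps the achievable separation (giving the $\min\{1/16,\lambda\gamma(1-\gamma)/(2\gamma-1)\}$ factor). The crux is calibrating the hard instance so that the dependence of $V_{\robp}^*$ on the hidden parameter scales correctly in both $\lambda$ and $(1-\gamma)$; I would compute $V_{\robp}^*$ on the construction explicitly using the closed-form $\chi^2$ dual to read off the sensitivity.

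\textbf{Main obstacle.}
The hardest part is the upper bound, specifically getting the $\lambda$-dependence in $\max\{1/(\lambda(1-\gamma)^2),\lambda\}$ sharp rather than loose. This requires carefully tracking the perturbation sensitivity of the optimal dual variable $\eta^*$ and the penalized objective jointly through the closed-form solution, and controlling the empirical-variance concentration with the right $\lambda$-scaling, rather than naively bounding gradients. A secondary difficulty on the lower-bound side is ensuring the hard instances remain valid penalized robust MDPs (transition kernels staying in the simplex and the penalty interacting correctly) while still exhibiting the claimed $\lambda$-dependent separation in both regimes.
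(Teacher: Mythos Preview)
Your upper-bound plan is correct in its skeleton (contraction to reduce to a one-step operator deviation, then concentration), but it diverges from the paper in how that deviation is controlled, and your route is the harder one. The paper does \emph{not} solve for the optimal $\eta^*$ and compare closed-form values. Instead it bounds
\[
\Bigl|\sup_{\eta}g(\eta,\widehat P_{s,a})-\sup_{\eta}g(\eta,P^*_{s,a})\Bigr|\le\sup_{\eta\in\Theta}\bigl|g(\eta,\widehat P_{s,a})-g(\eta,P^*_{s,a})\bigr|,
\]
where $g(\eta,P)=-\frac{1}{4\lambda}\EB_{s'\sim P}(\eta-V(s'))_+^2-\lambda+\eta$ and $\Theta=[-\lambda,\,2(1-\gamma)^{-1}+2\lambda]$ is the compact range of the optimal dual variable, established separately (Lemma~\ref{lem: chi2_range}). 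For each fixed $\eta$ this is an i.i.d.\ average of the bounded random variable $Y_k=-\frac{1}{4\lambda}(\eta-V(X_k))_+^2$, so Hoeffding applies with range $|Y_k|\le\frac{(2(1-\gamma)^{-1}+4\lambda)^2}{4\lambda}\le 16\max\{\tfrac{1}{\lambda(1-\gamma)^2},\lambda\}$; an $\varepsilon$-net over the one-dimensional $\Theta$ (using Lipschitzness of $g$ in $\eta$) makes the bound uniform, and a union bound over $(s,a)$ finishes. The $\max$ factor thus falls out of a single range calculation rather than a sensitivity analysis of $\eta^*$. Your plan to plug in $\eta^*$ and reduce to empirical mean and variance is feasible in principle, but the $\chi^2$ ``closed form'' is only the clean mean-minus-root-variance expression when the positive part $(\eta-V(s'))_+$ is inactive for every $s'$ in the support; in general you face a case split and a perturbation argument for $\eta^*(\widehat P)$ versus $\eta^*(P^*)$, for no gain in the final rate.

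Your lower-bound plan matches the paper's approach closely: a two-state, one-action hard instance with $P(s_0\mid s_0)=p$ whose robust value $V(s_0)$ is computed exactly from the penalized Bellman equation, splitting into two cases according to whether the inner minimizer hits the boundary $q^*=0$ (this is precisely your small-$\lambda$ versus large-$\lambda$ dichotomy), then Le Cam via the testing lemma of \citet{azar2013minimax} at $p=2-1/\gamma$, and finally replication to $|\SM||\AM|$ copies.
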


In \citet{yang2021towards}, the upper bound of constrained robust MDPs is $\widetilde{\OM}\left(\frac{|\SM||\AM|(1+\rho)^3}{\rho^2\varepsilon^2(1-\gamma)^4}\right)$\footnote{Here we reduce the $|\SM|^2$ to $|\SM|$ because we consider the deviation of value functions instead of $\varepsilon$-optimal policy, where we do not need a uniform bound over policy class and value function class as \cite{yang2021towards} did.} with $f(s)=(s-1)^2$, and the lower bound is $\widetilde{\Omega}\left(\frac{|\SM||\AM|}{\varepsilon^2(1-\gamma)^2}\min\left\{\frac{1}{1-\gamma},\frac{1}{\rho}\right\}\right)$. According to results of Theorem~\ref{thm: prmdp_stat}, the coefficient $\lambda$ plays a similar role as $1/\rho$ does in constrained robust MDPs. When $\lambda$ is small, we expect a robust solution, which leads to small sample complexity but conservative policy. When $\lambda$ is large, we expect a non-robust solution, which means the sample complexity should be approximately equal with sample complexity of non-robust MDPs \citep{azar2013minimax}.

With all the background presented, we are ready to design a model-free algorithm by combing stochastic gradient method and Q-learning algorithm with sample efficiency guarantees. Prior to introducing our results, we simplify the notation and denote:
\[
    J^{(s,a)}(\eta, V):=-\lambda\sum_{s'\in \SM}P^*(s'|s,a)f^\dagger_{\eta, \lambda, V}(s')+\eta.
\]
Additionally, the data is obtained in an online approach with a generative model, which means at each time step $t$, we have an observation $s'_t(s,a)\sim P^*(\cdot|s,a)$ and $r_t(s,a)$ for each $(s,a)$ pair satisfying $\EB[ r_t(s_t,a_t)|s_t,a_t]=R(s_t,a_t)$. We denote
\[
    J^{(s,a)}_t(\eta,V;s_t'(s,a)):=-\lambda f^\dagger_{\eta, \lambda, V}(s_t'(s,a))+\eta,
\]
where $\EB[J^{(s,a)}_t(\eta,V;s_t'(s,a))]=J^{(s,a)}(\eta,V)$.


\section{Results with a Generative Model}
\label{sec: generative}
In the traditional Q-learning algorithm with a generative model oracle, at the timestep $t$, for each $(s,a)\in\SM\times\AM$, the update rule is:
\begin{align*}
    &Q_{t+1}(s,a)=(1-\beta_t)Q_t(s,a)+\beta_t\widehat{\TM} Q_t(s,a),\\
    &\widehat{\TM}Q_t(s,a):=r_t(s,a)+\gamma\max_{a'\in\AM}Q_t(s_t'(s,a),a'),
\end{align*}
where $s_t'(s,a)\sim P^*(\cdot|s,a)$ and $\EB r_t(s,a)=R(s,a)$. \cite{wainwright2019stochastic} provided a $\OM\left(T^{-\frac{1}{2}}\right)$ convergence rate when $\beta_t=\frac{1}{1+(1-\gamma)t}$. In their analysis, a key point is that $\widehat{\TM}Q_t$ is unbiased condition on $Q_t$. Analogously, in robust MDPs scenario, we can also learn optimal $Q^*_{\robp}$ by:
\[
    Q_{t+1}(s,a)=(1-\beta_t)Q_t(s,a)+\beta_t\widehat{\TM}_{\robp} Q_t(s,a),
\]
as long as we can obtain a ``good'' estimator $\widehat{\TM}_{\robp} Q_t(s,a)$, which is approximately unbiased ($\EB\widehat{\TM}_{\robp} Q_t(s,a)\approx\TM_{\robp}Q_t(s,a)$). Given the expression of $\widehat{\TM}_{\robp}Q$ in~\eqref{eq: robpQ}, we notice that stochastic gradient method can be applied to achieve this goal. In the following part, we investigate the error between $\EB\widehat{\TM}_{\robp} Q_t(s,a)$ and $\TM_{\robp}Q_t(s,a)$.

\subsection{Estimating $\TM_{\robp}Q$}
\label{sec: bandit}

As $\TM_{\robp}Q(s,a)=R(s,a)+\gamma \sup_{\eta}J^{(s,a)}(\eta;V)$, where $V(s):=\max_{a\in\AM}Q(s,a)$, we only need to study how to estimate $\sup_{\eta}J^{(s,a)}(\eta;V)$. The objective can be written by:
\begin{align*}
    &J^{(s,a)}(\eta, V)=-\lambda \sum_{s'\in\SM} P^*(s'|s,a)f^\dagger_{\eta, \lambda, V}(s')+\eta=\sum_{s'\in\SM}P^*(s'|s,a)J(\eta,V;s'), \\
    &J(\eta,V;s')=-\lambda f^\dagger_{\eta, \lambda, V}(s')+\eta.
\end{align*}
Next, we consider an online i.i.d.\ data stream $\{s'_t(s,a)\}_{t=0}^{T-1}$, where $s'_t(s,a)\sim P^*(\cdot|s,a)$. Then, we can apply  Stochastic Gradient Ascent (SGA) algorithm to approximate $\sup_{\eta}J^{(s,a)}(\eta,V)$:
\begin{align}
    \label{eq: sga}
    \eta_{t+1}(s,a)&=\eta_t(s,a)+\alpha_t\cdot\frac{\partial J(\eta_t(s,a),V;s_t'(s,a))}{\partial\eta},
\end{align}
where $\alpha_t$ is the learning rate, and $\frac{\partial J(\eta_t(s,a),V;s_t'(s,a))}{\partial\eta}$ is an unbiased estimator of $\frac{\partial J^{(s,a)}(\eta_t(s,a),V)}{\partial\eta}$. Noting that $J^{(s,a)}(\eta,V)$ must be concave w.r.t. $\eta$ as it is the dual form of problem~\eqref{eq: pdrop} \citep{boyd2004convex}, the convergence of SGA algorithm can be guaranteed. To specify the convergence rate, 
we make two basic assumptions for the objective $J^{(s,a)}(\eta, V)$. 
\begin{asmp}
    \label{asmp: diam}
    For any $V\in\left[0,(1-\gamma)^{-1}\right]^{|\SM|}$ and $(s,a)\in\SM\times\AM$, the optimal point $\eta^*(s,a)=\argmax_{\eta\in\RB}J^{(s,a)}(\eta, V)$ is finite. We can restrict the range of $\eta$ in $\Theta\subseteq\RB$, whose diameter is finite  (denoted  $\diam(\Theta)$) and is independent of $P^*$.
\end{asmp}
\begin{asmp}
    \label{asmp: smooth}
    $J^{(s,a)}(\eta,V)$ is $\frac{1}{\lambda\sigma}$-smooth w.r.t. $\eta\in\Theta$.
\end{asmp}
In Assumption~\ref{asmp: diam}, we assume a finite region of dual variables to exclude some extreme cases. In Assumption~\ref{asmp: smooth}, we assume the smoothness of $J^{(s,a)}(\eta,V)$. Indeed, by \cite{zhou2018fenchel}, if $f(\cdot)$ is a $\sigma$-strongly convex function, it comes $f^*(\cdot)$ is $1/\sigma$-smooth and $J^{(s,a)}(\eta,V)$ is $1/\sigma\lambda$-smooth. However, $\sigma$-strongly convexity of $f(\cdot)$ on $\RB$ may fail for some function $f(\cdot)$, such as Cressie-Read family of $f$-divergences \citep{cressie1984multinomial}. But with a given closed set $\Theta$, the smoothness of $J^{(s,a)}(\eta,V)$ can be guaranteed while the smoothness parameter may be dependent with the diameter of $\Theta$. In this scenario, on a finite region $\Theta$, the stochastic gradient can also be bounded (Lemma~\ref{lem: bound_grad}). Therefore, we can finally specify the convergence rate in Theorem~\ref{thm: bandit_converge}. The proofs are deferred to the Appendix~\ref{apd: bandit}.


\begin{lem}
    \label{lem: bound_grad}
    If Assumptions~\ref{asmp: diam} and \ref{asmp: smooth} hold, for any $\eta\in\Theta$ and $(s,a)\in\SM\times\AM$, $s'(s,a)\sim P^*(\cdot|s,a)$, then we have:
    \[
        \left|\frac{\partial J(\eta,V;s'(s,a))}{\partial \eta}\right|\le\frac{\diam(\Theta)+(1-\gamma)^{-1}}{\lambda \sigma}:=C_{g}.
    \]
\end{lem}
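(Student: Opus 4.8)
The plan is to differentiate the single-sample objective in closed form and then control the gradient by comparing it to the \emph{expected} gradient at the population optimum $\eta^{*}$. Since $J(\eta,V;s') = -\lambda f^{*}\!\left(\tfrac{\eta-V(s')}{\lambda}\right)+\eta$, the chain rule gives the per-sample gradient
\[
    g_{s'}(\eta):=\frac{\partial J(\eta,V;s')}{\partial\eta}=1-(f^{*})'\!\left(\frac{\eta-V(s')}{\lambda}\right),
\]
and its expectation is $G(\eta):=\partial_\eta J^{(s,a)}(\eta,V)=1-\sum_{s''}P^{*}(s''|s,a)\,(f^{*})'\!\left(\tfrac{\eta-V(s'')}{\lambda}\right)$. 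By Assumption~\ref{asmp: diam} the maximizer $\eta^{*}(s,a)=\argmax_{\eta}J^{(s,a)}(\eta,V)$ is finite, and since $J^{(s,a)}$ is concave and differentiable (it is the dual of~\eqref{eq: pdrop}), the first-order condition yields $G(\eta^{*})=0$. This is the anchor I would use.

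I would then split the per-sample gradient into a ``Lipschitz'' part and a ``centering'' part, using $G(\eta^{*})=0$:
\[
    g_{s'}(\eta)=\underbrace{\big(g_{s'}(\eta)-g_{s'}(\eta^{*})\big)}_{\text{(I)}}+\underbrace{\big(g_{s'}(\eta^{*})-G(\eta^{*})\big)}_{\text{(II)}}.
\]
For term (I): under Assumption~\ref{asmp: smooth} (equivalently, $\sigma$-strong convexity of $f$ making $(f^{*})'$ be $\tfrac1\sigma$-Lipschitz on the relevant range), the map $\eta\mapsto J(\eta,V;s')$ is $\tfrac{1}{\lambda\sigma}$-smooth, so $g_{s'}$ is $\tfrac{1}{\lambda\sigma}$-Lipschitz. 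Because both $\eta$ and $\eta^{*}$ lie in $\Theta$ (Assumption~\ref{asmp: diam}), this gives $|\text{(I)}|\le\tfrac{1}{\lambda\sigma}|\eta-\eta^{*}|\le\tfrac{\diam(\Theta)}{\lambda\sigma}$, which accounts for the $\diam(\Theta)$ contribution.

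For term (II), which is the centered single-sample gradient, the two ``$1$''s cancel, leaving
\[
    g_{s'}(\eta^{*})-G(\eta^{*})=\sum_{s''}P^{*}(s''|s,a)\left[(f^{*})'\!\left(\frac{\eta^{*}-V(s'')}{\lambda}\right)-(f^{*})'\!\left(\frac{\eta^{*}-V(s')}{\lambda}\right)\right],
\]
a convex combination of differences of $(f^{*})'$ evaluated at arguments that differ only through $V$. Applying the $\tfrac1\sigma$-Lipschitz bound termwise gives each difference at most $\tfrac{1}{\sigma}\cdot\tfrac{|V(s'')-V(s')|}{\lambda}\le\tfrac{1}{\lambda\sigma(1-\gamma)}$, since $V\in[0,(1-\gamma)^{-1}]^{|\SM|}$ has range at most $(1-\gamma)^{-1}$; hence $|\text{(II)}|\le\tfrac{(1-\gamma)^{-1}}{\lambda\sigma}$. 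Adding the two bounds yields $|g_{s'}(\eta)|\le\tfrac{\diam(\Theta)+(1-\gamma)^{-1}}{\lambda\sigma}=C_{g}$, as claimed.

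The main obstacle is the justification of the $\tfrac{1}{\sigma}$-Lipschitz constant of $(f^{*})'$ at the \emph{per-sample} level: Assumption~\ref{asmp: smooth} is stated for the averaged objective $J^{(s,a)}$, so I would make explicit that the smoothness in fact comes from $f^{*}$ being $\tfrac1\sigma$-smooth on the compact set of arguments $\{(\eta-V(s'))/\lambda:\eta\in\Theta,\,s'\in\SM\}$, which applies uniformly across samples. A secondary but important point is that this decomposition deliberately does \emph{not} assume the single-sample gradient vanishes at $\eta=V(s')$ (it vanishes only at $\eta=V(s')+\lambda f'(1)$, and $f'(1)\neq0$ for divergences such as KL), so anchoring at the population optimum $\eta^{*}$ rather than at $V(s')$ is what makes the argument valid for general $f$ satisfying Assumption~\ref{asmp: f}, not only the $\chi^{2}$ case $f(s)=(s-1)^2$.
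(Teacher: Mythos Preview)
Your proof is correct and essentially the same as the paper's. Both anchor at the population optimum via the first-order condition $\EB_{s'}(f^{*})'\!\big((\eta^{*}-V(s'))/\lambda\big)=1$, then use the $\tfrac{1}{\sigma}$-Lipschitz property of $(f^{*})'$ and the bounds $|\eta-\eta^{*}|\le\diam(\Theta)$, $|V(s')-V(s'')|\le(1-\gamma)^{-1}$; the paper just collapses your terms (I) and (II) into a single triangle inequality on the argument difference $(\eta-V(s'))-(\eta^{*}-V(s''))$. Your observation that the argument really needs per-sample smoothness of $f^{*}$ (not merely the averaged smoothness stated in Assumption~\ref{asmp: smooth}) is well taken---the paper's proof silently uses this as well.
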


\begin{thm}[Convergence guarantee]
    \label{thm: bandit_converge}
    If Assumptions~\ref{asmp: diam} and \ref{asmp: smooth} hold,  the i.i.d.\ data stream $\{s'_t(s,a)\}_{t=0}^{T-1}$ is generated from $P^*$, and the learning rate satisfies $\alpha_t=\frac{\diam(\Theta)}{C_g\sqrt{t}}$, then for any given $V\in\left[0,(1-\gamma)^{-1}\right]^{|\SM|}$, the convergence rate of the SGA algorithm in Eqn. \eqref{eq: sga} is:
    \[
        \EB\left[\max_{(s,a)\in\SM\times\AM}\left(\sup_{\eta}J^{(s,a)}(\eta,V)-J^{(s,a)}(\eta_T(s,a),V)\right)\right]\le\frac{\diam(\Theta)C_g(2+\ln T)(4\sqrt{2\ln|\SM||\AM|}+1)}{\sqrt{T}}.
    \]
\end{thm}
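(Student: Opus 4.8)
The plan is to reduce the uniform-over-$(s,a)$ guarantee to a single-pair last-iterate bound for concave stochastic gradient ascent, and then pay a $\sqrt{\ln|\SM||\AM|}$ price to turn per-pair control into a bound on the expected maximum. Throughout I would fix $V$, write $g_t=\partial_\eta J(\eta_t(s,a),V;s_t'(s,a))$ for the stochastic gradient and $\xi_t=g_t-\partial_\eta J^{(s,a)}(\eta_t(s,a),V)$ for its martingale-difference noise, so that $\EB[\xi_t\mid\mathcal{F}_{t-1}]=0$ and, by Lemma~\ref{lem: bound_grad}, $|g_t|\le C_g$ and $|\xi_t|\le 2C_g$. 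Let $\eta^*$ be the (finite, by Assumption~\ref{asmp: diam}) maximizer, so $J^{(s,a)}(\eta^*)=\sup_\eta J^{(s,a)}(\eta,V)$.

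First I would carry out the standard telescoping for a single pair. Since the iterates stay in $\Theta$ (Assumption~\ref{asmp: diam}), writing $D_t=\tfrac12(\eta_t-\eta^*)^2$ and using non-expansiveness of the projection onto $\Theta$ gives $-g_t(\eta_t-\eta^*)\le \frac{D_t-D_{t+1}}{\alpha_t}+\tfrac12\alpha_t g_t^2$; combining with concavity of $J^{(s,a)}$ (it is the dual of~\eqref{eq: pdrop}) and the choice $\alpha_t=\diam(\Theta)/(C_g\sqrt t)$, Abel summation of the telescoping term yields
\[
  \sum_{t=1}^{T}\bigl(J^{(s,a)}(\eta^*)-J^{(s,a)}(\eta_t)\bigr)\le \tfrac{\diam(\Theta)^2}{2\alpha_T}+\tfrac{C_g^2}{2}\sum_{t}\alpha_t+\sum_{t}\xi_t(\eta_t-\eta^*),
\]
whose first two (deterministic) terms are $O(\diam(\Theta)C_g\sqrt T)$ and whose last term is a bounded-increment martingale. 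This controls the \emph{average} iterate, but the statement concerns the \emph{last} iterate $\eta_T(s,a)$, which is precisely where the $(2+\ln T)$ factor must originate: I would invoke the suffix-averaging argument of Shamir--Zhang, bounding $J^{(s,a)}(\eta^*)-J^{(s,a)}(\eta_T)$ by a telescoping sum of suffix-averaged regrets whose weights accumulate $\sum_{k=1}^{T}\tfrac1k\asymp\ln T$. The outcome is a per-pair decomposition $\delta_T^{(s,a)}\le B_T+Z^{(s,a)}$, where $B_T=\frac{\diam(\Theta)C_g(2+\ln T)}{\sqrt T}$ is deterministic and $Z^{(s,a)}$ is a centered combination of the martingale terms carried through the suffix telescoping.

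The second stage handles the maximum over the $|\SM||\AM|$ runs. Each $Z^{(s,a)}$ is a weighted sum of martingale differences with increments bounded by a multiple of $\diam(\Theta)C_g/\sqrt T$, so by Azuma--Hoeffding it is sub-Gaussian with variance proxy $\nu^2\lesssim\bigl(\diam(\Theta)C_g(2+\ln T)/\sqrt T\bigr)^2$. I would then apply the maximal inequality $\EB[\max_i Z_i]\le\sqrt{2\nu^2\ln N}$ for $N=|\SM||\AM|$ (which needs only marginal sub-Gaussianity, via $\EB[\max_i Z_i]\le s^{-1}\ln\sum_i\EB[e^{sZ_i}]$), giving $\EB[\max_{(s,a)}Z^{(s,a)}]\le 4\sqrt{2\ln|\SM||\AM|}\cdot\frac{\diam(\Theta)C_g(2+\ln T)}{\sqrt T}$. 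Adding the deterministic $B_T$ (the ``$+1$'' term) produces the stated bound $\frac{\diam(\Theta)C_g(2+\ln T)(4\sqrt{2\ln|\SM||\AM|}+1)}{\sqrt T}$.

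The main obstacle I anticipate is the last-iterate/concentration interface: forcing the last iterate (rather than the average) to converge requires the Shamir--Zhang suffix telescoping, and one must track the noise terms through that telescoping carefully so that (i) the deterministic part retains the clean constant $\diam(\Theta)C_g$ and (ii) the accumulated martingale still has sub-Gaussian parameter of order $\diam(\Theta)C_g(2+\ln T)/\sqrt T$ — matching the exact constant $4\sqrt2$ demands controlling the increment bounds and the $\sum 1/k$ weights simultaneously. Smoothness (Assumption~\ref{asmp: smooth}) offers an alternative route, converting $(\eta_T-\eta^*)^2$ into the suboptimality gap via the quadratic upper bound, but it does not remove this bookkeeping difficulty.
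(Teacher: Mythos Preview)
Your approach is essentially identical to the paper's: it too applies the single-step telescoping inequality (Lemma~\ref{lem: error_extra}), then the Shamir--Zhang suffix-averaging device (stated as Lemma~\ref{lem: bound_final}) to pass from the average iterate to $\eta_T$, splits the resulting bound into a deterministic piece and martingale noise, controls the martingale via Azuma--Hoeffding as sub-Gaussian, and finishes with the maximal inequality $\EB[\max_i X_i]\le\sqrt{2\sigma^2\ln N}$ (Lemma~\ref{lem: max_gaussian}) to absorb the $\max_{(s,a)}$. The paper keeps the two martingale pieces coming from $\Delta_1$ and $\Delta_2$ separate when computing the sub-Gaussian parameter, and works with $\alpha_T(J(\eta^*)-J(\eta_T))$ throughout before dividing by $\alpha_T$ at the very end, but the structure and the bookkeeping obstacles you anticipate are exactly those in the paper's argument.
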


For example, we apply Theorem~\ref{thm: bandit_converge} to a specific case where $f(t)=(t-1)^2$. Similar with Lemma~\ref{lem: chi2_range} in Appendix~\ref{apd: preli}, we can show $\Theta=[-2\lambda,2(1-\gamma)^{-1}+2\lambda]$, which satisfies Assumption~\ref{asmp: diam}. Then, the convergence rate becomes $\sup_{\eta}J(\eta,V)-\EB [J(\eta_T,V)]\le\frac{(3(1-\gamma)^{-1}+4\lambda)^2(2+\log T)}{\lambda\sigma\sqrt{T}}$.

\subsection{Learning $Q^*_{\robp}$}
\label{sec: onlinermdp}
In this section, we combine the gradient method in Section~\ref{sec: bandit} with Q-learning algorithm to learn the optimal robust Q-value function $Q_{\robp}^*$, where we run multiple gradient steps for dual variables $\eta$ between each Q-learning step in Algorithm~\ref{alg: onlinermdp_modification}. The high-level idea is if the number of multiple gradient steps are enough, then $\eta_{t,T'}(s,a)\approx\argmax_{\eta}J^{(s,a)}(\eta,V_t)$, which leads to $\EB_{s_t'} [J(\eta_{t,T'}(s,a), V_t;s_t'(s,a))]\approx \sup_{\eta}J^{(s,a)}(\eta,V_t)$. In Algorithm~\ref{alg: onlinermdp_modification}, $\Pi_{\Theta}$ is the projection onto $\Theta$ in the Euclidean norm. Moreover, we also need to make sure the range of $Q_{t}$ remains unchanged during the training process or it will blow up. Thus, we assume the range of $|J(\eta,V;s')|$ is bounded by a constant $C_M$ in Assumption~\ref{asmp: bound_f}. Then $|Q_{t}(s,a)|$ is also bounded by $C_M$ as $C_M\ge(1-\gamma)^{-1}$.

\begin{asmp}
    \label{asmp: bound_f}
    For any $\eta\in\Theta$, $V\in[0,(1-\gamma)^{-1}]$, and $s'\sim P^*(\cdot|s,a)$, we have $|J(\eta, V;s')|\le C_M$, where $C_M\ge(1-\gamma)^{-1}$.
\end{asmp}

\begin{algorithm}[htbp!]
    \caption{Model-free approach to robust MDPs }
    \label{alg: onlinermdp_modification}
    \begin{algorithmic}
        \STATE{\bfseries Input:} $Q_0(s,a)=(1-\gamma)^{-1}$ for all $(s,a)\in\SM\times\AM$.
        \FOR{iteration $t=0$ {\bfseries to} $T-1$}
            \STATE $V_t=\Pi_{[0,(1-\gamma)^{-1}]}\left(\max_a Q_t(\cdot,a)\right)$;
            \FOR{each state-action pair $(s,a)\in\SM\times\AM$}
                \STATE Set $\eta_{t,0}(s,a)=0$.
                \FOR{iteration $t'=0$ {\bfseries to} $T'-1$}
                \STATE Receive next state $s'_{t,t'}(s,a)\sim P^*(\cdot|s,a)$.
                \STATE $\eta_{t,t'+1}(s,a)=\Pi_{\Theta}\left(\eta_{t,t'}(s,a)+\alpha_{t'}\frac{\partial J(\eta_{t,t'}(s,a), V_{t};s'_{t,t'}(s,a))}{\partial\eta}\right)$;
                \ENDFOR
            \STATE Receive reward $r_t(s,a)$ and next state $s'_t(s,a)$;
            \STATE $Q_{t+1}(s,a)=(1-\beta_t)Q_t(s,a)+\beta_t(r_t+\gamma J(\eta_{t,T'}(s,a), V_t;s_t'(s,a)))$;
            \ENDFOR
        \ENDFOR
    \end{algorithmic}
\end{algorithm}

Below we give a proof sketch to the convergence guarantee for Algorithm~\ref{alg: onlinermdp_modification}. The detailed proofs of the lemmas and theorems in this section are deferred to Appendix~\ref{apd: onlinermdp}. To ease the notation, in Algorithm~\ref{alg: onlinermdp_modification}, 
we recursively define two sequences $(\FM_t)_{t=-1}^T$ and $(\GM_t)_{t=-1}^T$  by $(t\ge0)$:
\begin{align*}
    &\GM_{t-1} = \sigma\left(\FM_{t-1}\cup\sigma\left(\{s_{t,t'}'\}_{t'=0}^{T'-1}\right)\right), \notag \\
    &\FM_{t} = \sigma\left(\GM_{t-1}\cup \sigma\left(\{r_t, s_t'\}\right)\right),
\end{align*}
where $\FM_{-1}:=\sigma(\{\varnothing\})$.

\paragraph{Error Decomposition.} For each $(s,a)\in\SM\times\AM$, at iteration $t+1$, we have
\begin{align*}
    Q_{t+1}(s,a)-Q^*(s,a)  = & (1-\beta_t)(Q_{t}(s,a)-Q^*(s,a)) \notag\\
    &+\beta_t(r_t(s,a)-\EB [r_t(s,a)]+\gamma(\widehat{J}_t(s,a)-J^*(s,a))),
\end{align*}
where $\widehat{J}_t(s,a):=J(\eta_{t,T'}(s,a),V_t;s_t'(s,a))$ and $J^*(s,a)=\sup_{\eta}J^{(s,a)}(\eta;V^*_{\robp})$. We also construct auxiliary terms $\tilde{J}_t(s,a):=J^{(s,a)}(\eta_{t,T'}(s,a), V_t)$ and $\bar{J}_t(s,a):=\max_{\eta}J^{(s,a)}(\eta, V_t)$. We can decompose $\widehat{J}_t(s,a)-J^*(s,a)$ into three terms:
\[
    \widehat{J}_t(s,a)-J^*(s,a) := I_{t,1}(s,a)+I_{t,2}(s,a)+I_{t,3}(s,a),
\]
where
\begin{align*}
    I_{t,1}&:=\widehat{J}_t(s,a)-\tilde{J}_t(s,a),\\
    I_{t,2}&:=\tilde{J}_t(s,a)-\bar{J}_t(s,a),\\
    I_{t,3}&:=\bar{J}_t(s,a)-J^*(s,a).
\end{align*}
For $I_{t,1}(s,a)$, we observe that its mean is zero under event $\GM_{t-1}$, which means $\EB [I_{t,1}|\GM_{t-1}]=0$. For $I_{t,2}(s,a)$, it is controlled by optimization error in Theorem~\ref{thm: bandit_converge}, where we can determine $T'$ such that $\EB \|I_{t,2}\|_{\infty}\le\varepsilon_{\text{opt}}$. For $I_{t,3}(s,a)$, we find $|I_{t,3}(s,a)|\le\|V_t-V^*\|_\infty$ by primal objective~\eqref{eq: pen}. 
Denoting $\Delta_{t}(s,a)=Q_{t}(s,a)-Q^*(s,a)$ and $\varepsilon_{r,t}(s,a)=r_t(s,a)-\EB r_t(s,a)$, we have:
\begin{align*}
    \Delta_{t+1}(s,a)\le &(1-\beta_t)\Delta_t(s,a)+\beta_t (\varepsilon_{r,t}(s,a)+\gamma I_{t,1}(s,a)+\gamma I_{t,2}(s,a)+\gamma I_{t,3}(s,a))\notag \\
    \le&(1-\beta_t)\Delta_t(s,a)+\beta_t (\varepsilon_{r,t}(s,a)+\gamma I_{t,1}(s,a)+\gamma\|I_{t,2}\|_{\infty}\1+\gamma\|\Delta_t\|_{\infty}\1).
\end{align*}
Reversely, we also have:
\begin{align*}
    \Delta_{t+1}(s,a)\ge &(1-\beta_t)\Delta_t(s,a)+\beta_t(\varepsilon_{r,t}(s,a)+\gamma I_{t,1}(s,a)-\gamma \|I_{t,2}\|_{\infty}\1-\gamma\|\Delta_t\|_{\infty}\1).
\end{align*}
Then we construct auxiliary sequences:
\begin{align*}
    a_{t+1} &= (1-\beta_t(1-\gamma))a_t,\\
    b_{t+1} &= (1-\beta_t(1-\gamma))b_t+\gamma\beta_t\|N_t\|_{\infty},\\
    c_{t+1} &= (1-\beta_t(1-\gamma))c_t+\gamma\beta_t\|I_{t,2}\|_{\infty},\\
    N_{t+1}(s,a) &= (1-\beta_t)N_{t}(s,a)+\beta_t(\varepsilon_{r,t}(s,a)+\gamma I_{t,1}(s,a)),
\end{align*}
where $a_0=\|\Delta_0\|_{\infty}$, $b_0=c_0=0$ and $N_0=\0$. It can be verified:
\[
    -(a_t+b_t+c_t)\1+N_t\le\Delta_{t}\le (a_t+b_t+c_t)\1 +N_t.
\]

\paragraph{Concentration on $N_t$.} Noting that $\varepsilon_{r,t}(s,a)$ and $I_{t,1}(s,a)$ are bounded mean zero random variables, we can construct a Hoeffding bound for $N_t$. 

\begin{lem}
    \label{lem: expect_N}
    If $(1-\beta_t)\beta_{t-1}\le\beta_t$ and Assumption~\ref{asmp: bound_f} holds, then the expectation of $\|N_t\|_{\infty}$ satisfies:
    \[
        \EB[\|N_t\|_{\infty}]\le2\sqrt{2\beta_{t-1}(1+\gamma C_M)^2\ln(2|\SM||\AM|)}.
    \]
\end{lem}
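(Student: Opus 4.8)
The plan is to unroll the linear recursion for $N_t$ into an explicit weighted sum of martingale differences, control the sum of squared weights using the step-size hypothesis $(1-\beta_t)\beta_{t-1}\le\beta_t$, and then invoke a sub-Gaussian maximal inequality over the $|\SM||\AM|$ coordinates. First I would set $\xi_k(s,a):=\varepsilon_{r,k}(s,a)+\gamma I_{k,1}(s,a)$ and unroll $N_{t+1}=(1-\beta_t)N_t+\beta_t\xi_t$ from $N_0=\0$, obtaining
\[
    N_t(s,a)=\sum_{k=0}^{t-1}w_k^{(t)}\,\xi_k(s,a),\qquad w_k^{(t)}:=\beta_k\prod_{j=k+1}^{t-1}(1-\beta_j).
\]
Next I would verify the martingale-difference structure with respect to $(\FM_t)$: one has $\EB[\xi_k\mid\FM_{k-1}]=0$ because $\EB[\varepsilon_{r,k}\mid\GM_{k-1}]=0$ by definition of the reward noise, $\EB[I_{k,1}\mid\GM_{k-1}]=0$ as already recorded in the error decomposition (there $\eta_{k,T'}$ and $V_k$ are $\GM_{k-1}$-measurable while the Q-learning sample $s_k'$ is averaged out), and $\FM_{k-1}\subseteq\GM_{k-1}$.

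Second, I would bound each increment. Since $r_k\in[0,1]$ gives $|\varepsilon_{r,k}|\le1$, and Assumption~\ref{asmp: bound_f} gives $|\widehat J_k|,|\tilde J_k|\le C_M$ so that $|I_{k,1}|\le 2C_M$, the increment obeys $|\xi_k(s,a)|\le 1+2\gamma C_M\le 2(1+\gamma C_M)$. By Hoeffding's lemma each $\xi_k(s,a)$ is then conditionally sub-Gaussian with variance proxy $4(1+\gamma C_M)^2$, and proxies of martingale-difference sums add with squared weights.

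Third --- and this is where the step-size condition enters --- I would control $u_t:=\sum_{k=0}^{t-1}(w_k^{(t)})^2$. The identities $w_k^{(t)}=(1-\beta_{t-1})w_k^{(t-1)}$ for $k\le t-2$ and $w_{t-1}^{(t)}=\beta_{t-1}$ give the recursion $u_t=(1-\beta_{t-1})^2u_{t-1}+\beta_{t-1}^2$. I claim $u_t\le\beta_{t-1}$ by induction: the base case $u_1=\beta_0^2\le\beta_0$ holds for $\beta_0\le1$, and in the inductive step the hypothesis $(1-\beta_{t-1})\beta_{t-2}\le\beta_{t-1}$ yields $(1-\beta_{t-1})^2\beta_{t-2}\le(1-\beta_{t-1})\beta_{t-1}$, so $u_t\le(1-\beta_{t-1})\beta_{t-1}+\beta_{t-1}^2=\beta_{t-1}$. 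Hence each coordinate $N_t(s,a)$ is sub-Gaussian with variance proxy $\sum_k(w_k^{(t)})^2\cdot4(1+\gamma C_M)^2\le 4\beta_{t-1}(1+\gamma C_M)^2$.

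Finally, applying the standard maximal inequality $\EB[\max_{1\le i\le m}|Z_i|]\le\sigma\sqrt{2\ln(2m)}$ with $m=|\SM||\AM|$ and $\sigma^2=4\beta_{t-1}(1+\gamma C_M)^2$ produces
\[
    \EB[\|N_t\|_\infty]\le 2(1+\gamma C_M)\sqrt{\beta_{t-1}}\cdot\sqrt{2\ln(2|\SM||\AM|)},
\]
which is precisely the asserted bound. The main obstacle I anticipate is the careful filtration bookkeeping needed to certify that $\xi_k$ is a genuine martingale difference, so that the conditional Hoeffding/sub-Gaussian estimates chain correctly across the weighted sum; once the recursion $u_t=(1-\beta_{t-1})^2u_{t-1}+\beta_{t-1}^2$ is in hand, the weight bound is a short induction driven entirely by the step-size hypothesis.
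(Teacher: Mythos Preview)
Your proposal is correct and follows essentially the same route as the paper: the paper proves the same sub-Gaussian bound for each coordinate $N_t(s,a)$ by carrying the MGF recursively (its Lemma~\ref{lem: concen_error}) rather than unrolling, but the inductive step in both arguments is exactly the inequality $(1-\beta_t)^2\beta_{t-1}+\beta_t^2\le\beta_t$ driven by the step-size hypothesis. The paper then integrates the resulting tail bound while you invoke the sub-Gaussian maximal inequality directly; these are equivalent finishing steps and yield the same constant.
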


\paragraph{Concentration on $I_{t,2}$.} Directly applying Theorem~\ref{thm: bandit_converge} with $V=V_t$, we deduce the following convergence rate.

\begin{lem}
    At any time step t, if $\alpha_{t'}=\frac{\diam(\Theta)}{C_g\sqrt{t'}}$,  then
    \[
        \EB\|I_{t,2}\|_{\infty}\le\frac{\diam(\Theta)C_g(2+\ln T')(4\sqrt{2\ln|\SM||\AM|}+1)}{\sqrt{T'}}.
    \]
\end{lem}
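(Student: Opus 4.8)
The plan is to recognize that $\|I_{t,2}\|_\infty$ is \emph{exactly} the quantity controlled by Theorem~\ref{thm: bandit_converge}, so that the only real work is disposing of the randomness of $V_t$ by conditioning. By definition $I_{t,2}(s,a)=\tilde{J}_t(s,a)-\bar{J}_t(s,a)$ with $\bar{J}_t(s,a)=\max_\eta J^{(s,a)}(\eta,V_t)$ and $\tilde{J}_t(s,a)=J^{(s,a)}(\eta_{t,T'}(s,a),V_t)$, so each coordinate satisfies $I_{t,2}(s,a)\le 0$ and hence
\[
    \|I_{t,2}\|_\infty=\max_{(s,a)\in\SM\times\AM}\bigl(\bar{J}_t(s,a)-\tilde{J}_t(s,a)\bigr)=\max_{(s,a)\in\SM\times\AM}\Bigl(\sup_\eta J^{(s,a)}(\eta,V_t)-J^{(s,a)}(\eta_{t,T'}(s,a),V_t)\Bigr).
\]
This is verbatim the left-hand side of Theorem~\ref{thm: bandit_converge} once the fixed value function is taken to be $V_t$, the horizon to be $T'$, and the final SGA iterate to be $\eta_{t,T'}(s,a)$.

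Next I would verify that the hypotheses of Theorem~\ref{thm: bandit_converge} hold for this substitution. The inner loop of Algorithm~\ref{alg: onlinermdp_modification} uses precisely the step size $\alpha_{t'}=\diam(\Theta)/(C_g\sqrt{t'})$ demanded by the theorem, and $V_t=\Pi_{[0,(1-\gamma)^{-1}]}(\max_a Q_t(\cdot,a))\in[0,(1-\gamma)^{-1}]^{|\SM|}$ by the explicit projection in the algorithm; therefore Assumptions~\ref{asmp: diam} and~\ref{asmp: smooth}, which are stated uniformly over all such $V$, apply to the objectives $J^{(s,a)}(\cdot,V_t)$.

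The main (and essentially only) obstacle is that $V_t$ is random, whereas Theorem~\ref{thm: bandit_converge} is phrased for a deterministic $V$. I would resolve this by conditioning on $\FM_{t-1}$. Since $V_t$ is a deterministic function of $Q_t$, it is $\FM_{t-1}$-measurable, while the inner-loop samples $\{s'_{t,t'}(s,a)\}_{t'=0}^{T'-1}$ driving the SGA updates are drawn i.i.d.\ from $P^*$ and are independent of $\FM_{t-1}$. Thus, conditionally on $\FM_{t-1}$, the vector $V_t$ is frozen to a fixed element of $[0,(1-\gamma)^{-1}]^{|\SM|}$ and the SGA trajectory $\eta_{t,0}(s,a),\dots,\eta_{t,T'}(s,a)$ is exactly the process analyzed in Theorem~\ref{thm: bandit_converge}, run on the objectives $J^{(s,a)}(\cdot,V_t)$ with fresh i.i.d.\ data. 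Applying that theorem to the conditional expectation gives
\[
    \EB\bigl[\|I_{t,2}\|_\infty\mid\FM_{t-1}\bigr]\le\frac{\diam(\Theta)C_g(2+\ln T')(4\sqrt{2\ln|\SM||\AM|}+1)}{\sqrt{T'}}.
\]
Because the right-hand side is a deterministic constant, the tower property $\EB\|I_{t,2}\|_\infty=\EB\bigl[\EB[\|I_{t,2}\|_\infty\mid\FM_{t-1}]\bigr]$ yields the claimed bound at once, completing the argument.
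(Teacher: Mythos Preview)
Your proposal is correct and follows the same approach as the paper, which simply states ``Directly applying Theorem~\ref{thm: bandit_converge} with $V=V_t$.'' You have usefully made explicit the conditioning-on-$\FM_{t-1}$ step that justifies treating the random $V_t$ as fixed, a detail the paper leaves implicit.
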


\paragraph{Convergence of $a_t, b_t$ and $c_t$.} We can write the explicit expressions of $a_t,b_t$ and $c_t$ with $\beta_t$, $\|N_t\|_{\infty}$ and $\|I_{t,2}\|_{\infty}$.

\begin{lem}
    \label{lem: abc} It is true that
    $a_T, b_t$, and $c_T$ satisfies
    \begin{align*}
        a_T &= a_0\cdot\prod_{t=0}^{T-1}(1-\beta_t(1-\gamma)),\\
        b_T &= \sum_{t=0}^{T-1}\gamma\beta_{t}\|N_t\|_{\infty}\cdot\prod_{i=t+1}^{T-1}(1-\beta_i(1-\gamma)),\\
        c_T &= \sum_{t=0}^{T-1}\gamma\beta_{t}\|I_{t,2}\|_{\infty}\cdot\prod_{i=t+1}^{T-1}(1-\beta_i(1-\gamma)),
    \end{align*}
    where $\prod_{t=i}^j x_t:=1$ if $i>j$ for any sequence $\{x_t\}$. 
\end{lem}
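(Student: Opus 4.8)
The plan is to prove the three identities in Lemma~\ref{lem: abc} by induction on the iteration index, since each of $a_t$, $b_t$, $c_t$ is defined by a first-order linear recursion of the form $z_{t+1} = (1-\beta_t(1-\gamma))z_t + \beta_t w_t$ for a suitable driving term $w_t$ (with $w_t \equiv 0$ for the $a$-recursion, $w_t = \gamma\|N_t\|_\infty$ for the $b$-recursion, and $w_t = \gamma\|I_{t,2}\|_\infty$ for the $c$-recursion). The closed form of such a recursion is standard: unrolling gives a homogeneous part that multiplies the initial condition by the accumulated product of the $(1-\beta_i(1-\gamma))$ factors, plus an inhomogeneous sum in which the contribution injected at step $t$ is subsequently damped by the product of the factors from $t+1$ up to the terminal index.

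First I would handle $a_T$, which is the purely homogeneous case. Writing out $a_T = (1-\beta_{T-1}(1-\gamma))a_{T-1}$ and iterating down to $a_0$ immediately yields $a_T = a_0 \prod_{t=0}^{T-1}(1-\beta_t(1-\gamma))$; formally this is a one-line induction with base case $a_0 = a_0$ (the empty product equals $1$ by the stated convention $\prod_{t=i}^j x_t := 1$ when $i>j$). Next I would treat $b_T$ and $c_T$ together, as they share the same structure with $b_0 = c_0 = 0$. For the induction step on $b$, I would assume $b_t = \sum_{i=0}^{t-1}\gamma\beta_i\|N_i\|_\infty \prod_{j=i+1}^{t-1}(1-\beta_j(1-\gamma))$, substitute into $b_{t+1} = (1-\beta_t(1-\gamma))b_t + \gamma\beta_t\|N_t\|_\infty$, and observe that multiplying the inductive sum by $(1-\beta_t(1-\gamma))$ extends each inner product up to index $t$, while the freshly added term $\gamma\beta_t\|N_t\|_\infty$ supplies the $i=t$ summand with its empty product equal to $1$. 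The argument for $c_T$ is identical with $\|N_i\|_\infty$ replaced by $\|I_{i,2}\|_\infty$.

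There is essentially no obstacle here: the lemma is a bookkeeping identity, and the only point demanding care is the handling of the empty-product convention at the endpoints, so that the base cases and the newly injected terms line up correctly with the index ranges in the products. I would make sure to invoke the stated convention explicitly at the two places where it matters (the $t=0$ term of $a_T$ and the $i=t$ term appearing in the induction step for $b_T$ and $c_T$), after which the three claims follow by induction. Since these expressions feed into the later telescoping and concentration estimates, the value of the lemma is purely to have the damped-sum representation in hand; the substantive work lies downstream in bounding these sums, not in deriving the identities themselves.
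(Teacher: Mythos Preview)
Your proposal is correct and matches the paper's treatment: the paper does not give a separate proof of this lemma, since the identities follow immediately by unrolling the linear recursions for $a_t$, $b_t$, $c_t$, which is exactly the induction you outline. Your care with the empty-product convention at the endpoints is the only subtlety, and you handle it correctly.
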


By Lemma~\ref{lem: abc}, we can write an explicit expression of the deviation $\Delta_t$ in the following lemma. The first term is the upper bound of $\EB[\|N_T\|_{\infty}]$ in Lemma~\ref{lem: expect_N}. The rate of this term is determined by the learing rate $\beta_T$. The second term arises from the expression of $a_T$. In the braces, the third and forth term arise from the expression of $b_T$ and $c_T$ respectively.
\begin{lem}
    \label{lem: bound_delta}
    We have:
    \begin{align*}
        \EB\|\Delta_T\|_{\infty}&\le\sqrt{\beta_T} C_N+\|\Delta_0\|_{\infty}\cdot\left(1-\sum_{t=0}^{T-1}\beta_{t,T-1}\right)+\frac{\gamma}{1-\gamma}\sum_{t=0}^{T-1}\beta_{t,T-1}\left(\sqrt{\beta_{t-1}}C_{N}+\EB\|I_{t,2}\|_{\infty}\right),
    \end{align*}
    where $C_N:=2 \sqrt{2(1 + \gamma C_M)^2\log(2|\SM||\AM|)}$, and $\beta_{i,j}:=(1-\gamma)\beta_i\prod_{t=i+1}^j(1-\beta_t(1-\gamma))$.
\end{lem}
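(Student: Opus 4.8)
The plan is to start from the pointwise two-sided bound established immediately before the lemma, namely $-(a_t+b_t+c_t)\1 + N_t \le \Delta_t \le (a_t+b_t+c_t)\1 + N_t$, which gives the scalar inequality $\|\Delta_T\|_\infty \le a_T + b_T + c_T + \|N_T\|_\infty$. Taking expectations and substituting the closed forms for $a_T,b_T,c_T$ from Lemma~\ref{lem: abc} then reduces the whole statement to bounding three sums plus $\EB\|N_T\|_\infty$, so the work is entirely in recognizing each of these as one of the four terms on the right-hand side.

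My first step would be to rewrite the products appearing in $b_T$ and $c_T$ using the shorthand $\beta_{t,T-1}$. Since $\gamma\beta_t\prod_{i=t+1}^{T-1}(1-\beta_i(1-\gamma)) = \frac{\gamma}{1-\gamma}\cdot(1-\gamma)\beta_t\prod_{i=t+1}^{T-1}(1-\beta_i(1-\gamma)) = \frac{\gamma}{1-\gamma}\beta_{t,T-1}$, I obtain $b_T = \frac{\gamma}{1-\gamma}\sum_{t=0}^{T-1}\beta_{t,T-1}\|N_t\|_\infty$ and $c_T = \frac{\gamma}{1-\gamma}\sum_{t=0}^{T-1}\beta_{t,T-1}\|I_{t,2}\|_\infty$. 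Applying Lemma~\ref{lem: expect_N} in the form $\EB\|N_t\|_\infty \le \sqrt{\beta_{t-1}}\,C_N$ to the $b_T$ sum produces exactly the $\sqrt{\beta_{t-1}}C_N$ contribution inside the braces, while $c_T$ supplies the $\EB\|I_{t,2}\|_\infty$ contribution directly.

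The key identity is the telescoping of $a_T$. Writing $x_t := 1-\beta_t(1-\gamma)$, so that $(1-\gamma)\beta_t = 1 - x_t$ and $\beta_{t,T-1} = (1-x_t)\prod_{i=t+1}^{T-1}x_i$, I would set $P_t := \prod_{i=t}^{T-1}x_i$ (with $P_T = 1$) and observe that $(1-x_t)P_{t+1} = P_{t+1} - P_t$, whence $\sum_{t=0}^{T-1}\beta_{t,T-1} = \sum_{t=0}^{T-1}(P_{t+1} - P_t) = 1 - P_0 = 1 - \prod_{t=0}^{T-1}(1-\beta_t(1-\gamma))$. Since $a_T = \|\Delta_0\|_\infty\prod_{t=0}^{T-1}(1-\beta_t(1-\gamma))$ and $a_0 = \|\Delta_0\|_\infty$, this converts $a_T$ into $\|\Delta_0\|_\infty\big(1 - \sum_{t=0}^{T-1}\beta_{t,T-1}\big)$, matching the second term verbatim. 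Combining the three rewritten pieces with $\EB\|N_T\|_\infty$ then assembles the claimed bound.

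I expect this telescoping identity to be the main (if modest) obstacle, since it is what turns the multiplicative product governing the bias decay into the additive $1 - \sum_{t=0}^{T-1}\beta_{t,T-1}$ form; everything else is substitution and termwise application of Lemma~\ref{lem: expect_N}. The one bookkeeping point I would need to handle is the leading index: Lemma~\ref{lem: expect_N} yields $\EB\|N_T\|_\infty \le \sqrt{\beta_{T-1}}\,C_N$, which I would reconcile with the stated $\sqrt{\beta_T}\,C_N$ using the step-size condition $(1-\beta_t)\beta_{t-1}\le\beta_t$ (or by absorbing a constant), rather than through any structural change to the argument.
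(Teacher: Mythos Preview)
Your proposal is correct and follows essentially the same approach as the paper: start from the sandwich $-(a_t+b_t+c_t)\1+N_t\le\Delta_t\le(a_t+b_t+c_t)\1+N_t$, plug in the closed forms from Lemma~\ref{lem: abc}, rewrite in terms of $\beta_{t,T-1}$, and apply Lemma~\ref{lem: expect_N}. The telescoping identity you spell out is exactly the (unstated) step behind the paper's equality $a_T=\|\Delta_0\|_\infty(1-\sum_{t=0}^{T-1}\beta_{t,T-1})$, and your flag about $\sqrt{\beta_{T-1}}$ versus $\sqrt{\beta_T}$ is legitimate---the paper simply writes $\sqrt{\beta_T}C_N$ without comment, so this is a minor indexing imprecision in the paper rather than a gap in your argument.
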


Finally, we specify the learning rate to be $\beta_t=\frac{1}{1+(1-\gamma)(1+t)}$. With all we have ahead, we have the convergence result of Algorithm~\ref{alg: onlinermdp_modification} in Theorem~\ref{thm: Delta_converge}.

\begin{thm}
    \label{thm: Delta_converge}
    If Assumptions~\ref{asmp: diam} and~\ref{asmp: bound_f}  hold, then for $\alpha_{t'}=\frac{\diam(\Theta)}{C_g\sqrt{t'}}$ and $\beta_t=\frac{1}{1+(1-\gamma)(t+1)}$, we have:
    \begin{align}
        \label{eq: delta}
        \EB[\|\Delta_T\|_{\infty}]\le&\frac{\|\Delta_0\|_{\infty}}{1+(1-\gamma)T}+\frac{2C_N}{\sqrt{(1-\gamma)^3T}}
        +\frac{\diam(\Theta)C_g(2+\log T')}{(1-\gamma)\sqrt{T'}}+\frac{C_N}{\sqrt{1+(1-\gamma)T}},
    \end{align}
    where $C_M$, $\diam(\Theta)$ and $C_g$ are dependent with choice of $f$ and $C_N:=2\sqrt{2(1+\gamma C_M)^2\log(2|\SM||\AM|)}$. Thus, to obtain an $\varepsilon$-optimal Q-value function, the total number of sample complexity is: 
    \begin{align*}
        |\SM||\AM|\cdot&\OM\left(\frac{C_N^2}{\varepsilon^2(1-\gamma)^3}\right)\cdot\widetilde{\OM}\left(\frac{\diam(\Theta)^2 C_g^2}{\varepsilon^2(1-\gamma)^2}\right)
        =\widetilde{\OM}\left(\frac{|\SM||\AM|\diam(\Theta)^2 C_N^2 C_g^2}{\varepsilon^4(1-\gamma)^5}\right).
    \end{align*}
\end{thm}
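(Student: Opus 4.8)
The plan is to combine the per-step error decomposition already set up in the text with the quantitative concentration lemmas, plug in the specified learning rate $\beta_t=\frac{1}{1+(1-\gamma)(t+1)}$, and then turn the resulting bound on $\EB[\|\Delta_T\|_\infty]$ into a sample-complexity count. Concretely, I would start from Lemma~\ref{lem: bound_delta}, which already expresses $\EB\|\Delta_T\|_\infty$ as the sum of four contributions: the $a_T$ term carrying the initialization error, the $N_T$ (noise) term, and the $b_T,c_T$ terms that aggregate past noise and optimization error through the weights $\beta_{i,j}$. The whole proof is then a matter of evaluating each of these four pieces under the chosen step size.

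First I would handle the deterministic factor $a_T=\|\Delta_0\|_\infty\prod_{t=0}^{T-1}(1-\beta_t(1-\gamma))$. With $\beta_t(1-\gamma)=\frac{1-\gamma}{1+(1-\gamma)(t+1)}$ the product telescopes, giving $a_T=\frac{\|\Delta_0\|_\infty}{1+(1-\gamma)T}$ (up to the exact index bookkeeping), which is the first term of \eqref{eq: delta}. Next I would bound the pure-noise term: substituting the Hoeffding bound from Lemma~\ref{lem: expect_N}, $\EB\|N_T\|_\infty\le\sqrt{\beta_{T-1}}\,C_N$, and using $\beta_{T-1}\asymp\frac{1}{(1-\gamma)T}$ yields the $\frac{C_N}{\sqrt{1+(1-\gamma)T}}$ contribution. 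The more delicate pieces are $b_T$ and $c_T$, where I must estimate the weighted sums $\sum_{t}\beta_{t,T-1}\sqrt{\beta_{t-1}}$ and $\sum_{t}\beta_{t,T-1}\EB\|I_{t,2}\|_\infty$. Here I would use the standard stochastic-approximation fact that, for this harmonic step size, $\sum_{t=0}^{T-1}\beta_{t,T-1}\sqrt{\beta_{t-1}}=\OM\!\big(\sqrt{\beta_T}\big)$ while $\sum_{t=0}^{T-1}\beta_{t,T-1}=\OM(1)$; combined with the $\frac{\gamma}{1-\gamma}$ prefactor and $\beta_T\asymp\frac{1}{(1-\gamma)T}$ these give the $\frac{2C_N}{\sqrt{(1-\gamma)^3T}}$ term and (since $\EB\|I_{t,2}\|_\infty$ is uniformly bounded by the Theorem~\ref{thm: bandit_converge} rate in $T'$) the $\frac{\diam(\Theta)C_g(2+\log T')}{(1-\gamma)\sqrt{T'}}$ term. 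This establishes \eqref{eq: delta}.

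For the sample-complexity statement I would then balance the four terms against a target accuracy $\varepsilon$. The three terms that decay in the outer horizon $T$ force $T=\widetilde\OM\!\big(\frac{C_N^2}{\varepsilon^2(1-\gamma)^3}\big)$, while the $T'$-dependent optimization term forces the inner loop length $T'=\widetilde\OM\!\big(\frac{\diam(\Theta)^2C_g^2}{\varepsilon^2(1-\gamma)^2}\big)$. Since Algorithm~\ref{alg: onlinermdp_modification} draws $T'$ samples per $(s,a)$ in every inner loop and runs $T$ outer iterations over all $|\SM||\AM|$ pairs, the total is the product $|\SM||\AM|\cdot T\cdot T'$, which is exactly the claimed $\widetilde\OM\!\big(\frac{|\SM||\AM|\diam(\Theta)^2C_N^2C_g^2}{\varepsilon^4(1-\gamma)^5}\big)$.

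I expect the main obstacle to be the careful evaluation of the weighted noise sum $b_T$, i.e.\ controlling $\sum_{t}\beta_{t,T-1}\sqrt{\beta_{t-1}}$ and showing it contracts at the rate $\sqrt{\beta_T}$ rather than merely $\OM(1)$; this is where the condition $(1-\gamma)\beta_t\prod(1-\beta_i(1-\gamma))$ must be summed against an increasing noise weight, and a naive bound loses a factor of $\frac{1}{\sqrt{1-\gamma}}$. A secondary subtlety is justifying that the optimization error $\EB\|I_{t,2}\|_\infty$ from Theorem~\ref{thm: bandit_converge} is uniform over all outer iterations $t$ despite $V_t$ being random and $\GM_{t-1}$-measurable; this is legitimate because Theorem~\ref{thm: bandit_converge} holds for \emph{any} fixed $V\in[0,(1-\gamma)^{-1}]^{|\SM|}$ and the inner SGA stream is conditionally independent of $V_t$ given $\FM_{t-1}$, so one conditions on $\FM_{t-1}$ before invoking the bound.
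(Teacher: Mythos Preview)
Your proposal is correct and follows essentially the same route as the paper: start from Lemma~\ref{lem: bound_delta}, evaluate the telescoping product for $a_T$, invoke Lemma~\ref{lem: expect_N} for the noise term, Theorem~\ref{thm: bandit_converge} for the inner optimization error, and then balance terms to get the sample complexity. The one place you are overcomplicating things is your anticipated ``main obstacle'': for the specific step size $\beta_t=\frac{1}{1+(1-\gamma)(t+1)}$, the weights $\beta_{t,T-1}=(1-\gamma)\beta_t\prod_{i=t+1}^{T-1}(1-(1-\gamma)\beta_i)$ telescope to the \emph{constant} value $\frac{1-\gamma}{1+(1-\gamma)T}$ independent of $t$ (this is exactly Lemma~\ref{lem: coeff_cal}), so $\sum_{t}\beta_{t,T-1}\sqrt{\beta_{t-1}}$ is just a single prefactor times $\sum_{t}\sqrt{\beta_{t-1}}\le 2\sqrt{T/(1-\gamma)}$ and no delicate averaging argument is required.
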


To be more concrete, we still apply $f(t)=(t-1)^2$ to this theorem. In this case, we can verify $\diam(\Theta)=\frac{2}{1-\gamma}+4\lambda$, $C_M=\frac{1}{1-\gamma}$, $C_N=\frac{2\sqrt{2\log(2|\SM||\AM|)}}{1-\gamma}$ and $C_g=\frac{3}{4\lambda(1-\gamma)}+2$. Then, the total sample complexity for $f(t)=(t-1)^2$ is $\widetilde{\OM}\left(\frac{|\SM||\AM|}{\varepsilon^4(1-\gamma)^7}\max\left\{\frac{1}{\lambda^2(1-\gamma)^4},16\lambda^2\right\}\right)$.

\paragraph*{Discussion.}  So far, Theorem~\ref{thm: Delta_converge} answers how many samples are sufficient to guarantee an $\varepsilon$-optimal Q-value function for robust MDPs without an oracle to DRO solutions. We will discuss some points on the setting of parameters in Algorithm~\ref{alg: onlinermdp_modification}.

\begin{itemize}
    \item Choice of $\alpha_{t'}$: The inner optimization problem is indeed a convex stochastic optimization problem. \citet{fontaine2021convergence} proved that the convergence rate would be $\widetilde{\OM}(T^{-k\wedge(1-k)})$ if $\alpha_{t'}=t'^{-k}$ where $k\in(0,1)$. Thus, the choice of $\alpha_{t'}$ is the best we hope for in Theorem~\ref{thm: bandit_converge}.

    \item Choice of $\beta_{t}$: By Lemma~\ref{lem: bound_delta}, the convergence will still be guaranteed if we choose another learning rate scheme such as $\beta_t=t^{-k}$, where $k\in(0,1)$. As pointed out in \cite{wainwright2019stochastic}, the convergence rate would be slower than linear scale learning rate.

    \item Choice of $\eta_{t,0}$: In Theorem~\ref{thm: bandit_converge}, we find the convergence rate is not related with the initial point, which is due to a loose inequality with $\diam(\Theta)$. Therefore, in Algorithm~\ref{alg: onlinermdp_modification}, we force the initial point of inner optimization problem to be fixed at zero. In practice, we can set $\eta_{t,0}=\eta_{t-1,T'}$ to save iteration complexity. However, how to theoretically prove it is challenging because the value function $V_t$ is changing w.r.t. $t$.

    \item Choice of $T'$. By Lemma~\ref{lem: bound_delta}, the coefficient of $\EB[\|I_{t,2}\|_{\infty}]$ is $\beta_{t,T-1}$. Thus, we don't need to require $\EB[\|I_{t,2}\|_{\infty}]\le\varepsilon_{opt}$ for a fixed optimization error $\varepsilon_{opt}$ at any time step $t$. Instead, as long as $\sum_{t=0}^{T-1}\beta_{t,T-1}\EB[\|I_{t,2}\|_{\infty}]$ converges finitely, the optimization error $\EB[\|I_{t,2}\|_{\infty}]$ can vary w.r.t. $t$.
\end{itemize}


\section{Results with Markovian Data}
\label{sec: markovian}
In Section~\ref{sec: generative}, we introduce Algorithm~\ref{alg: onlinermdp_modification} to learn the optimal robust Q-value function. The data generating mechanism is the generative model. However, the generative model is far away from the realistic scenario. In this section, we consider a more practical data generating mechanism, namely Markovian data \citep{li2020sample}. Under the mechanism, we can only observe the samples from a single trajectory $(s_0, a_0, s_1, a_1,\cdots)$, where $s_0\sim\mu(\cdot)$, $a_t\sim\pi_b(\cdot|s_t)$, $s_{t+1}\sim P^*(\cdot|s_t,a_t)$. Unlike the generative model, we can not query next states for each $(s,a)$ pairs. Then, Algorithm~\ref{alg: onlinermdp_modification} does not fit in this setting. A straightforward modification to the algorithm is that we only update the $Q$-value and dual variable for the current visited $(s,a)$ pair as shown in Algorithm~\ref{alg: markoviandata_new}.

\begin{algorithm}[htbp!]
    \caption{Model-free approach to robust MDPs (Markovian Data) }
    \label{alg: markoviandata_new}
    \begin{algorithmic}
        \STATE{\bfseries Input:} $Q_0(s,a)=1/(1-\gamma)$ for all $(s,a)\in\SM\times\AM$.
        \FOR{iteration $t=0$ {\bfseries to} $T-1$}
            \STATE Perform $a_t\sim\pi(\cdot|s_t)$, and receive reward $r(s_t,a_t)$ and next state $s_{t+1}\sim P^*(\cdot|s_t,a_t)$;
            \STATE $Q_{t+1}(s_t,a_t)=(1-\beta_{t})Q_t(s_t,a_t)+\beta_{t}(r(s_t,a_t)+\gamma \widehat{J}^{(s_t,a_t)}(\eta(s_t,a_t); V_t))$;
            \STATE $\eta_{t+1}(s_t,a_t)=\Pi_{\Theta}\left(\eta_t(s_t,a_t)+\alpha_{t}\frac{\partial \widehat{J}^{(s_t,a_t)}(\eta(s_t,a_t); V_{t})}{\partial\eta}\right)$;
            \STATE $V_{t+1}=\max_a Q_{t+1}(\cdot,a)$;
        \ENDFOR
    \end{algorithmic}
\end{algorithm}

However, in order to guarantee the convergence of Algorithm~\ref{alg: markoviandata_new}, several additional assumptions are needed. The first one is the induced Markovian chain by policy $\pi_b$ converges to its stationary distribution geometrically fast. In Assumption~\ref{asmp: fastmixing}, the convergence rate $\rho$ is related with the mixing time $\tau_{\text{mix}}$ by $\tau_{\text{mix}}=\frac{\ln 4M}{\log\rho^{-1}}\approx\frac{\ln4M}{1-\rho}$ when $\rho$ approaches 1. A fast mixing Markovian chain implies that the pairs $(s_t,a_t)$ we observe are almost i.i.d.\ generated from the stationary distribution $d_{\pi_b}(\cdot)$ as long as $t$ is sufficiently large.

\begin{asmp}
    \label{asmp: fastmixing}
    For the given policy $\pi$ in ~\ref{alg: markoviandata_new}, the Markovian chain $(s_0, a_0, s_1, a_1,\cdots)$ is fast mixing, that is, 
    \begin{align*}
        \sup_{(s,a)\in\SM\times\AM}d_{TV}\left(P_t^\pi(\cdot|(s,a)),d_{\pi}(\cdot)\right)\le M\rho^t,
    \end{align*}
    where $M>0$ and $\rho\in(0,1)$. And $P^\pi_t:=(P^\pi)^t$, and $d_\pi(\cdot)$ is the stationary distribution of the Markovian chain, which satisfies:
    \begin{align*}
        d_\pi^\top=d_\pi^\top P^\pi.
    \end{align*}
    Moreover, we denote $d_{\min}:=\min_{(s,a)\in\SM\times\AM}d_{\pi}(s,a)$.
\end{asmp}

In addition, we also require additional assumptions for $J^{(s,a)}(\eta;V)$. Assumption~\ref{asmp: monotone} implies the objective is strongly-convex at its optimal point. In this case, the convergence rate for solving $\sup_{\eta}J^{(s,a)}(\eta;V)$ can be faster than the convex case, which enables we alternatively update variables $\eta_t$ and $Q_t$ in Algorithm~\ref{alg: markoviandata_new}. Moreover, by the dual objective \eqref{eq: pdrod}, we have Lemma~\ref{asmp: eta_lip_v},  guaranteeing that  the optimal solutions w.r.t.\ different values do not differ too much if the values are close. 
\begin{asmp}
    \label{asmp: monotone}
    For any given $V\in[0,(1-\gamma)^{-1}]^{|\SM|}$, there exists $\kappa>0$ such for each $(s,a)\in\SM\times\AM$ that $J^{(s,a)}(\eta;V)$ satisfies:
    \[
        \nabla J^{(s,a)}(\eta; V)\cdot(\eta-\eta_{V}^*(s,a))\le-\kappa\cdot(\eta-\eta_V^*(s,a))^2.
    \]
\end{asmp}
\begin{lem}
    \label{asmp: eta_lip_v}
    For any $V_1,V_2\in[0,(1-\gamma)^{-1}]^{|\SM|}$, we have:
    \[
        \left|\eta^*_{V_1}(s,a)-\eta^*_{V_2}(s,a)\right|\le \left\|V_1-V_2\right\|_{\infty}.
    \]
\end{lem}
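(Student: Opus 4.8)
The plan is to exploit two structural features of the map $V\mapsto\eta^*_V(s,a)$ that follow from the explicit form of $J^{(s,a)}(\eta;V)$: \emph{shift-equivariance} and \emph{monotonicity}. Fix a pair $(s,a)$ and write $p(s')=P^*(s'|s,a)$, so that
\begin{align*}
    J^{(s,a)}(\eta;V)=-\lambda\sum_{s'\in\SM}p(s')\,f^*\!\left(\frac{\eta-V(s')}{\lambda}\right)+\eta .
\end{align*}
This is concave in $\eta$ (the negated composition of the convex $f^*$ with an affine map, plus a linear term) and, by Assumption~\ref{asmp: diam}, attains its maximum at a finite $\eta^*_V$. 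Differentiating, $\eta^*_V$ is characterized by the stationarity condition $G_V(\eta^*_V)=1$, where
\begin{align*}
    G_V(\eta):=\sum_{s'\in\SM}p(s')\,(f^*)'\!\left(\frac{\eta-V(s')}{\lambda}\right).
\end{align*}
Since $f^*$ is convex, $(f^*)'$ is non-decreasing, so $G_V(\cdot)$ is non-decreasing; moreover Assumption~\ref{asmp: monotone} makes $\eta^*_V$ the \emph{unique} stationary point, hence the unique root of $G_V(\cdot)=1$.

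Next I would establish the two properties. For shift-equivariance, the substitution $\eta\mapsto\eta+c$ gives $J^{(s,a)}(\eta+c;V+c\1)=J^{(s,a)}(\eta;V)+c$ for every constant $c\in\RB$, and maximizing both sides yields $\eta^*_{V+c\1}=\eta^*_V+c$. For monotonicity, suppose $V_1\le V_2$ pointwise. Because $(f^*)'$ is non-decreasing and $\tfrac{\eta-V_2(s')}{\lambda}\le\tfrac{\eta-V_1(s')}{\lambda}$, we get $G_{V_2}(\eta)\le G_{V_1}(\eta)$ for all $\eta$; evaluating at $\eta^*_{V_1}$ gives $G_{V_2}(\eta^*_{V_1})\le G_{V_1}(\eta^*_{V_1})=1=G_{V_2}(\eta^*_{V_2})$, from which $\eta^*_{V_1}\le\eta^*_{V_2}$ follows using that $G_{V_2}$ is non-decreasing with a unique root.

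Finally I would combine the two. Setting $c:=\|V_1-V_2\|_\infty$ gives the pointwise sandwich $V_2-c\1\le V_1\le V_2+c\1$, and applying monotonicity followed by shift-equivariance yields
\begin{align*}
    \eta^*_{V_2}-c=\eta^*_{V_2-c\1}\le\eta^*_{V_1}\le\eta^*_{V_2+c\1}=\eta^*_{V_2}+c,
\end{align*}
i.e.\ $|\eta^*_{V_1}-\eta^*_{V_2}|\le\|V_1-V_2\|_\infty$, as claimed.

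The step I expect to be the main obstacle is the monotonicity conclusion, namely passing from $G_{V_2}(\eta^*_{V_1})\le G_{V_2}(\eta^*_{V_2})$ to $\eta^*_{V_1}\le\eta^*_{V_2}$: since $(f^*)'$ is only non-decreasing, $G_{V_2}$ may be locally flat, so an inequality between $G$-values need not pin down the ordering of the arguments. This is exactly where uniqueness from Assumption~\ref{asmp: monotone} is used: if one had $\eta^*_{V_1}>\eta^*_{V_2}$, monotonicity of $G_{V_2}$ together with $G_{V_2}(\eta^*_{V_1})\le1$ would force $G_{V_2}(\eta^*_{V_1})=1$, contradicting the uniqueness of the root $\eta^*_{V_2}$. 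Alternatively, one can bypass Assumption~\ref{asmp: monotone} by a direct contradiction: if $\eta^*_{V_1}-\eta^*_{V_2}>c$ then each argument $\tfrac{\eta^*_{V_1}-V_1(s')}{\lambda}$ strictly exceeds $\tfrac{\eta^*_{V_2}-V_2(s')}{\lambda}$, so strict convexity of $f^*$ (which holds, e.g., for $f(t)=(t-1)^2$) gives $G_{V_1}(\eta^*_{V_1})>G_{V_2}(\eta^*_{V_2})$, contradicting $G_{V_1}(\eta^*_{V_1})=G_{V_2}(\eta^*_{V_2})=1$.
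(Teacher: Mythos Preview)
Your proof is correct and takes a genuinely different route from the paper's. The paper argues by implicit differentiation: from the first-order condition $\EB_{P^*}\nabla f^*\!\big((\eta^*_V-V(s'))/\lambda\big)=1$ it differentiates in $V$, uses $\nabla^2 f^*\ge 0$ to obtain $\big\|\partial\eta^*_V/\partial V\big\|_1=1$, and then applies the mean value theorem together with H\"older's inequality. Your argument is instead order-theoretic: shift-equivariance $\eta^*_{V+c\1}=\eta^*_V+c$ plus monotonicity in $V$ immediately sandwich $\eta^*_{V_1}$ between $\eta^*_{V_2}\pm\|V_1-V_2\|_\infty$.

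The trade-offs: the paper's approach yields the sharper structural statement that $\partial\eta^*_V/\partial V$ is a probability vector, but it tacitly needs $f^*$ to be twice differentiable along the relevant path and $\eta^*_V$ to be differentiable in $V$ (for the $\chi^2$ example $f^*$ fails to be $C^2$ at one point, so this is a mild gap). Your approach avoids second derivatives entirely and is more elementary; it only needs uniqueness of the stationary point, which you correctly extract from Assumption~\ref{asmp: monotone} (and you note the alternative via strict convexity of $f^*$). One small point worth making explicit: the sandwich applies monotonicity to $V_2\pm c\1$, which may leave $[0,(1-\gamma)^{-1}]^{|\SM|}$; this is harmless because shift-equivariance transports both existence and uniqueness of $\eta^*$ from $V_2$ to $V_2\pm c\1$.
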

Taking $f(t)=(t-1)^2$ as an example, for any given $(s,a)\in\SM\times\AM$, we have:
\begin{align*}
    J^{(s,a)}(\eta; V)&=-\lambda \EB_{s'\sim P^*(\cdot|s,a)}f^{*}\left(\frac{\eta-V(s')}{\lambda}\right)+\eta \notag \\
    &=-\frac{\EB_{s'\sim P^*(\cdot|s,a)}[\left(\eta+2\lambda-V(s')\right)_{+}^2]}{4\lambda}+\lambda+\eta.
\end{align*}
Thus, the gradient of $J^{(s,a)}(\eta;V)$ w.r.t. $\eta$ is:
\begin{align}
    \nabla J^{(s,a)}(\eta;V)=-\frac{\EB_{s'\sim P^*(\cdot|s,a)}[\left(\eta+2\lambda-V(s')\right)_{+}]}{2\lambda}+1.
\end{align}
Then we observe that $\eta_V^*(s,a)$ satisfies $ \nabla J^{(s,a)}(\eta_V^*(s,a);V)=0$. Moreover, the subgradient of $\nabla J^{(s,a)}(\eta;V)$ at $\eta_V^*(s,a)$ satisfies:
\[
    \nabla^2 J^{(s,a)}(\eta_V^*(s,a);V)\le -\frac{\EB_{s'\sim P^*(\cdot|s,a)}[\1(\eta^*_V(s,a)+2\lambda-V(s')>0)]}{2\lambda}.
\]
By the Cauchy–Schwarz inequality, we have:
\begin{align*}
    4\lambda^2&=\left(\EB_{s'\sim P^*(\cdot|s,a)}[\left(\eta^*_V(s,a)+2\lambda-V(s')\right)_{+}]\right)^2\notag \\
    &\le\EB_{s'\sim P^*(\cdot|s,a)}[\left(\eta^*_V(s,a)+2\lambda-V(s')\right)_{+}^2]\cdot\EB_{s'\sim P^*(\cdot|s,a)}[\1(\eta^*_V(s,a)+2\lambda-V(s')>0)].
\end{align*}
Moreover, we notice $J^{(s,a)}(\eta^*_V(s,a);V)\ge0$ by primal objective. Thus,
\[
    4\lambda^2\le4\lambda(\lambda+\eta^*_V(s,a))\EB_{s'\sim P^*(\cdot|s,a)}[\1(\eta^*_V(s,a)+2\lambda-V(s')>0)].
\]
In addition, we have $\eta^*_V(s,a)\in[-\lambda,2(1-\gamma)^{-1}+2\lambda]$, which leads to
\begin{align}
    \nabla^2 J^{(s,a)}(\eta_V^*(s,a);V)\le-\frac{1}{2(\lambda+\eta_V^*(s,a))}\le-\frac{1}{6(\lambda+(1-\gamma)^{-1})}.
    \label{eq: chi2_kappa}
\end{align}
Then,  Assumption~\ref{asmp: monotone} holds for $f(t)=(t-1)^2$ with $\kappa^{-1}=6(\lambda+(1-\gamma)^{-1})$.

We now give a proof sketch to the convergence for Algorithm~\ref{alg: markoviandata_new}. The detailed proofs in this section are deferred to Appendix~\ref{apd: markovian}.

\paragraph*{Error Decomposition.} Here we denote the error $\Delta_t(s,a)=Q_{t+1}(s,a)-Q^*_{\robp}(s,a)$ and $\xi_t$ is a random variable on $\SM\times\AM$ satisfying $P(\xi_{t+1}=(s',a')|\xi_t=(s,a))=P^*(s'|s,a)\pi_b(a'|s')$. Thus, by Algorithm~\ref{alg: markoviandata_new}, we have:
\begin{align}
    \Delta_{t+1}(s,a)=&(1-\beta_t\1(\xi_t=(s,a)))\Delta_t(s,a)+\beta_t\1(\xi_t=(s,a))(\varepsilon_{r,t}(s,a)+\gamma\varepsilon_{J,t}(s,a)) \nonumber \\
        &+\gamma\beta_t\1(\xi_t=(s,a))\left(J^{(s,a)}(\eta_t(s,a);V_t)-J^{(s,a)}(\eta_t^*;V_t)\right)\nonumber \\
        &+\gamma\beta_t\1(\xi_t=(s,a))\left(J^{(s,a)}(\eta_t^*;V_t)-J^{(s,a)}(\eta^*;V^*_{\robp})\right).\label{eq: markovian_decomp}
\end{align}
Different from Section~\ref{sec: generative}, here we introduce a new random variable sequence $\{\xi_t\}_{t\ge0}$ to represent only one pair $(s,a)$ occurs at each time step. Moreover, we denote the filtration $\{\FM_t\}_{t\ge0}$ by ($\FM_0:=\sigma(\{\varnothing\})$):
\[
    \FM_{t}=\sigma\left(\{\xi_i\}_{i=0}^t\cup\{r_i\}_{i=0}^{t-1}\right).
\]
In the decomposition, we denote:
\begin{align*}
    &Z_{t,1}(s,a):=\1(\xi_t=(s,a))(\varepsilon_{r,t}(s,a)+\gamma\varepsilon_{J,t}(s,a)),\\
    &Z_{t,2}(s,a):=\gamma\1(\xi_t=(s,a))\left(J^{(s,a)}(\eta_t(s,a);V_t)-J^{(s,a)}(\eta_t^*;V_t)\right),\\
    &Z_{t,3}(s,a):=\gamma\1(\xi_t=(s,a))\left(J^{(s,a)}(\eta_t^*;V_t)-J^{(s,a)}(\eta^*;V^*_{\robp})\right).
\end{align*}
It is worth noticing that $\EB [Z_{t,1}(s,a)|\FM_t]=0$, $|Z_{t,2}(s,a)|\le\frac{\gamma}{2\sigma\lambda}|\eta_t(s,a)-\eta_{t}^*(s,a)|^2$ by smoothness property in Assumption~\ref{asmp: smooth}, and $|Z_{t,3}(s,a)|\le\gamma\1(\xi_t=(s,a))\|\Delta_t\|_{\infty}$. Then, two things left to be done: (a) dealing with the $\xi_t$ in the recursion; (b) controlling the error $\|\eta_t-\eta_t^*\|_{\infty}$.

\paragraph*{Dealing $\xi_t$.} In a high-level idea, when $t$ is sufficiently large,  we have $\EB\1(\xi_t=(s,a))\approx d_\pi(s,a)$ by fast mixing assumption~\ref{asmp: fastmixing}. Thus, we can write the decomposition \eqref{eq: markovian_decomp} into an abstract form:
\[
    \Delta_{t+1}(s,a)=(1-\beta_t d_{\pi}(s,a))\Delta_t(s,a)+\beta_t f_t(s,a)(1(\xi_t=(s,a))-d_\pi(s,a)),
\]
where $\|f_t\|$ is almost surely bounded and adaptive to $\FM_t$. Then we decompose $\1(\xi_t=(s,a))-d_\pi(s,a)$ into:
\begin{align*}
    \1(\xi_t=(s,a))-d_\pi(s,a)&=\sum_{k=0}^\infty \left(P_k(s,a|\xi_t)-d_\pi(s,a)\right)-\sum_{k=1}^\infty \left(P_k(s,a|\xi_t)-d_\pi(s,a)\right)\notag\\
    &:=\psi(s,a;\xi_t)-\PM\psi(s,a;\xi_t),
\end{align*}
where $\PM\psi(s,a;\xi_t):=\sum_{s',a'}\psi(s,a;s',a')P^{\pi}(s',a'|\xi_t)$ is one-step transition on $\psi$. This decomposition is also called Poisson equation method \citep{metpri87, benveniste2012adaptive, li2023online}. Next, we plug in $\PM \psi(s,a;\xi_{t-1})$, which happens to be the conditional expectation $\EB[\psi(s,a;\xi_t)|\FM_t]$. Thus, the error can also be written by:
\begin{align}
    \Delta_{t+1}(s,a)=&(1-\beta_t d_{\pi}(s,a))\Delta_t(s,a)+\beta_t f_t(s,a)\left(\psi(s,a;\xi_t)-\PM(s,a;\xi_{t-1})\right)\notag\\
    &+\beta_t f_t(s,a)\left(\PM\psi(s,a;\xi_{t-1})-\PM\psi(s,a;\xi_{t})\right).
    \label{eq: delta_xi}
\end{align}
By Assumption~\ref{asmp: fastmixing}, we can show $|\psi(s,a;\xi_t)|\le\frac{M}{1-\rho}$. Thus, Azuma-Hoeffding can be applied to deal with error induced by $\psi(s,a;\xi_t)-\PM(s,a;\xi_{t-1})$. For the error induced by $\PM\psi(s,a;\xi_{t-1})-\PM\psi(s,a;\xi_{t})$, we can replace it with error induced by $f_{t+1}(s,a)-f_t(s,a)$ according to change of summation. We leave the details to Appendix~\ref{apd: aux}.

\paragraph*{Controlling $\|\eta_t-\eta_t^*\|_{\infty}$.} A key obstacle for controlling $\|\eta_t-\eta_t^*\|_{\infty}$ is that $V_t$ keeps varying at each time step. By update rule in Algorithm~\ref{alg: markoviandata_new}, the error can be decomposed to two major terms (we ignore all the parameters independent with $t$ here):
\begin{align*}
    (\eta_{t+1}(s,a)-\eta_{t+1}^*(s,a))^2\approx &(1-\OM(\alpha_t))(\eta_{t}(s,a)-\eta_{t}^*(s,a))^2+\OM\left(\frac{(\eta_{t}^*(s,a)-\eta_{t+1}^*(s,a))^2}{\alpha_t}\right)\notag \\
    &+H_t(s,a),
\end{align*}
where $H_t(s,a)$ are some rest random terms induced by $\xi_t$, which is handled similarly with Eqn~\eqref{eq: delta_xi}. With Lemma~\ref{asmp: eta_lip_v}, we have $(\eta_{t}^*(s,a)-\eta_{t+1}^*(s,a))\le\|V_{t}-V_{t+1}\|_{\infty}$. Besides, by update rule in Algorithm~\ref{alg: markoviandata_new}, we have $\|V_{t}-V_{t+1}\|_{\infty}=\OM(\beta_t)$. Thus, we replace the error decomposition with:
\begin{align*}
    (\eta_{t+1}(s,a)-\eta_{t+1}^*(s,a))^2\approx &(1-\OM(\alpha_t))(\eta_{t}(s,a)-\eta_{t}^*(s,a))^2+\OM\left(\frac{\beta_t^2}{\alpha_t}\right)+H_t(s,a).
\end{align*}
Then, with some proper chosen $\alpha_t$ and $\beta_t$, we have the following lemma to determine the convergence rate of $\|\eta_t-\eta_t^*\|_{\infty}$.

\begin{lem}
    \label{lem: markov_delta}
  Let $\eta_t^*(s,a):=\argmax_{\eta}J^{(s,a)}(\eta;V_t)$ and $\delta_t(s,a):=\eta_t(s,a)-\eta_t^*(s,a)$ in Algorithm~\ref{alg: markoviandata_new}. Then the following inequality holds:
    \begin{align*}
        \EB\|\delta_{t+1}\|^2_{\infty}\le\frac{\Phi_1}{(t+1)^{\frac{1}{3}}}+\frac{\Phi_2\ln(t+1+p^\dagger)}{(t+1)^{\frac{2}{3}}}+\frac{\Phi_3}{(t+1)^{\frac{2}{3}}},
    \end{align*}
    where $\Phi_1$, $\Phi_2$ and $\Phi_3$ are dependent with instance parameters, which are deferred to Appendix~\ref{apd: markovian}.
\end{lem}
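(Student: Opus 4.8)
The plan is to derive a one-step contraction inequality for $\delta_t(s,a)^2$ and then unroll it into a scalar recursion whose solution yields the three stated terms. First I would split the error at the visited pair as $\delta_{t+1}(s,a)=\bigl(\eta_{t+1}(s,a)-\eta_t^*(s,a)\bigr)+\bigl(\eta_t^*(s,a)-\eta_{t+1}^*(s,a)\bigr)$, separating the progress of the iterate toward the current-step optimum $\eta_t^*$ from the drift of the optimum induced by the change $V_t\to V_{t+1}$. For the first part, on the event $\{\xi_t=(s,a)\}$ the update is a projected stochastic gradient ascent step; using non-expansiveness of $\Pi_\Theta$ (with $\eta_t^*(s,a)\in\Theta$), expanding the square, and taking $\EB[\cdot\mid\FM_t]$, the mean-zero gradient noise drops out and Assumption~\ref{asmp: monotone} supplies the contraction
\[
    \nabla J^{(s,a)}(\eta_t(s,a);V_t)\cdot(\eta_t(s,a)-\eta_t^*(s,a))\le-\kappa\,\delta_t(s,a)^2,
\]
while Lemma~\ref{lem: bound_grad} bounds the second moment of the stochastic gradient by $C_g^2$. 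This gives a factor $1-2\kappa\alpha_t\1(\xi_t=(s,a))$ together with an $\alpha_t^2 C_g^2$ noise term.

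For the drift part, I would invoke Lemma~\ref{asmp: eta_lip_v} to get $|\eta_t^*(s,a)-\eta_{t+1}^*(s,a)|\le\|V_t-V_{t+1}\|_\infty$, and then bound $\|V_t-V_{t+1}\|_\infty=\OM(\beta_t)$ from the boundedness of the $Q$-update in Algorithm~\ref{alg: markoviandata_new} guaranteed by Assumption~\ref{asmp: bound_f}. Combining the two parts with Young's inequality of the form $(a+b)^2\le(1+\kappa\alpha_t)a^2+(1+1/(\kappa\alpha_t))b^2$ converts the drift into an $\OM(\beta_t^2/\alpha_t)$ contribution, which is the source of the $\Phi_3/(t+1)^{2/3}$ term.

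The main obstacle is the Markovian, correlated sampling: the contraction factor $1-2\kappa\alpha_t\1(\xi_t=(s,a))$ depends on the random visitation $\xi_t$, so one cannot simply multiply independent contractions. Here I would write $\1(\xi_t=(s,a))=d_\pi(s,a)+\bigl(\1(\xi_t=(s,a))-d_\pi(s,a)\bigr)$ and apply the same Poisson-equation decomposition used in the sketch for $\Delta_t$, expressing the fluctuation as a martingale-difference term plus a telescoping term; the bound $|\psi(s,a;\xi_t)|\le M/(1-\rho)$ from Assumption~\ref{asmp: fastmixing} then lets Azuma--Hoeffding control these contributions. After this reduction the effective recursion reads
\[
    \EB[\delta_{t+1}(s,a)^2]\le(1-2\kappa d_\pi(s,a)\alpha_t)\,\EB[\delta_t(s,a)^2]+\OM(\alpha_t^2)+\OM(\beta_t^2/\alpha_t)+(\text{martingale noise}),
\]
with the worst-case rate governed by $d_{\min}$.

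Finally I would solve this scalar recursion with the schedule $\alpha_t\asymp t^{-1/3}$ and $\beta_t\asymp t^{-2/3}$: the $\alpha_t^2$ source equilibrates at level $\OM(\alpha_t)=\OM(t^{-1/3})$, giving $\Phi_1/(t+1)^{1/3}$; the drift source gives $\Phi_3/(t+1)^{2/3}$; and summing the martingale variances through the decaying product produces the extra logarithmic factor, yielding $\Phi_2\ln(t+1+p^\dagger)/(t+1)^{2/3}$. Passing from the per-pair bounds to $\EB\|\delta_{t+1}\|_\infty^2$ is handled by taking the worst contraction $d_{\min}$ together with a union bound over the $|\SM||\AM|$ pairs in the Azuma--Hoeffding step. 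I expect the delicate point to be precisely the interaction between the two time scales---controlling the optimum-drift $\eta_t^*\to\eta_{t+1}^*$ through $\|V_t-V_{t+1}\|_\infty$ while simultaneously handling the Markovian noise in the contraction---and the careful balancing of $\alpha_t,\beta_t$ needed to make all three exponents come out as stated.
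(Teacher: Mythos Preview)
Your overall skeleton matches the paper's proof: expand $\delta_{t+1}^2$ via non-expansiveness of $\Pi_\Theta$, extract a contraction from Assumption~\ref{asmp: monotone}, separate the gradient noise, bound the optimum drift $\eta_t^*-\eta_{t+1}^*$ through Lemma~\ref{asmp: eta_lip_v} and $\|V_t-V_{t+1}\|_\infty=\OM(\beta_t)$ together with a Young-type splitting that produces a $\beta_t^2/\alpha_t$ source, write $\1(\xi_t=(s,a))=d_\pi(s,a)+\text{fluctuation}$ and handle the latter with the Poisson-equation decomposition plus Azuma--Hoeffding, and finally unroll. That is exactly what the paper does.

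The concrete gap is the stepsize schedule. The lemma is proved for the rates fixed in Theorem~\ref{thm: markov_q}, namely $\alpha_t\asymp t^{-2/3}$ and $\beta_t\asymp t^{-1}$, not your $\alpha_t\asymp t^{-1/3}$ and $\beta_t\asymp t^{-2/3}$. The paper does not use an ``equilibrates at level $\OM(\alpha_t)$'' argument; it uses the telescoping inequality $(1-\kappa\alpha_{j+1}d_\pi)\alpha_j\le\alpha_{j+1}$ (Lemma~\ref{lem: alpha_ineq}), which turns $\sum_i\alpha_i^2\prod_{j>i}(1-\kappa\alpha_jd_\pi)$ into $\alpha_t\sum_i\alpha_i$. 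With your $\alpha_t\asymp t^{-1/3}$ this is $\asymp t^{1/3}$ and diverges, whereas with $\alpha_t\asymp t^{-2/3}$ it gives $t^{-2/3}\cdot t^{1/3}=t^{-1/3}$, the leading $\Phi_1$ rate. Relatedly, your mapping of sources to the three displayed terms is off: in the paper the $\beta_t^2/\alpha_t$ drift term (their $I_5$) lands at order $t^{-1/3}$ and contributes to $\Phi_1$, the $\Phi_3/(t+1)^{2/3}$ piece comes from the initial-condition decay and the pure $\beta_t^2$ drift, and the logarithmic $\Phi_2$ term arises from the Poisson-equation bound (Lemma~\ref{lem: markov_ineq}) on the Markov fluctuation in the contraction coefficient, not from summing martingale variances.
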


Finally, combing Lemma~\ref{lem: markov_delta} and error decomposition for $\Delta_t$ in Eqn~\eqref{eq: delta_xi}, the following Theorem specify the final convergence rate.
\begin{thm}
    \label{thm: markov_q}
    If Assumptions~\ref{asmp: fastmixing} and \ref{asmp: monotone} hold, by taking $\alpha_t = \frac{1}{\kappa d_{\min}(t+p_{\alpha})^{\frac{2}{3}}}$ and $\beta_t = \frac{1}{(1-\gamma)d_{\min}(t+p^\dagger)}$, where $p_{\alpha}=\left\lceil\left(\frac{d_{\max}}{d_{\min}}\right)^{\frac{3}{2}}\right\rceil$ and $p^\dagger=\lceil\frac{d_{\max}}{(1-\gamma)d_{\min}}\rceil$, then the convergence rate for Algorithm~\ref{alg: markoviandata_new} satisfies:
    \begin{align*}
        \EB\|Q_t-Q_{\robp}^*\|_{\infty}\le\widetilde{\OM}\left(\frac{\Phi_1}{\sigma\lambda d_{\min}(1-\gamma)^2(t+1)^{\frac{1}{3}}}\right),
    \end{align*}
    where the expression of $\Phi_1$ is deferred to Appendix~\ref{apd: markovian}.
\end{thm}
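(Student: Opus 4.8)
The plan is to start from the one-step error recursion \eqref{eq: markovian_decomp} for $\Delta_t(s,a)=Q_{t+1}(s,a)-Q^*_{\robp}(s,a)$ and reduce it to a scalar recursion for $u_t:=\EB\|\Delta_t\|_\infty$ whose solution decays at the advertised rate. The three noise pieces play distinct roles: $Z_{t,1}$ is a conditionally mean-zero, bounded martingale difference ($\EB[Z_{t,1}\mid\FM_t]=0$); $Z_{t,2}$ is the fast-timescale optimization error, bounded by $\frac{\gamma}{2\sigma\lambda}\|\eta_t-\eta_t^*\|_\infty^2$ through the smoothness in Assumption~\ref{asmp: smooth}; and $Z_{t,3}$ is the genuine contraction term, bounded in absolute value by $\gamma\1(\xi_t=(s,a))\|\Delta_t\|_\infty$ via the primal definition \eqref{eq: pen}. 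The factor $(1-\beta_t\1(\xi_t=(s,a)))$ combined with $Z_{t,3}$ produces an effective per-coordinate contraction of the form $1-(1-\gamma)\beta_t\1(\xi_t=(s,a))$.

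First I would replace the sampling indicator by its stationary value. Since $\EB\1(\xi_t=(s,a))$ is only asymptotically $d_\pi(s,a)$ under Assumption~\ref{asmp: fastmixing}, I would use the Poisson-equation decomposition $\1(\xi_t=(s,a))-d_\pi(s,a)=\psi(s,a;\xi_t)-\PM\psi(s,a;\xi_t)$ together with the telescoping rewrite \eqref{eq: delta_xi}. The geometric mixing bound $|\psi(s,a;\xi_t)|\le M/(1-\rho)$ lets me control the martingale piece $\psi(s,a;\xi_t)-\PM\psi(s,a;\xi_{t-1})$ by Azuma--Hoeffding, while the residual $\PM\psi(s,a;\xi_{t-1})-\PM\psi(s,a;\xi_t)$ is reorganized by a change of summation into increments $f_{t+1}-f_t$ that are $O(\beta_t)$ by the update rule; both, after accumulation, contribute at order $\widetilde{\OM}(t^{-1/2})$. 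The martingale $Z_{t,1}$ is handled by the same Hoeffding machinery as in Lemma~\ref{lem: expect_N}, again giving a $\sqrt{\beta_t}$-order contribution.

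Next I would feed in the fast-variable estimate, which is the crux. Controlling $\|\eta_t-\eta_t^*\|_\infty$ is delicate because the target $\eta_t^*$ moves as $V_t$ changes each step; here Lemma~\ref{asmp: eta_lip_v} bounds that drift by $\|V_{t+1}-V_t\|_\infty=O(\beta_t)$, and Assumption~\ref{asmp: monotone} (strong convexity at the optimum) yields linear contraction of the fast recursion at rate $\kappa d_{\min}\alpha_t$. Balancing the drift-induced error $O(\beta_t^2/\alpha_t)$ against this contraction is exactly what forces the two-timescale choice $\alpha_t=\Theta(t^{-2/3})\gg\beta_t=\Theta(t^{-1})$ and produces Lemma~\ref{lem: markov_delta}, i.e.\ $\EB\|\eta_t-\eta_t^*\|_\infty^2=\widetilde{\OM}(\Phi_1 t^{-1/3})$. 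Substituting this into the $Z_{t,2}$ bound shows the optimization error enters the $Q$-recursion as a forcing term of order $\beta_t d_{\max}\frac{\gamma}{\sigma\lambda}\Phi_1 t^{-1/3}=\widetilde{\OM}(t^{-4/3})$.

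Finally I would assemble and solve the scalar recursion. Taking $\|\cdot\|_\infty$ and expectations, the dominant coordinate obeys $u_{t+1}\le\big(1-(1-\gamma)\beta_t d_{\min}\big)u_t+C\,\beta_t d_{\max}\frac{\gamma}{\sigma\lambda}\Phi_1 t^{-1/3}+\zeta_t$, where $\zeta_t$ gathers the martingale and Poisson terms and, after accumulation, is of order $\widetilde{\OM}(t^{-1/2})$. With $\beta_t=\frac{1}{(1-\gamma)d_{\min}(t+p^\dagger)}$ the effective contraction per step is $(1-1/(t+p^\dagger))$, so a standard comparison for recursions of the type $u_{t+1}\le(1-\tfrac{1}{t})u_t+\Theta(t^{-4/3})$ gives $u_t=\widetilde{\OM}\big(\frac{\Phi_1}{\sigma\lambda d_{\min}(1-\gamma)^2}t^{-1/3}\big)$, because the particular solution $t^{-1/3}$ decays more slowly than both the homogeneous $t^{-1}$ mode and the $t^{-1/2}$ concentration terms. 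The choices $p^\dagger=\lceil d_{\max}/((1-\gamma)d_{\min})\rceil$ and $p_\alpha=\lceil(d_{\max}/d_{\min})^{3/2}\rceil$ merely ensure $\beta_t,\alpha_t$ stay below the admissible step-size ceilings so that the contractions hold for every $t\ge0$. I expect the main obstacle to be the interplay between the Markovian noise and the fast variable inside Lemma~\ref{lem: markov_delta}: unlike the generative case, the gradient noise in the $\eta$-update is not conditionally mean-zero, so the Poisson correction must be propagated through the squared-error recursion while simultaneously tracking the moving optimum $\eta_t^*$, and it is precisely this coupling that caps the rate at $t^{-1/3}$ rather than the $t^{-1/2}$ one would obtain from martingale noise alone.
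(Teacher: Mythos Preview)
Your proposal matches the paper's argument in all essential respects: the error decomposition into $Z_{t,1}$, $Z_{t,2}$, $Z_{t,3}$; the Poisson-equation handling of the indicator fluctuations; the use of Lemma~\ref{lem: markov_delta} for the fast-variable error; and the identification of the accumulated $Z_{t,2}$ term as the bottleneck producing the $t^{-1/3}$ rate.

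One point needs care, though. Your plan to ``reduce it to a scalar recursion for $u_t:=\EB\|\Delta_t\|_\infty$'' does not work if taken literally: once you take $\EB\|\cdot\|_\infty$ at each step, the conditionally mean-zero pieces $Z_{t,1}$ and the Poisson martingale $\psi(s,a;\xi_t)-\PM\psi(s,a;\xi_{t-1})$ contribute $\beta_t\EB\|Z_{t,1}\|_\infty=O(\beta_t)$ per step, and after accumulation through a $(1-\Theta(t^{-1}))$ contraction this yields only $O(1)$, not the $\widetilde{\OM}(t^{-1/2})$ you assert for $\zeta_t$. The martingale cancellation is destroyed the moment the norm is taken. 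The paper avoids this by \emph{not} passing to a scalar recursion for $u_t$; instead it invokes an auxiliary-sequence lemma (Lemma~\ref{lem: recursion} in Appendix~\ref{apd: aux}) that keeps the recursion coordinate-wise and isolates the accumulated noise into a vector process $B_t(s,a)$, yielding the pathwise bound
\[
\|\Delta_t\|_\infty\le\frac{\beta_t}{\beta_0}\|\Delta_0\|_\infty+\|B_t\|_\infty+\gamma\beta_t\sum_{k<t}\|B_k\|_\infty.
\]
The Hoeffding machinery you invoke (Lemma~\ref{lem: expect_N}, in this setting Lemma~\ref{lem: recursion_hoeffding}) and the Poisson estimate (Lemma~\ref{lem: markov_ineq}) are then applied to $\EB\|B_t\|_\infty$, where the martingale structure is still intact and the $\sqrt{\beta_t}$ bound is legitimate. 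Since you already anticipate this machinery, the fix is simply to replace the scalar $u_t$-recursion step with this coordinate-wise accumulation; the rest of your outline then goes through exactly as the paper's proof does.
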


Here we also take $f(t)=(t-1)^2$ as an example to calculate the specific convergence rate in Theorem~\ref{thm: markov_q}. By Eqn~\eqref{eq: chi2_kappa}, we have $\kappa = \frac{1}{6(\lambda+(1-\gamma)^{-1})}$. In addition, we also take $\lambda=\frac{1}{1-\gamma}$, as the expression of $\Phi_1$ is complicated. Thus, the convergence rate is:
\begin{align*}
    \EB\|Q_t-Q_{\robp}^*\|_{\infty}\le\widetilde{\OM}\left(\frac{\max\left\{\frac{M\sqrt{\ln2|\SM||\AM|}}{1-\rho},\frac{1}{(1-\gamma)^2}\right\}}{d_{\min}^3(1-\gamma)^3(t+1)^{\frac{1}{3}}}\right).
\end{align*}

\paragraph*{Discussion.} Theorem~\ref{thm: markov_q} presents the sample complexity of Algorithm~\ref{alg: markoviandata_new} is $\OM\left(T^{-\frac{1}{3}}\right)$. Compared with Theorem~\ref{thm: Delta_converge}, the dependence on $T$ is improved. The major contribution belongs to Assumption~\ref{asmp: monotone}, where we assume the objective is local strongly-convex. Indeed, Theorem~\ref{thm: Delta_converge} can also be improved to the same order $\OM\left(T^{-\frac{1}{3}}\right)$ if Assumption~\ref{asmp: monotone} holds in Section~\ref{sec: generative}. However, in the case of Markovian data, the convergence can not be guaranteed if Assumption~\ref{asmp: monotone} is blocked. This is due to we require the convergence of $\eta_t$ is faster than $Q_t$ to control the overall error. In addition, we stay positive on improving the convergence rate from $\OM\left(T^{-\frac{1}{3}}\right)$ to $\OM\left(T^{-\frac{1}{2}}\right)$ if Polyak-averaging technique \citep{polyak1992acceleration} is applied, which we leave for subsequent works.

\section{Experiments}
\label{sec: experiment}
In this section, we verify our theory from the following aspects: (a) The connection between robust value function and non-robust value function, (b) The relationship between the statistical error and robustness parameter $\lambda$, (c) The convergence result of Algorithm~\ref{alg: onlinermdp_modification}, and (d) The convergence result of Algorithm~\ref{alg: markoviandata_new}. For (d), we need to make some minor changes to the setting and we provide the experimental details to Section~\ref{sec: exp_mark}.

\subsection{Experimental Details}
\label{apd: exp}
\begin{figure}[htbp!]
    \centering
    \includegraphics[scale=0.8]{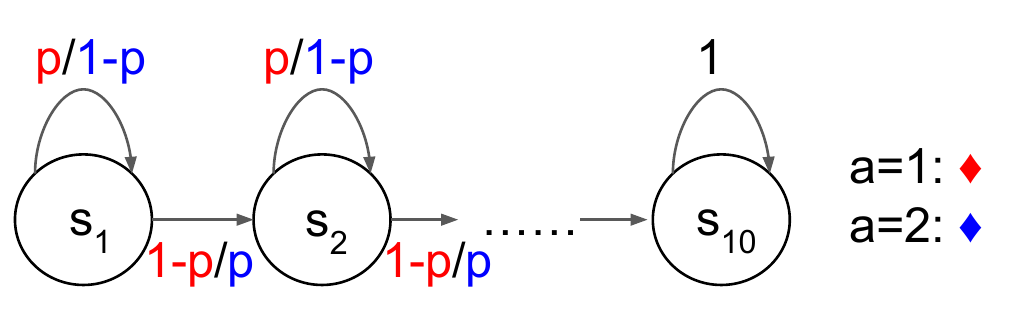}
    \caption{MDP with 10 states 2 actions. The transition probabilities are marked with red for taking action $a=1$, while the transition probabilities are marked blue taking action $a=2$.}
    \label{fig: 10state}
\end{figure}
We use a 10-state MDP environment (Figure \ref{fig: 10state}) at first. At state $s_i$ where $1\leq i \leq 9$, the transition probability is given by the following rules: When taking action $1$, $P^*(s_{i} | s_{i}, 1) = p$ and $P^*(s_{i+1} | s_{i}, 1) = 1-p$; When taking action $2$, the probability is opposite $P^*(s_{i} | s_{i}, 2) = 1 - p$ and $P^*(s_{i+1} | s_{i}, 2) = p$). At state $s_{10}$, the agent is always transited back to the same state. The reward is always $1$ except that transitions at $s_{10}$ always gives $0$. The discount rate is $\gamma=0.9$.


To obtain the true value functions, we run value iteration algorithms to achieve them. For non-robust optimal value function $V^*$, we run standard value iteration algorithm and set the iteration step being $T=10000$. For robust optimal value function $V^*_{\robp}$, we run a robust value iteration algorithm (Algorithm~\ref{alg: model_based}) with the transition probability $P^*$ and set $T=10000$ (outer loop steps) and $T'=1000$ (inner loop steps) to make sure the the dual variables and robust Q-values converging.

In Section~\ref{sec: connection} and \ref{sec: stat_error}, we apply a model-based method to learn $\widehat{V}_{\robp}$ and $\widehat{Q}_{\robp}$. First, we estimate $\widehat{P}$ with $1000$ transitions collected with model $P^*$ for each $(s,a)$. Then, we run Algorithm~\ref{alg: model_based} with $\widehat{P}$ to obtain $\widehat{V}_{\robp}$. We use $T=T'=100$ and set $\eta_{t,0}=\eta_{t-1, T'}$ to save steps. In these sections, we test several settings with different choice of $\lambda$: $\{0.5, 1.0, 2.0, 3.0, 4.0, \allowbreak 5.0, 10.0\}$. 
Moreover, the learning rate in the inner loop is set to be a constant $\alpha_t'=\lambda$ by the fact the smoothness of the dual objective is $1/\lambda$.
\begin{algorithm}[htbp!]
    \caption{Model-based approach to robust MDPs }
    \label{alg: model_based}
    \begin{algorithmic}
        \STATE{\bfseries Input:} $Q_0(s,a)=(1-\gamma)^{-1}$ for all $(s,a)\in\SM\times\AM$, and transition probability $P$.
        \FOR{iteration $t=0$ {\bfseries to} $T-1$}
            \STATE $V_t=\Pi_{[0,(1-\gamma)^{-1}]}\left(\max_a Q_t(\cdot,a)\right)$;
            \FOR{each state-action pair $(s,a)\in\SM\times\AM$}
                \STATE Set $\eta_{t,0}(s,a)=0$.
                \FOR{iteration $t'=0$ {\bfseries to} $T'-1$}
                \STATE $\eta_{t,t'+1}(s,a)=\Pi_{\Theta}\left(\eta_{t,t'}(s,a)+\alpha_{t'}\sum_{s'}P(s'|s,a)\frac{\partial J(\eta_{t,t'}(s,a), V_{t};s')}{\partial\eta}\right)$;
                \ENDFOR
            \STATE $Q_{t+1}(s,a)=R(s,a)+\gamma \sum_{s'}P(s'|s,a)J(\eta_{t,T'}(s,a), V_t;s')$;
            \ENDFOR
        \ENDFOR
    \end{algorithmic}
\end{algorithm}

In Section~\ref{sec:convergence},  we run the model-free algorithm with a generative model (Algorithm~\ref{alg: onlinermdp_modification}) to learn $\widehat{Q}_{\robp}$. In this section, we set $T=1000$, and $\eta_{t,0}$ is set to $0$ at the beginning of each inner loop. 
Moreover, we sweep $\lambda$ with the same values listing as above, and test different $T'$ settings in $[10, 50, 100]$. 
The learning rate in the outer loop is $\frac{1}{1 + (1 - \gamma) t}$ and the inner loop has learning rate $\frac{\lambda}{\sqrt{t'}}$, where $t$ refers to the iteration at the outer loop and $t'$ is the iteration at the inner loop. We repeat each experiment 100 times using different random seeds to account for noise.

\subsection{Connection to Non-robust Value Functions}
\label{sec: connection}
\begin{figure}[h!]
    \centering
    \includegraphics[scale=1.0]{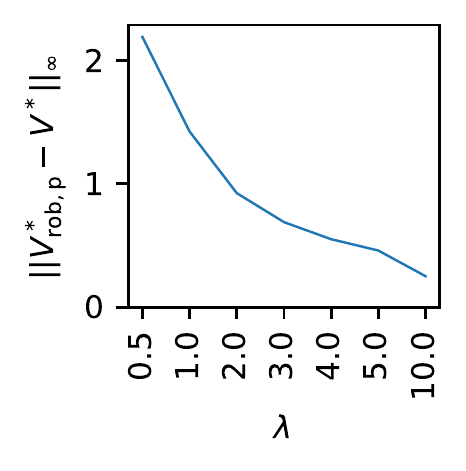}
    \caption{Deviation $\|V_{\robp}^*-V^*\|_{\infty}$ v.s. $\lambda$.}
    \label{fig: connection}
\end{figure}

In this section, we show how the robust value function $V^*_{\robp}$ varies with different $\lambda$. In a high-level idea, by definition of ~\eqref{eq: robpV}, we observe that the robust value function is less dependent with $P^*$ when $\lambda$ approaches $0$. Similarly, the robust value function would be approaching the non-robust value function since the infimum of ~\eqref{eq: robpV} would be obtained at $P=P^*$. We simply run Algorithm~\ref{alg: model_based} (taking $P=P^*$) and Value Iteration algorithms to obtain $V_{\robp}^*$ and $V^*$ respectively. In Figure~\ref{fig: connection}, we find the error $\|V_{\robp}^*-V^*\|_{\infty}$ decays as $\lambda$ increases, which suggests this idea is correct.

\subsection{Statistical Errors}
\label{sec: stat_error}
\begin{figure}[htbp!]
    \centering
    \includegraphics[scale=.7]{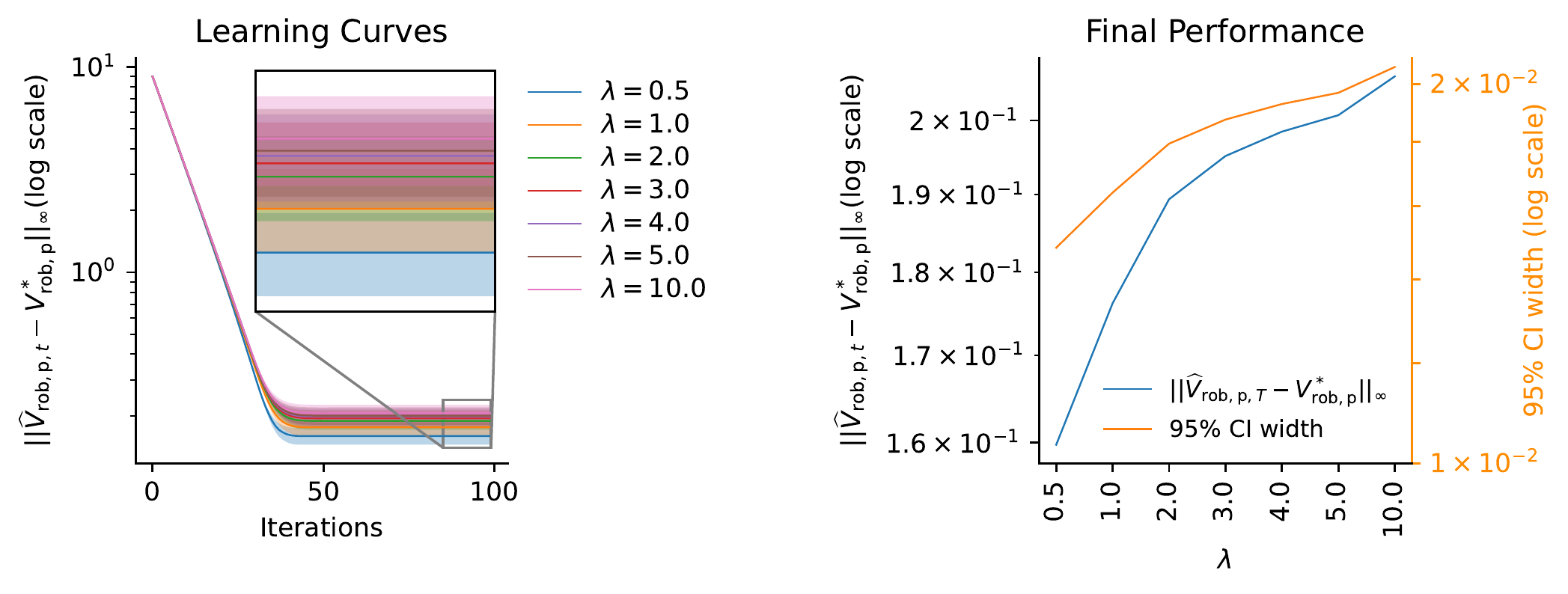}
    \caption{Left: deviation $\|\widehat{V}_{\robp,t}-V_{\robp}^*\|_{\infty}$ v.s. number of iterations. The shaded region represents the $95\%$ CI. The subplot on the top right corner zooms in the performance before cutting off learning. Right: deviation $\|\widehat{V}_{\robp,t}-V_{\robp}^*\|_{\infty}$ and $95\%$ confidence interval v.s. $\lambda$. }
    \label{fig: stat_error}
\end{figure}
In this section, we investigate the relationship between the statistical error and $\lambda$. In Theorem~\ref{thm: prmdp_stat}, we prove both upper and lower bounds for penalized robust MDPs. It is worth noticing the upper bound is conservative. And the example (Figure~\ref{fig: 10state}) we use is an extension of lower bound in Theorem~\ref{thm: prmdp_stat}. Thus, we compare our experiment results with lower bounds. In Figure~\ref{fig: stat_error}, the left learning curve composes of two stages: the curve drops at a linear rate in the first stage, and then becomes flat in the second stage. In fact, the first stage is due to $\widehat{\TM}_{\robp}$ is a $\gamma$-contraction, and the second stage is due to statistical error between $\widehat{P}$ and $P^*$. On the right side of Figure~\ref{fig: stat_error}, we find the deviation $\|\widehat{V}_{\robp,t}-V_{\robp}^*\|_{\infty}$ and confidence interval both increases as $\lambda$ increases, which matches the lower bound in Theorem~\ref{thm: prmdp_stat}. Also, on the left side of Figure~\ref{fig: stat_error}, it is notable that the convergence rate is also slightly related with the choice of $\lambda$. It is that the convergence rate would be fast at the first stage when $\lambda$ is small. This phenomenon is due to the robust Bellman gap of different values $V_1$ and $V_2$ becomes:
\begin{align}
    \left\|\TM_{\robp}V_1-\TM_{\robp}V_2\right\|_{\infty}\approx\gamma\left|V_{1,\text{min}}-V_{2,\text{min}}\right|
\end{align}
when $\lambda$ is small. On the contrary, if $\lambda$ is large, the error is determined by:
\begin{align}
    \left\|\TM_{\robp}V_1-\TM_{\robp}V_2\right\|_{\infty}\approx\gamma\left\|\EB_{P^*_{s,a}}(V_{1}-V_{2})\right\|_{\infty},
\end{align}
which is usually larger than the prior case.

\subsection{Convergence} \label{sec:convergence}
\begin{figure}[htbp!]
    \centering
    \includegraphics[width=0.9\textwidth]    
    {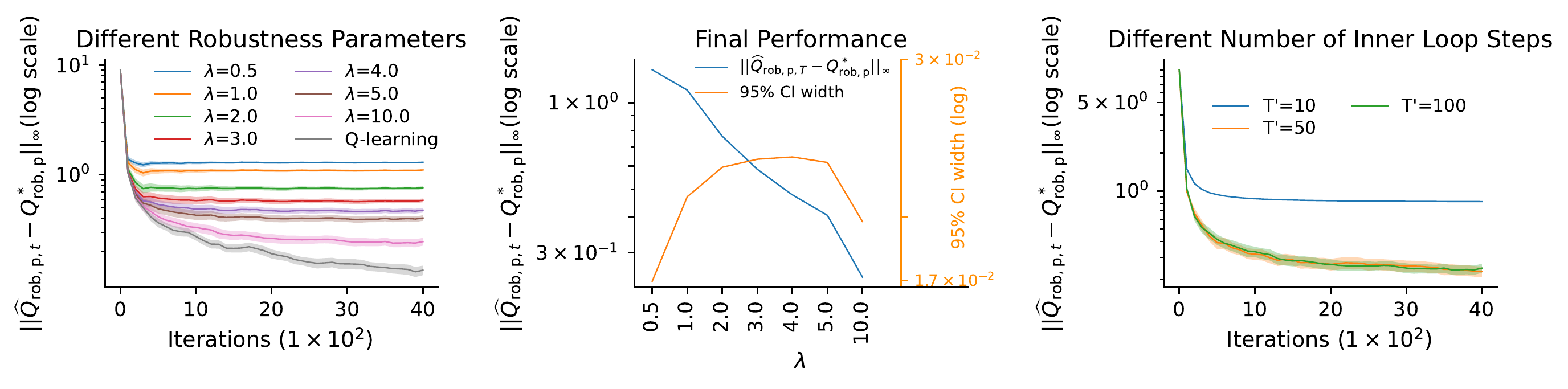}
    \caption{Deviation $\|\widehat{Q}_{\robp,t}-Q_{\robp}^*\|_{\infty}$ v.s. number of iterations. Left and Middle: $T'=100$; Right: $\lambda=10$. In the left and right subplots, the shaded region represents the $95\%$ CI. }
    \label{fig: qlearning}
\end{figure}
In this section, we test the convergence performance of Algorithm~\ref{alg: onlinermdp_modification}. In Figure~\ref{fig: qlearning}, we plot learning curves run by Algorithm~\ref{alg: onlinermdp_modification}. 
On the left side of Figure~\ref{fig: qlearning}, we find when $\lambda$ is smaller, the convergence rate is slightly faster.
This phenomenon coincides with the 1st stage performance in Fig~\ref{fig: stat_error} (left and middle). However, the final performance is strange: the error decreases as $\lambda$ increases. One reason is due to the optimization errors, where the error would amplify when $\lambda$ is small by the $1/\lambda$ factor in dual variable updating.
Except for the optimization errors, the other reason is that the bound of Theorem~\ref{thm: Delta_converge} is a worst case result, which is conservative when $\lambda$ is small or large. Thus we couldn't observe a matching performance with Theorem~\ref{thm: Delta_converge}. Moreover, we observe the confidence interval of the final run is increasing as $\lambda$ increases, which means robustness indeed works though there is a drop when $\lambda=10$. 
In the rightmost subplot of Fig~\ref{fig: qlearning}, we find the deviation $\|\widehat{Q}_{\robp, T}-Q^*_{\robp}\|_{\infty}$ also matters with the choice of $T'$. With a small $T'$, the solution of dual variable is not accurate, which leads to a bad performance on $\|\widehat{Q}_{\robp, T}-Q^*_{\robp}\|_{\infty}$. With $T'$ increases, the performance becomes better, which supports the third term in~\eqref{eq: delta}. 

\subsection{On a Markovian Chain Convergence} \label{sec:mkv_convergence}
\label{sec: exp_mark}

In this section, we test the convergence performance of Algorithm~\ref{alg: markoviandata_new}. To make Algorithm~\ref{alg: markoviandata_new} work, we make a slight change to the environment, where we allow the state $s_{10}$ can transit to $s_1$ with a positive probability. In this scenario, the stationary distribution satisfies $\min_{s,a}d_\pi(s,a)>0$. And we also set the behavior policies as $\pi(1|s)=\pi$ and $\pi(2|s)=1-\pi$. The minimal probability of the stationary distribution changes w.r.t. $\pi$, which is shown in Figure~\ref{fig: dmin}. It is notable that the stationary distribution is approximately uniform when $\pi\approx0.5$ and there exists a state-action pair becomes inaccessible when $\pi\approx 0$ or $\pi\approx 1$. To learn the robust Q-value function $\widehat{Q}_{\robp}$, we set $T=4\times10^6$ to make sure the overall sample complexity is the same as experiments in Section~\ref{sec:convergence}. Moreover, the learning rate for Q-value update and dual variable update are all set to be the same in Theorem~\ref{thm: markov_q}, where $\kappa=1/6(\lambda+(1-\gamma)^{-1})$. Besides, we sweep the same $\lambda\in\{0.5,1.0, 2.0, 3.0, 4.0, 5.0,10.0\}$ in the experiments. In the meantime, we also sweep behavior policy $\pi\in\{0.001,0.005,0.05,0.1,0.2,0.5\}$ for some chosen $\lambda$. In this setting, we run Algorithm~\ref{alg: markoviandata_new} repetitively with 100 different random seeds.

\begin{figure}[htbp!]
    \centering
    \includegraphics[scale=0.9]{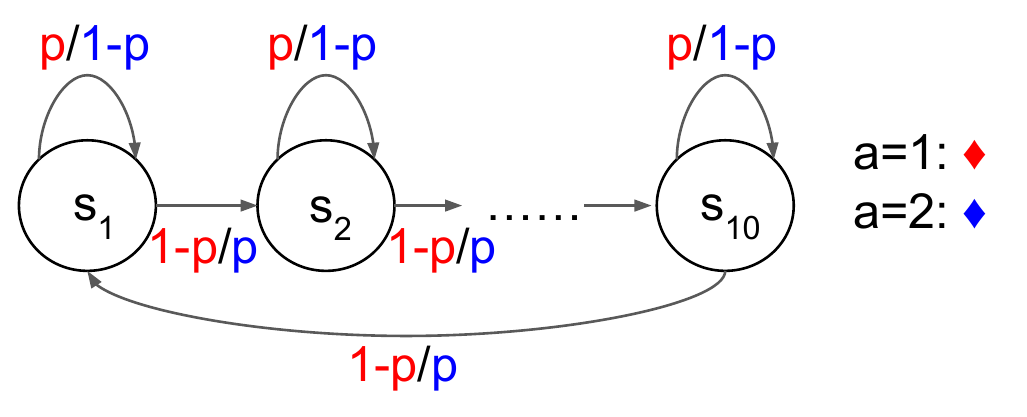}
    \caption{MDP with 10 states 2 actions. The transition probabilities are marked red taking action $a=1$, while the transition probabilities are marked blue taking action $a=2$.}
    \label{fig: 10state_mc}
\end{figure}
\begin{figure}[htbp!]
    \centering
    \includegraphics[scale=0.8]{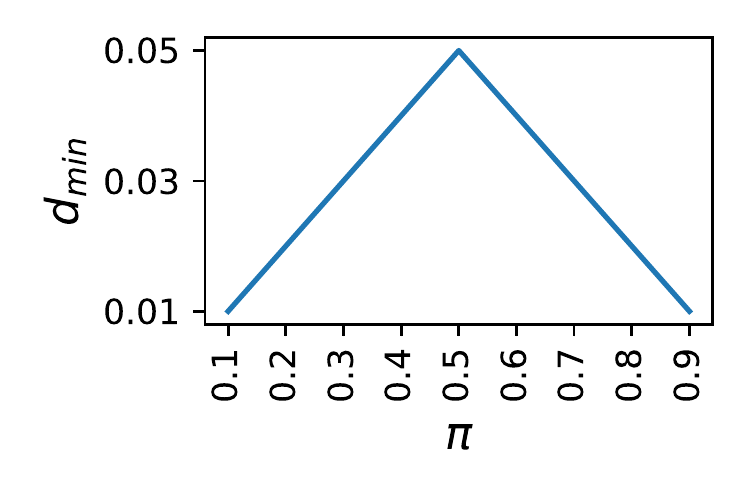}
    \caption{The minimal probability of the stationary distribution in Fig.~\ref{fig: 10state_mc} with changing of behavior policy $\pi$.}
    \label{fig: dmin}
\end{figure}

In Figure~\ref{fig: mc_lambd}, we show the performances with behavior policy $\pi=0.5$. 
It is notable that the training performances are undesirable when $\lambda=0.5$. We argue the main reason is due to the numerical problem in the learning rate for the smaller $\lambda$ as we explained in Section~\ref{sec:convergence}.
When $\lambda$ is large, we are delight to find the convergence can be guaranteed. Compared with Section~\ref{sec:convergence}, we also find the final performance of Algorithm~\ref{alg: markoviandata_new} is better in terms of sample complexity (error is better when number of samples are $10^6$). In Figure~\ref{fig: mc_behavior}, we also show the relationship between learning performances and behavior policy. It can be inferred that the performance would be better if the behavior policy approaches $0.5$ or $d_{\min}$ is large, which is due to each $(s,a)$ pair will be frequently and equally visited and also coincides with Theorem~\ref{thm: markov_q}. If the behavior policy approaches 0, the training performances will drop as some specific $(s,a)$ will be barely visited and the corresponding Q-values are inaccurate.

\begin{figure}[htbp!]
    \centering
    \includegraphics[scale=0.7]{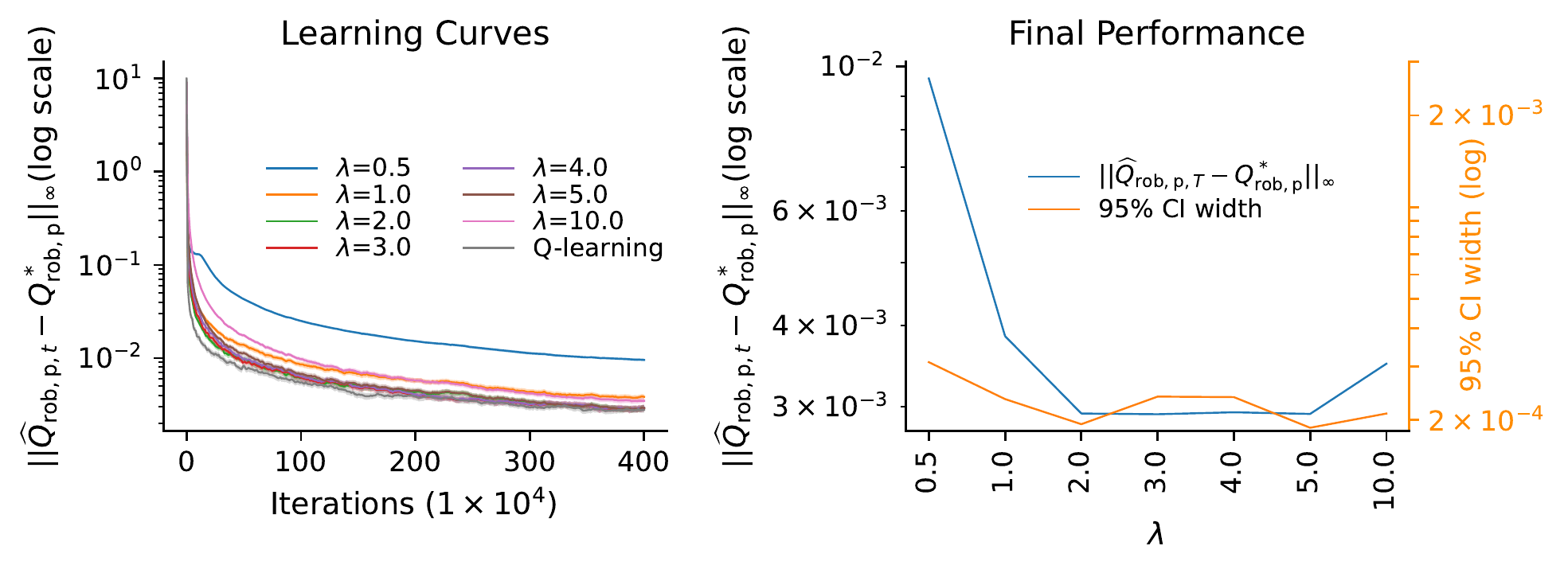}
    \caption{Deviation $\|\widehat{Q}_{\robp,t}-Q^*_{\robp}\|_{\infty}$ v.s. number of iterations (Algorithm~\ref{alg: markoviandata_new}, $\pi=0.5$). In the left subplot, the shaded region represents the $95\%$ CI. }
    \label{fig: mc_lambd}
\end{figure}
\begin{figure}[htbp!]
    \centering
    \begin{tabular}{ccc}
        \includegraphics[width=.35\columnwidth]{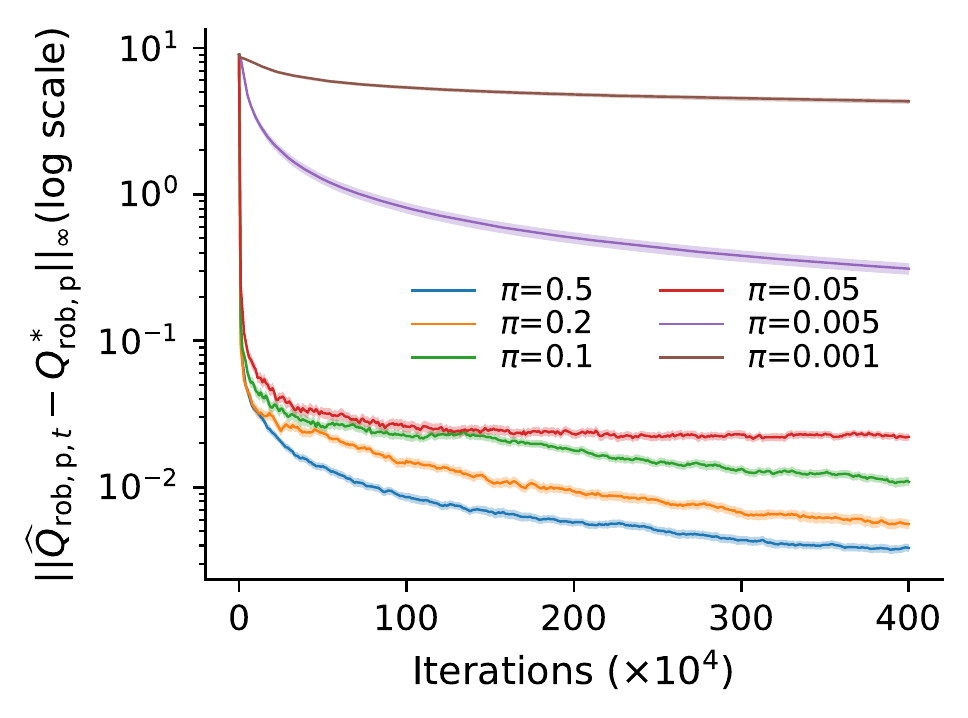}&
        \includegraphics[width=.35\columnwidth]{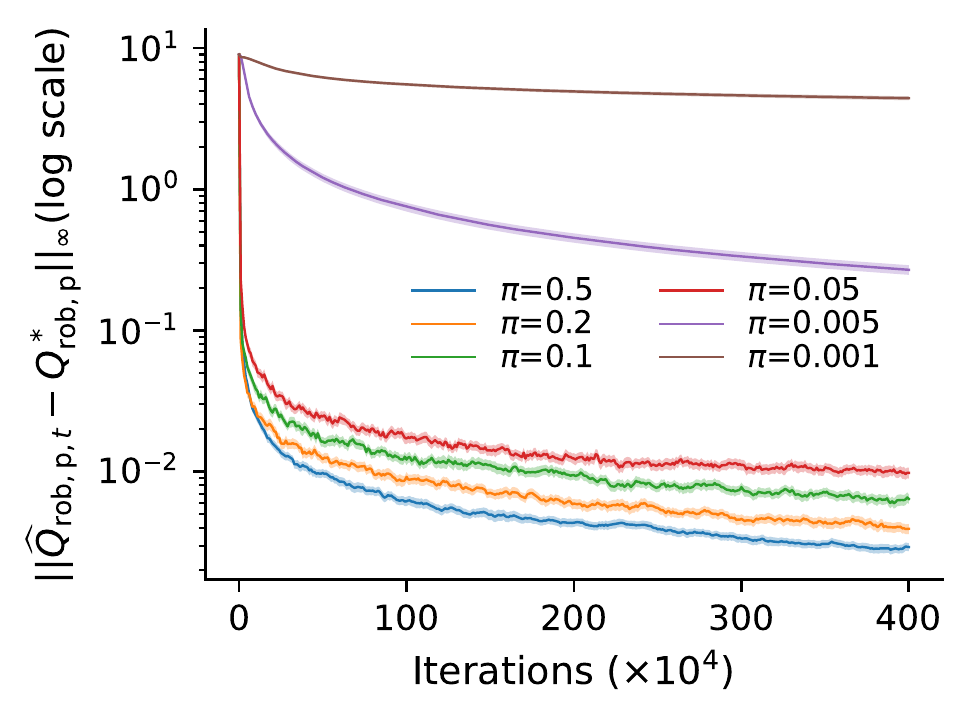}\\
        (a) $\lambda=1.0$ & (b) $\lambda=2.0$\\
        \includegraphics[width=.35\columnwidth]{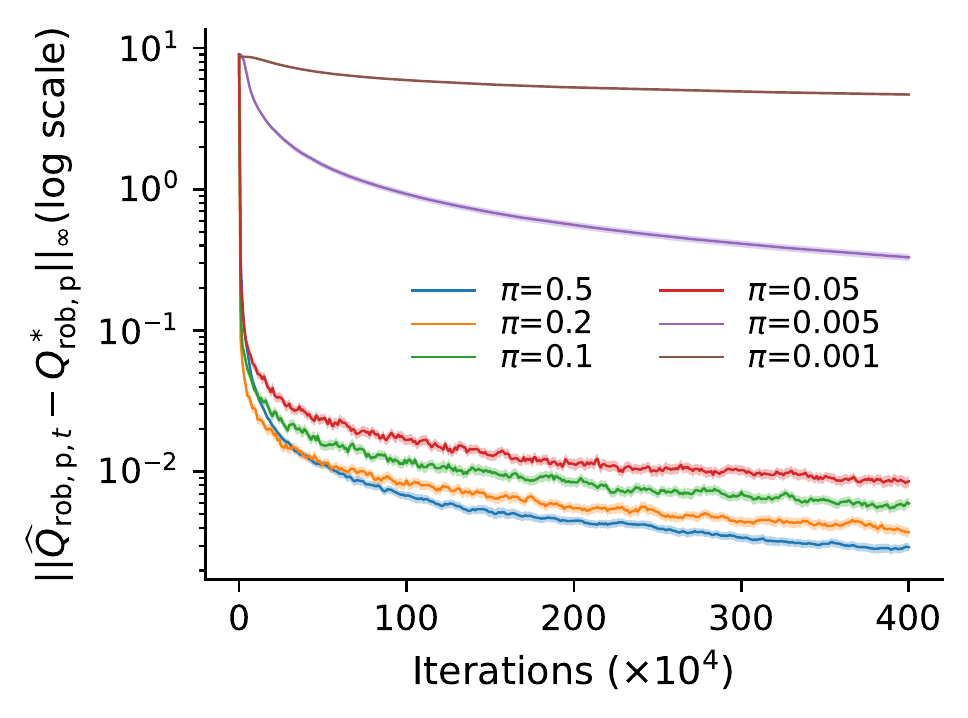}&
        \includegraphics[width=.35\columnwidth]{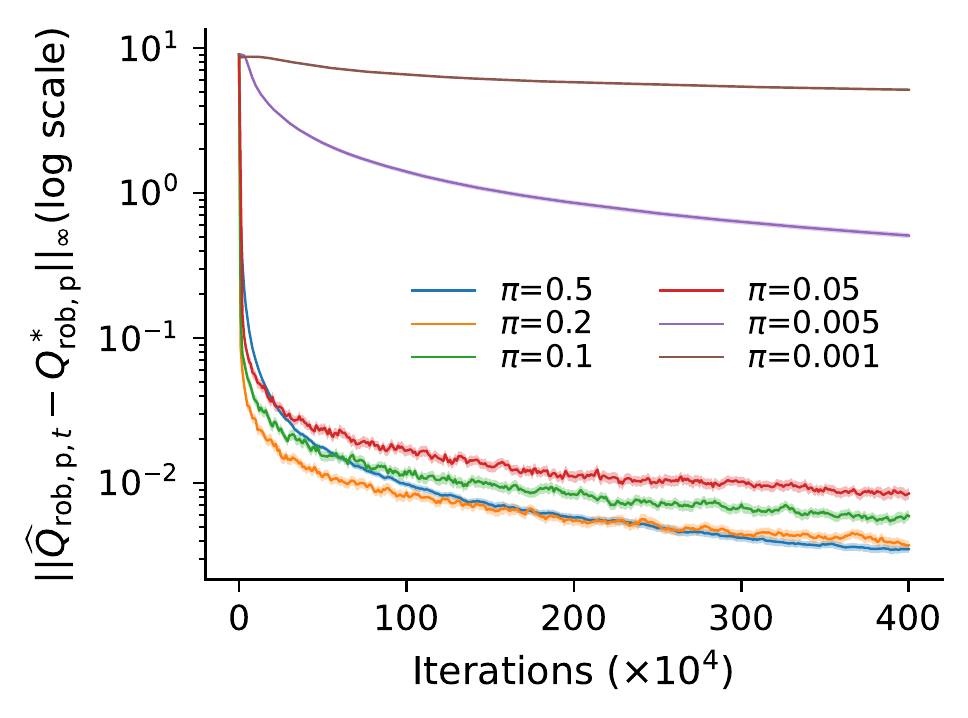}\\
        (c) $\lambda=5.0$ & (d)$\lambda=10.0$\\
    \end{tabular}
    \caption{Deviation $\|\widehat{Q}_{\robp,t}-Q^*_{\robp}\|_{\infty}$ v.s. number of iterations (Algorithm~\ref{alg: markoviandata_new}). }
    \label{fig: mc_behavior}
\end{figure}


\section{Concluding Remarks}
\label{sec: conc}
In this paper we have made two primary contributions towards solving robust MDPs efficiently. First, we have proposed an alternative formulation for distributionally robust MDPs and proved the statistical equivalence with the original forms. Second, we have devised a model-free algorithm to solve the robust MDPs without requiring an oracle to obtain solutions for DRO problems. We have also proved the polynomial convergence rate of our algorithm, in generative model setting and Markovian data setting. Here are some directions for further improvements. One direction is whether the convergence rate can be improved by some another technique such as Polyak-averaging \cite{polyak1992acceleration}. Furthermore, it would be challenging to move our theoretical results from a worst-case analysis to an instance-dependent analysis. Such instance-dependent results exist for MDPs \citep{khamaru2020temporal, yin2021towards, khamaru2021instance, li2020breaking,li2021polyak}, but it still remains open for robust MDPs. As the data generating mechanism is limited to generative model and Markovian in this paper, it is open whether the robust MDPs 
 could be solved efficiently if the behavior policy is changing with current Q-values. Moreover, from an empirical perspective, it also would be interesting to deploy our algorithms to some large-scale realistic applications.

\section*{Acknowledgements}
The authors thank Professor Martha White for valuable discussions with this project.

\bibliographystyle{plainnat}
\bibliography{bib/refer.bib}

\newpage
\appendix
\onecolumn
\begin{appendix}
	\onecolumn
	\begin{center}
		{\huge \textbf{Appendix}}
	\end{center}

\section{Proofs of Section~\ref{sec: prmdp}}
\label{apd: preli}

\begin{proof}[Proof of Proposition~\ref{prop: pen_def}]
    For any two $V_1, V_2\in\VM$, we have:
    \begin{align*}
        \TM_{\robp}V_1(s)-\TM_{\robp}V_2(s)\le\gamma\sup_{P\in\Delta(\SM)}\EB_{s'\sim P}|V_1(s')-V_2(s')|\le\gamma\|V_1-V_2\|_{\infty}.
    \end{align*}
    Thus, $\TM_{\robp}$ is a $\gamma$-contraction on $\VM$. Next, we prove the fixed point $V_{\robp}^*=\max_{\pi}V^{\pi}_{\robp}$. Firstly, for any fixed policy $\pi$, we define an operator:
    \begin{align*}
        \TM_{\robp}^\pi V(s):=\sum_{a\in\AM}\pi(a|s)R(s,a)+\gamma\sum_{a\in\AM}\pi(a|s)\inf_{P_a\in\Delta(\SM)}\left(\EB_{s'\sim P_a}V(s')+\lambda D_f(P_a\|P^*(\cdot|s,a))\right),
    \end{align*}
    where it is also a $\gamma$-contraction on $\VM$, and we denote the fixed point of $\TM_{\robp}^\pi$ by $V^*_{\pi}$ and $R^\pi(s):=\sum_{a\in\AM}\pi(a|s)R(s,a)$. By definition of $V_{\robp}^\pi$, we have:
    \begin{align*}
        V_{\robp}^\pi\ge\TM_{\robp}^{\pi} V_{\robp}^\pi\ge(\TM_{\robp}^{\pi})^2 V_{\robp}^\pi\ge\dots\ge(\TM_{\robp}^{\pi})^{\infty} V_{\robp}^\pi=V^*_{\pi}.
    \end{align*}
    On the contrary, denote $P^*_{\pi}$ as the solution to $\TM_{\robp}^\pi V^*_{\pi}=V^*_{\pi}$ and $P^*_{sa,\pi}:=P^*_{\pi}(\cdot|s,a)$, we have:
    \begin{align*}
        V^*_{\pi}(s)&=R^{\pi}(s)+\gamma\sum_{a\in\AM}\pi(a|s)\left(\EB_{s'\sim P_{sa,\pi}^*}V^*_{\pi}(s')+\lambda D_{f}(P_{sa,\pi}^*\|P^*(\cdot|s,a))\right)\\
        &=\EB_{\pi,P_{\pi}^*}\left[\sum_{t=0}^{\infty}\gamma^t(R(s_t,a_t)+\lambda\gamma D_f(P_{\pi}^*(\cdot|s_t,a_t)\|P^*(\cdot|s_t,a_t)))\Bigg{|}s_0=s\right]\\
        &\ge V_{\robp}^{\pi}(s).
    \end{align*}
    Thus, the fixed point $V_{\pi}^*=V^{\pi}_{\robp}$ for any fixed policy $\pi$. Similarly, by definition of $\TM_{\robp}$, for any policy $\pi$, we also have:
    \begin{align*}
        V^*_{\robp}\ge\TM^{\pi}_{\robp}V^*_{\robp}\ge(\TM^{\pi}_{\robp})^2 V^*_{\robp}\ge\dots\ge(\TM^{\pi}_{\robp})^{\infty}V^*_{\robp}=V^{\pi}_{\robp}.
    \end{align*}
    Taking maximum over $\pi$ on the RHS, we have $V_{\robp}^*\ge\max_{\pi}V_{\robp}^\pi$. Furthermore, denote $\pi^*$ as the solution to $\TM_{\robp}^* V^*_{\robp}=V^*_{\robp}$, note that $V^*_{\robp}$ is also the fixed point of the operator $\TM_{\robp}^{\pi^*}$, which means $V^*_{\robp}=V^{\pi^*}_{\robp}\le\max_{\pi} V^{\pi}_{\robp}$. Thus, we conclude that $V^*_{\robp}=\max_{\pi}V_{\robp}^\pi$.
\end{proof}

\begin{proof}[Proof of Theorem~\ref{thm: rholambda}]
    Without loss of generality, we assume $P^*(s'|s,a)>0$ for any $(s,a,s')\in\SM\times\AM\times\SM$. For notation simplicity, we denote $V_\lambda:=V_{\robp}^*$ and $Q_\lambda:=Q_{\robp}^*$, and $V_\rho:=V_{\robc}^*$ and $Q_\rho:=Q_{\robc}^*$. Firstly, for the penalized value function, by the fact $P\ll P^*$, we observe:
    \begin{align}
        \sup_{\pi, P\ll P^*}\EB_{P,\pi}\left[\left.\sum_{t=0}^{\infty}\gamma^t D_f(P(\cdot|s_t,a_t)\|P^*(\cdot|s_t,a_t))\right|s_0=s\right]<+\infty.
    \end{align}
    Thus, $V_\lambda(s)$ is continuous and non-decreasing w.r.t. $\lambda$ for any $s\in\SM$. Similarly, $Q_\lambda(s,a)$ is also continuous and non-decreasing w.r.t. $\lambda$ for any $(s,a)\in\SM\times\AM$. By the facts that $V_\lambda(s)\le V_{P^*}^*(s)$ and $Q_{\lambda}(s,a)\le Q_{P^*}(s,a)$, we know $\lim_{\lambda\rightarrow+\infty}V_\lambda:=V_{\infty}$ and $\lim_{\lambda\rightarrow+\infty}Q_\lambda:=Q_{\infty}$ exist. Next, we study the range of $V_\lambda(\cdot)$.
    \paragraph*{Case: $\lambda=0$.} In this case, we find $V_\lambda$ satisfies the following equation:
    \begin{align}
        V_\lambda(s) = \max_{a\in\AM}\left(R(s,a)+\gamma\inf_{s'}V_\lambda(s')\right).
    \end{align}
    \paragraph*{Case: $\lambda\rightarrow+\infty$.} By Bellman equation, for any $(s,a)\in\SM\times\AM$, we have:
    \begin{align}
        Q_\lambda(s,a)&=R(s,a)+\gamma\inf_{P\ll P^*_{s,a}}P^\top V_\lambda+\lambda D_f(P\|P^*_{s,a})\notag\\
        &=R(s,a)+\gamma P_{\lambda}^\top V_\lambda+\lambda D_f(P_{\lambda}\|P^*_{s,a}).
    \end{align}
    As $\lim_{\lambda\rightarrow+\infty}Q_{\lambda}$ exists, we have $\lim_{\lambda\rightarrow+\infty}\frac{Q_{\lambda}}{\lambda}=0$. By the fact $P_\lambda^\top V_\lambda$ is bounded, we have $\lim_{\lambda\to+\infty}D_f(P_\lambda\|P^*_{s,a})=0$. Then by Theorem 3.1 in \cite{csiszar1972class}, we have $\lim_{\lambda\to+\infty}\|P_\lambda-P^*_{s,a}\|_{1}=0$. Thus, we have:
    \begin{align}
        Q_{\infty}(s,a)&=R(s,a)+\gamma P^{*,\top}_{s,a}V_{\infty}+\lim_{\lambda\to+\infty}\lambda D_f(P_\lambda\|P^*_{s,a})\notag\\
        &:=R(s,a)+\gamma P^{*,\top}_{s,a}V_{\infty}+c(s,a),
    \end{align}
    where $c(s,a)\ge0$. Then, by definition of value function, we have $V_{P^*}^*\le V_{\infty}$. Thus, $V_{\infty}=V_{P^*}^*$. 
    
    Hence, for a given initial distribution $\mu$, $V_{\lambda}(\mu)$ is  continuous and non-decreasing w.r.t. $\lambda$ and $V_{\lambda}(\mu)\in[V_0(\mu), V_{P^*}^*(\mu))$. Then, we study the constrained value function. It is easy to obtain $V_{\rho}$ is non-increasing w.r.t. $\rho$. For any $\rho<\rho'$, we have:
    \begin{align}
        V_\rho(s)- V_{\rho'}(s)&\le\gamma\max_{a\in\AM}\left(\inf_{D_f(P\|P^*_{s,a})\le\rho}P^\top V_{\rho}-\inf_{D_f(P\|P^*_{s,a})\le\rho'}P^\top V_{\rho'}\right)\notag\\
        &\le\gamma\max_a\left(\inf_{D_f(P\|P^*_{s,a})\le\rho}P^\top V_{\rho}-\inf_{D_f(P\|P^*_{s,a})\le\rho'}P^\top V_{\rho}\right)+\gamma\|V_\rho-V_{\rho'}\|_{\infty}.
    \end{align}
    Thus, for any $\rho\not=\rho'$, we have:
    \begin{align}
        \label{eq: vrho}
        \|V_\rho-V_{\rho'}\|_{\infty}\le\frac{\gamma}{1-\gamma}\max_a\left|\inf_{D_f(P\|P^*_{s,a})\le\rho}P^\top V_{\rho}-\inf_{D_f(P\|P^*_{s,a})\le\rho'}P^\top V_{\rho}\right|.
    \end{align}
    From problem~\eqref{eq: drod}, we observe it is convex w.r.t. $\rho$. Thus, problem~\eqref{eq: drod} is continuous w.r.t. $\rho$. Combing with above inequality~\eqref{eq: vrho}, we obtain $V_\rho$ is continuous w.r.t. $\rho$. Next we study the range of $V_\rho$.
    \paragraph*{Case: $\rho=0$.} In this case, we find $V_\rho = V_{P^*}^*$.
    \paragraph*{Case: $\rho\to\infty$.} As $P\ll P^*$, we have:
    \begin{align}
        \sup_{(s,a)\in\SM\times\AM,P\ll P^*} D_f(P\|P_{s,a}^*)<+\infty,
    \end{align}
    and we denote $\rho^*(s,a):=\sup_{P\ll P^*_{s,a}}D_f(P\|P^*_{s,a})$. By Bellman equation, for any $s\in\SM$, we have:
    \begin{align}
        V_\rho(s)=\max_{a\in\AM}\left(R(s,a)+\gamma\inf_{D_f(P\|P^*_{s,a})\le\rho^*(s,a)}P^\top V_{\rho}\right)=\max_{a\in\AM}\left(R(s,a)+\gamma\inf_{s'\in\SM}V_\rho(s')\right),
    \end{align}
    which coincides with case $\lambda=0$ in penalized value function.

    Thus, for a given initial distribution $\mu$, $V_{\rho}(\mu)$ is non-increasing w.r.t. $\rho$ and $V_{\rho}(\mu)\in[V_0(\mu),\allowbreak V_{P^*}^*(\mu)]$. Finally, our result is obtained by intermediate value theorem.
\end{proof}

\begin{lem}
    \label{lem: chi2_range}
    Let $P$ be a probability measure on $(\SM,\FM)$, $V\in[0,\frac{1}{1-\gamma}]$, and $f(s)=(s-1)^2$, the optimal dual variable $\eta^*$ lies in $\Theta=[-\lambda,\frac{2}{1-\gamma}+2\lambda]$.
\end{lem}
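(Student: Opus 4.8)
The plan is to make the penalized dual objective fully explicit for $f(t)=(t-1)^2$, read off its first-order optimality condition, and then sandwich the optimizer between the two endpoints of $\Theta$ using only $0\le V\le (1-\gamma)^{-1}$. First I would compute the convex conjugate $f^*(u)=\sup_{t\ge 0}\{ut-(t-1)^2\}$. The unconstrained maximizer is $t=(u+2)/2$, which is admissible exactly when $u\ge -2$, so that $f^*(u)=\tfrac14\big((u+2)_+\big)^2-1$. Substituting $u=(\eta-V(s'))/\lambda$ into the dual objective $J(\eta)=-\lambda\,\EB_{s'\sim P}\big[f^*((\eta-V(s'))/\lambda)\big]+\eta$ from \eqref{eq: pdrod} and using $\lambda>0$ to pull the positive part through the scaling yields the closed form
\begin{align*}
    J(\eta)=-\frac{\EB_{s'\sim P}\big[(\eta+2\lambda-V(s'))_+^2\big]}{4\lambda}+\lambda+\eta,
\end{align*}
which is exactly the expression used in Section~\ref{sec: markovian}.

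Next I would observe that $J$ is concave (being the dual of the convex penalized problem \eqref{eq: pdrop}), is $C^1$ since $(\cdot)_+^2$ has continuous derivative $2(\cdot)_+$, and is coercive because $J(\eta)\to-\infty$ as $\eta\to+\infty$ (the quadratic term dominates) and as $\eta\to-\infty$ (then $J(\eta)\to\lambda+\eta$); hence a finite maximizer $\eta^*$ exists and is stationary. Differentiating gives $J'(\eta)=1-\tfrac{1}{2\lambda}g(\eta)$ with $g(\eta):=\EB_{s'\sim P}\big[(\eta+2\lambda-V(s'))_+\big]$, so the optimality condition is simply $g(\eta^*)=2\lambda$. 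The map $g$ is continuous, non-decreasing, and strictly increasing wherever it is positive (raising $\eta$ strictly increases each already-active summand), so the root of $g(\eta)=2\lambda>0$ is unique and characterizes $\eta^*$.

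It then remains to test the two endpoints and invoke monotonicity. For the lower end, at $\eta=-\lambda$ we have $\eta+2\lambda-V(s')=\lambda-V(s')\le\lambda$ because $V\ge 0$, hence $g(-\lambda)\le\lambda<2\lambda$, and monotonicity of $g$ forces $\eta^*>-\lambda$. For the upper end, at $\eta=\tfrac{2}{1-\gamma}+2\lambda$ we have $\eta+2\lambda-V(s')\ge\tfrac{1}{1-\gamma}+4\lambda>0$ because $V\le(1-\gamma)^{-1}$, so every summand is active and $g\big(\tfrac{2}{1-\gamma}+2\lambda\big)\ge\tfrac{1}{1-\gamma}+4\lambda>2\lambda$; again monotonicity forces $\eta^*<\tfrac{2}{1-\gamma}+2\lambda$. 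Combining the two strict inequalities places $\eta^*$ inside $\Theta=[-\lambda,\tfrac{2}{1-\gamma}+2\lambda]$.

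I expect the only delicate points to be the case split in computing $f^*$ (correctly enforcing the constraint $t\ge 0$, which is what produces the positive-part truncation) and justifying that stationarity genuinely pins down $\eta^*$, i.e. that $g$ is strictly increasing near its root so the monotonicity comparison is legitimate. The endpoint estimates themselves are immediate from $0\le V\le(1-\gamma)^{-1}$; in fact the elementary bound $(x)_+\ge x$ applied to $g(\eta^*)=2\lambda$ gives the even sharper $\eta^*\le\EB_{s'\sim P}[V(s')]\le(1-\gamma)^{-1}$, so the stated interval holds with room to spare.
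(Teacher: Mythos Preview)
Your argument is correct and takes a genuinely different route from the paper. You work directly with the first-order condition: after deriving the explicit dual $J(\eta)=-\tfrac{1}{4\lambda}\EB[(\eta+2\lambda-V)_+^2]+\lambda+\eta$, you differentiate, reduce optimality to $g(\eta^*)=2\lambda$ for the nondecreasing map $g(\eta)=\EB[(\eta+2\lambda-V)_+]$, and then simply evaluate $g$ at the two endpoints of $\Theta$ to sandwich $\eta^*$. The paper instead makes the substitution $\tilde\eta=\eta+2\lambda$ and argues at the level of function values: for $\tilde\eta\le 0$ the objective is linear and equals $-\lambda+\tilde\eta$, while for $\tilde\eta\ge \tfrac{2}{1-\gamma}+4\lambda$ a direct quadratic estimate shows the objective is $\le -\lambda$; concavity then traps $\tilde\eta^*$ in $[0,\tfrac{2}{1-\gamma}+4\lambda]$, and a second step invoking the primal bound $\sup J\ge 0$ sharpens the lower end to $\tilde\eta^*\ge\lambda$. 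Your derivative-based approach is more streamlined (both endpoints fall out of the same monotonicity argument, with no need for the auxiliary primal estimate), and it even yields the sharper $\eta^*\le\EB[V]$ you note at the end. The paper's approach has the minor advantage of not needing to justify differentiability or interior stationarity, relying only on concavity and value comparisons; but since $(\cdot)_+^2$ is $C^1$ and you verify coercivity, your justification is complete.
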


\begin{proof}[Proof of Lemma~\ref{lem: chi2_range}]
    By definition of $f^*(\cdot)$, we have $f^*(s)=(s/2+1)^2_{+}-1$ when $f(s)=(s-1)^2$. Thus, the dual problem can be written by:
    \begin{align*}
        \mathcal{R}(P,V)&=\sup_{\eta\in\RB}-\lambda\EB_{s\sim P}\left(\frac{\eta-V(s)}{2\lambda}+1\right)_{+}^2+\lambda+\eta\\
        &=\sup_{\tilde{\eta}\in\RB}-\frac{1}{4\lambda}\EB_{s\sim P}\left(\tilde{\eta}-V(s)\right)_{+}^2-\lambda+\tilde{\eta}.
    \end{align*}
    The last equality holds by replacing $\eta$ with $\tilde{\eta}-2\lambda$. We denote $g(\tilde{\eta})=-\frac{1}{4\lambda}\EB_{s\sim P}\left(\tilde{\eta}-V(s)\right)_{+}^2-\lambda+\tilde{\eta}$, which is concave in $\tilde{\eta}$. For $\tilde{\eta}\le0$, we have $g(\tilde{\eta})=-\lambda+\tilde{\eta}$. For $\tilde{\eta}\ge\frac{2}{1-\gamma}+4\lambda$, we have:
    \begin{align*}
        g(\tilde{\eta})&\overset{(a)}{=}-\frac{1}{4\lambda}\EB_{s\sim P}\left(\tilde{\eta}-V(s)\right)^2-\lambda+\tilde{\eta}\\
        &=-\frac{\tilde{\eta}^2-2(\EB_{s\sim P}V(s)+2\lambda)\tilde{\eta}+\EB_{s\sim P} V(s)^2+4\lambda^2}{4\lambda}\\
        &\overset{(b)}{\le}-\frac{\tilde{\eta}^2-2(\frac{1}{1-\gamma}+2\lambda)\tilde{\eta}+4\lambda^2}{4\lambda}\\
        &\le-\lambda,
    \end{align*}
    where $(a)$ holds by $\tilde{\eta}\ge\max_s V(s)$ here and $(b)$ holds by $V\in[0,\frac{1}{1-\gamma}]$. By $g(\tilde{\eta})$ being concave, the optimal solution $\tilde{\eta}^*\in\argmax g(\tilde{\eta})$ lies in $[0,\frac{2}{1-\gamma}+4\lambda]$. Moreover, we notice $\sup_{\tilde{\eta}\in\RB}g(\tilde{\eta})\ge0$ by the primal objective. Thus, $-\lambda+\tilde{\eta}^*\ge0$. Thus, $\tilde{\eta}^*\in[\lambda,\frac{2}{1-\gamma}+4\lambda]$, which concludes $\eta^*\in[-\lambda,\frac{2}{1-\gamma}+2\lambda]$.
\end{proof}

\begin{proof}[Proof of Theorem~\ref{thm: prmdp_stat}]
    We prove upper bound at first. We note that:
    \begin{align*}
        \left\|\widehat{V}_{\robp}^*-V_{\robp}^*\right\|_{\infty}&=\left\|\widehat{\TM}_{\robp} \widehat{V}_{\robp}^*-\TM_{\robp}V_{\robp}^*\right\|_{\infty}\\
        &\le\left\|\widehat{\TM}_{\robp} \widehat{V}_{\robp}^*-\widehat{\TM}_{\robp} V_{\robp}^*\right\|_{\infty}+\left\|\widehat{\TM}_{\robp}V_{\robp}^*-\TM_{\robp} V_{\robp}^*\right\|_{\infty}\\
        &\le\gamma \left\|\widehat{V}_{\robp}^*-V_{\robp}^*\right\|_{\infty}+\left\|\widehat{\TM}_{\robp}V_{\robp}^*-\TM_{\robp} V_{\robp}^*\right\|_{\infty}.
    \end{align*}
    Arranging terms, we have:
    \begin{align*}
        &\left\|\widehat{V}_{\robp}^*-V_{\robp}^*\right\|_{\infty}\le\frac{1}{1-\gamma}\left\|\widehat{\TM}_{\robp}V_{\robp}^*-\TM_{\robp} V_{\robp}^*\right\|_{\infty}\\
        &\le\frac{\gamma}{1-\gamma}\max_{s,a}\left|\inf_{Q\in\Delta(\SM)}\EB_{s'\sim Q} V_{\robp}^*(s')+\lambda D_f(Q\|\widehat{P}_{s,a})-\inf_{Q\in\Delta(\SM)}\EB_{s'\sim Q} V_{\robp}^*(s')+\lambda D_f(Q\|P^*_{s,a})\right|,
    \end{align*}
    where the last inequality holds by definition of $\TM_{\robp}$ and $\widehat{\TM}_{\robp}$. By Eqn.~\eqref{eq: pdrod} and applying $f(s)=(s-1)^2$, we also have:
    \begin{align*}
        \inf_{Q\in\Delta(\SM)}\EB_{s'\sim Q} V_{\robp}^*(s')+\lambda D_f(Q\|P^*_{s,a})=\sup_{\eta\in\Theta}-\frac{1}{4\lambda}\EB_{s\sim P^*_{s,a}}(\eta-V_{\robp}^*(s))_{+}^2-\lambda+\eta.
    \end{align*}
    Moreover, we denote $g(\eta,P)=-\frac{1}{4\lambda}\EB_{s\sim P}(\eta-V_{\robp}^*(s))_{+}^2-\lambda+\eta$, where we omit $(s,a)$ dependence for simplification. Next, we study the deviation $|g(\eta,P)-g(\eta,\widehat{P})|$, where $\widehat{P}(s)=\frac{1}{n}\sum_{k=1}^n \1(X_k=s)$ and $X_{k}\sim P(\cdot)$ are i.i.d. random variables. Denote $Y_k := -\frac{1}{4\lambda}\sum_{s\in\SM}\1(X_k=s)(\eta-V_{\robp}^*(s))_{+}^2$, we have $|g(\eta,P)-g(\eta,\widehat{P})|=|\frac{1}{n}\sum_{k=1}^n Y_k-\EB Y_1|$. By Lemma~\ref{lem: chi2_range}, when $\eta\in\Theta$, we have $0\le-Y_k\le\frac{(\frac{2}{1-\gamma}+4\lambda)^2}{4\lambda}\le16\max\left\{\frac{1}{\lambda(1-\gamma)^2},\lambda\right\}$. By Hoeffding's inequality, we have:
    \begin{align*}
        \PB\left(|g(\eta,P)-g(\eta,\widehat{P})|\ge16\max\left\{\frac{1}{\lambda(1-\gamma)^2},\lambda\right\}\sqrt{\frac{\ln\frac{2}{\delta}}{2n}}\right)\le\delta.
    \end{align*}
    With $\eta\in\Theta$, we can prove that $\frac{1}{4\lambda}\sum_{s\in\SM}P(X_k=s)(\eta-V_{\robp}^*(s))_{+}^2$ is $8\max\left\{\frac{1}{\lambda^2(1-\gamma)^2},1\right\}$-Lipschitz w.r.t. $\eta$. Then we take the $\varepsilon$-net of $\Theta$ as $\NM_{\varepsilon}$ w.r.t. metric $|\cdot|$, whose size is bounded by:
    \begin{align*}
        |\NM_{\varepsilon}|\le1+\frac{\frac{2}{1-\gamma}+4\lambda}{\varepsilon}\le 1+\frac{8\max\{\frac{1}{1-\gamma},\lambda\}}{\varepsilon}.
    \end{align*}
    Then we have:
    \begin{align*}
        \sup_{\eta\in\Theta}|g(\eta,\widehat{P})-g(\eta,P)|\le16\max\left\{\frac{1}{\lambda^2(1-\gamma)^2},1\right\}\varepsilon+\sup_{\eta\in\NM_{\varepsilon}}|g(\eta,\widehat{P})-g(\eta,P)|.
    \end{align*}
    Taking $\varepsilon=\frac{\lambda}{\sqrt{2n}}$, we have:
    \begin{align*}
        &\PB\left(\sup_{\eta\in\Theta}|g(\eta,\widehat{P})-g(\eta,P)|\ge16\max\left\{\frac{1}{\lambda(1-\gamma)^2},\lambda\right\}\frac{\left(1+\sqrt{\ln\frac{2|\NM_{\varepsilon}|}{\delta}}\right)}{\sqrt{2n}}\right)\\
        \le&\PB\left(\sup_{\eta\in\NM_{\varepsilon}}|g(\eta,\widehat{P})-g(\eta,P)|\ge16\max\left\{\frac{1}{\lambda(1-\gamma)^2},\lambda\right\}\sqrt{\frac{\ln\frac{2|\NM_{\varepsilon}|}{\delta}}{2n}}\right)\le\delta.
    \end{align*}
    Finally, with probability $1-\delta$, we have:
    \begin{align*}
        \left\|\widehat{V}_{\robp}^*-V_{\robp}^*\right\|_{\infty}&\le\frac{16\gamma}{1-\gamma}\max\left\{\frac{1}{\lambda(1-\gamma)^2},\lambda\right\}\sqrt{\frac{\ln\frac{2|\SM||\AM||\NM_{\varepsilon}|}{\delta}}{2n}}\\
        &\le\frac{16\gamma}{1-\gamma}\max\left\{\frac{1}{\lambda(1-\gamma)^2},\lambda\right\}\sqrt{\frac{\ln\left(\frac{2|\SM||\AM|}{\delta}(1+8\max\{\frac{1}{\lambda(1-\gamma)},1\}\sqrt{2n})\right)}{2n}}\\
        &=\widetilde{\OM}\left(\frac{1}{(1-\gamma)\sqrt{2n}}\max\left\{\frac{1}{\lambda(1-\gamma)^2},\lambda\right\}\right).
    \end{align*}
    Next, we turn to calculate the lower bound. We consider a 2-state and 1-action MDP, where the states are denoted by $s_0$ and $s_1$. The reward is designed by $r(s_0)=1$ and $r(s_1)=0$. The transition probability is $P(s_0|s_0)=p$, $P(s_1|s_0)=1-p$, and $P(s_1|s_1)=1$. By robust Bellman equation, we have:
    \begin{align*}
        V(s_0)=1+\gamma\inf_{0\le q\le1}qV(s_0)+\lambda D_f(q\|p),
    \end{align*}
    where $D_f(q\|p)=p f(\frac{q}{p})+(1-p)f(\frac{1-q}{1-p})$. And $V(s_1)=0$. Setting $f(s)=(s-1)^2$, we have:
    \begin{align*}
        V(s_0)=1+\frac{\lambda\gamma}{p(1-p)}\inf_{0\le q\le 1}\left[q-\left(p-\frac{V(s_0)p(1-p)}{2\lambda}\right)\right]^2 + \frac{\gamma V(s_0)}{2}\left(2p-\frac{V(s_0)p(1-p)}{2\lambda}\right).
    \end{align*}
    \paragraph*{Case 1: $p-\frac{V(s_0)p(1-p)}{2\lambda}<0$.} In this case, the optimal $q^*=0$, and we have:
    \begin{align*}
        V(s_0)&=1+\frac{\lambda\gamma}{p(1-p)}\left(p-\frac{V(s_0)p(1-p)}{2\lambda}\right)^2 + \frac{\gamma V(s_0)}{2}\left(2p-\frac{V(s_0)p(1-p)}{2\lambda}\right)\\
        &=1+\frac{\lambda\gamma p}{1-p}.
    \end{align*}
    Denote $f(p)=1+\frac{\lambda\gamma p}{1-p}$, it is easy to verify that $f(p)$ is monotonically increasing and convex on $(0,1)$. Thus, we have:
    \begin{align*}
        f(p+\delta)-f(p)\ge f'(p)\delta=\frac{\lambda\gamma\delta}{(1-p)^2}.
    \end{align*}
    Thus, by choosing $\delta=\frac{2\varepsilon(1-p)^2}{\lambda\gamma}$ and Lemma 16 in \cite{azar2013minimax}, with a constant probability, to distinguish model $p$ and $p+\delta$, the number of samples we need at least is:
    \begin{align*}
        \Omega\left(\frac{\lambda^2p}{\varepsilon^2(1-p)^3}\right).
    \end{align*}
    Finally, by choosing $p=2-\frac{1}{\gamma}$ for $\gamma>3/4$, the lower bound for this 2-state MDP is:
    \begin{align*}
        \Omega\left(\frac{\lambda^2}{\varepsilon^2(1-\gamma)^3}\right),
    \end{align*}
    where $\lambda<\frac{1-\gamma}{\gamma(3-2\gamma)}$.

    \paragraph*{Case 2: $p-\frac{V(s_0)p(1-p)}{2\lambda}\ge0$.} In this case, the optimal $q^*=p-\frac{V(s_0)p(1-p)}{2\lambda}$, and we have:
    \begin{align*}
        V(s_0)=1+\frac{\gamma V(s_0)}{2}\left(2p-\frac{V(s_0)p(1-p)}{2\lambda}\right).
    \end{align*}
    By calculation, we have:
    \begin{align*}
        V(s_0)&=\frac{-2\lambda(1-\gamma p)+2\sqrt{\lambda^2(1-\gamma p)^2+\lambda\gamma p(1-p)}}{\gamma p(1-p)}\\
        &=\frac{2}{(1-\gamma p)+\sqrt{(1-\gamma p)^2+\frac{\gamma p(1-p)}{\lambda}}}\\
        &:=\frac{2}{g(p)}.
    \end{align*}
    Thus, to satisfy the condition $p-\frac{V(s_0)p(1-p)}{2\lambda}\ge0$, we need to restrict the range of $\lambda$ to $\lambda\ge\frac{1-p}{2-\gamma p}$. Then we wish to distinguish two value functions at $s_0$ under different transition probabilities $p+\delta$ and $\delta$. We denote them by $V_p$ and $V_{p+\delta}$ respectively and we have the following fact about $g(p)$:

    \paragraph*{Fact: $g(p)$ is concave and monotonically decreasing in $p\in(1/2,1)$.} The first order derivative of $g(p)$ is:
    \begin{align*}
        g'(p)=-\gamma+\frac{-2\gamma(1-\gamma p)+\frac{\gamma(1-2p)}{\lambda}}{2\sqrt{(1-\gamma p)^2+\frac{\gamma p(1-p)}{\lambda}}},
    \end{align*}
    where we find $g'(p)\le 0$ for $p\in(1/2,1)$ and conclude that $g(p)$ is monotonically decreasing in $p$. Furthermore, the second order derivative of $g(p)$ is:
    \begin{align*}
        g''(p)&=\frac{4\gamma(\gamma-\frac{1}{\lambda})\left((1-\gamma p)^2+\frac{\gamma p(1-p)}{\lambda}\right)-\left(-2\gamma(1-\gamma p)+\frac{\gamma(1-2p)}{\lambda}\right)^2}{4\left((1-\gamma p)^2+\frac{\gamma p(1-p)}{\lambda}\right)^{\frac{3}{2}}}\\
        &=\frac{\frac{4\gamma(\gamma-1)}{\lambda}-\frac{\gamma^2}{\lambda^2}}{4\left((1-\gamma p)^2+\frac{\gamma p(1-p)}{\lambda}\right)^{\frac{3}{2}}},
    \end{align*}
    from which we also find $g''(p)\le0$ and conclude that $g(p)$ is concave in $p$.

    Thus, the deviation $V_{p+\delta}-V_p$ satisfies:
    \begin{align*}
        V_{p+\delta}-V_p&=\frac{2}{g(p+\delta)}-\frac{2}{g(p)}\\
        &=\frac{2(g(p)-g(p+\delta))}{g(p+\delta)g(p)}\\
        &\overset{(a)}{\ge}\frac{-2 g'(p)\delta}{g(p+\delta)g(p)}\\
        &\overset{(b)}{\ge}\frac{-2 g'(p)\delta}{g(p)^2},
    \end{align*}
    where we apply the fact $g(p)$ is concave in $p$ to (a) and the fact $g(p)$ is monotonically decreasing in $p$ to (b). By choosing $\delta=\frac{\varepsilon g(p)^2}{-g'(p)}$ and Lemma 16 in \cite{azar2013minimax}, with a constant probability, to distinguish model $p$ and $p+\delta$, the number of samples we need at least is:
    \begin{align*}
        \widetilde{\Omega}\left(\frac{g'(p)^2 p(1-p)}{\varepsilon^2 g(p)^4}\right).
    \end{align*}
    Then by choosing $p=2-1/\gamma$ and $\gamma\in(3/4,1)$, we have:
    \begin{align*}
        g(2-\frac{1}{\gamma})&=2(1-\gamma)+\sqrt{4(1-\gamma)^2+\frac{(1-\gamma)(2\gamma-1)}{\lambda\gamma}}\\
        &\le 4(1-\gamma)+\sqrt{\frac{(1-\gamma)(2\gamma-1)}{\lambda\gamma}}\\
        &\le 2(1-\gamma)\max\left\{4,\sqrt{\frac{(2\gamma-1)}{\lambda\gamma(1-\gamma)}}\right\},\\
        \left|g'(2-\frac{1}{\gamma})\right|&=\gamma+\frac{4\gamma+\frac{3\gamma-2}{\lambda\gamma(1-\gamma)}}{2\sqrt{4+\frac{2\gamma-1}{\lambda\gamma(1-\gamma )}}}\\
        &\ge\gamma+\frac{3\gamma-2}{2(2\gamma-1)}\sqrt{4+\frac{2\gamma-1}{\lambda\gamma(1-\gamma)}}\\
        &\ge\frac{3}{4}+\frac{1}{4}\sqrt{4+\frac{2\gamma-1}{\lambda\gamma(1-\gamma)}}\\
        &\ge\max\left\{\frac{3}{4},\frac{1}{4}\sqrt{\frac{2\gamma-1}{\lambda\gamma(1-\gamma)}}\right\}\\
        &\ge\frac{3}{16}\max\left\{4,\sqrt{\frac{(2\gamma-1)}{\lambda\gamma(1-\gamma)}}\right\}.
    \end{align*}
    Thus, the number of samples we need at least is:
    \begin{align*}
        \widetilde{\Omega}\left(\frac{1}{\varepsilon^2(1-\gamma)^3}\min\left\{\frac{1}{16},\frac{\lambda\gamma(1-\gamma)}{2\gamma-1}\right\}\right),
    \end{align*}
    where $\lambda\ge\frac{1-\gamma}{\gamma(3-2\gamma)}$. Finally, for an MDP with $|\SM|$ states, $|\AM|$ actions, we can aggregate the 2-states-1-action MDPs together like Lemma 17 does in \cite{azar2013minimax}.

\end{proof}

\section{Proofs of Section~\ref{sec: bandit}}
\label{apd: bandit}

\begin{proof}[Proof of Lemma~\ref{lem: bound_grad}]
    By Assumption~\ref{asmp: diam} and first-order condition, we have:
    \begin{align*}
        \EB_P \nabla f^*\left(\frac{\eta^*-R(X)}{\lambda}\right)=1.
    \end{align*}
    By Assumption~\ref{asmp: smooth}, for any $\eta\in\Theta$ and $0\le R(X_i)\le M$, we have:
    \begin{align*}
        \left|\frac{\partial J(\eta;X_i)}{\partial \eta}\right|&=\left|\nabla f^*\left(\frac{\eta-R(X_i)}{\lambda}\right)-1\right|\\&=\left|\nabla f^*\left(\frac{\eta-R(X_i)}{\lambda}\right)-\EB_P\nabla f^*\left(\frac{\eta^*-R(X)}{\lambda}\right)\right|\\
        &\le\EB_P\left|\nabla f^*\left(\frac{\eta-R(X_i)}{\lambda}\right)-\nabla f^*\left(\frac{\eta^*-R(X)}{\lambda}\right)\right|\\
        &\le\frac{1}{\lambda \sigma}\left(|\eta-\eta^*|+\EB|R(X_i)-R(X)|\right)\\
        &\le\frac{\diam(\Theta)+M}{\lambda \sigma}.
    \end{align*}
\end{proof}

\begin{lem}
    \label{lem: error_extra}
    For any $\eta$, the following inequality holds:
    \begin{align*}
        \alpha_t\left(J(\eta)-J(\eta_t)\right)\le\frac{(\eta_t-\eta)^2}{2}-\frac{(\eta_{t+1}-\eta)^2}{2}+\frac{\alpha_t^2C_g^2}{2}+\alpha_t(\eta_t-\eta)\left(\frac{\partial J(\eta_t; X_t)}{\partial\eta}-\frac{\partial J(\eta_t)}{\partial\eta}\right).
    \end{align*}
\end{lem}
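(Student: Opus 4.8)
The plan is to prove Lemma~\ref{lem: error_extra} by the standard one-step (descent-lemma style) analysis of projected stochastic gradient ascent, specialized to the concave scalar objective. Throughout I would suppress the fixed $(s,a)$ and $V$ and write $J(\eta)$ for the population objective $J^{(s,a)}(\eta,V)$, together with the shorthands $g_t:=\frac{\partial J(\eta_t;X_t)}{\partial\eta}$ for the stochastic gradient evaluated at the sample $X_t=s_t'(s,a)$ and $\bar g_t:=\frac{\partial J(\eta_t)}{\partial\eta}$ for the true gradient, so that the update in Eqn.~\eqref{eq: sga} reads $\eta_{t+1}=\eta_t+\alpha_t g_t$.

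First I would expand the squared distance to an arbitrary comparator $\eta$. From the update rule,
\begin{align*}
(\eta_{t+1}-\eta)^2=(\eta_t-\eta)^2+2\alpha_t g_t(\eta_t-\eta)+\alpha_t^2 g_t^2,
\end{align*}
which rearranges into the identity
\begin{align*}
\alpha_t g_t(\eta-\eta_t)=\frac{(\eta_t-\eta)^2}{2}-\frac{(\eta_{t+1}-\eta)^2}{2}+\frac{\alpha_t^2 g_t^2}{2}.
\end{align*}
Next I would invoke concavity of $J$ (noted after Eqn.~\eqref{eq: sga}, since $J$ is the dual objective of the penalized DRO problem~\eqref{eq: pdrop}) in the form $J(\eta)-J(\eta_t)\le \bar g_t(\eta-\eta_t)$, and multiply by $\alpha_t>0$. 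Splitting the true gradient as $\bar g_t=g_t+(\bar g_t-g_t)$ then gives
\begin{align*}
\alpha_t\bigl(J(\eta)-J(\eta_t)\bigr)\le\alpha_t g_t(\eta-\eta_t)+\alpha_t(\bar g_t-g_t)(\eta-\eta_t).
\end{align*}
Substituting the identity above for the first term and rewriting $(\bar g_t-g_t)(\eta-\eta_t)=(\eta_t-\eta)(g_t-\bar g_t)$ yields exactly the claimed bound once I replace $\frac{\alpha_t^2 g_t^2}{2}$ by $\frac{\alpha_t^2 C_g^2}{2}$, using the uniform gradient bound $|g_t|\le C_g$ from Lemma~\ref{lem: bound_grad}.

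There is no genuinely hard step here; the computation is routine, so the only points demanding care are the sign bookkeeping in the last (martingale-difference) term and the treatment of the projection. For the unprojected iteration of Eqn.~\eqref{eq: sga} the expansion above is an exact identity, so the argument goes through verbatim. If instead the projected update $\eta_{t+1}=\Pi_\Theta(\eta_t+\alpha_t g_t)$ of Algorithm~\ref{alg: onlinermdp_modification} is used, then for any comparator $\eta\in\Theta$ the non-expansiveness of the Euclidean projection gives $(\eta_{t+1}-\eta)^2\le(\eta_t+\alpha_t g_t-\eta)^2$, so the equality in the expansion becomes an inequality in precisely the direction needed and the final conclusion is unchanged. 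This lemma then feeds directly into Theorem~\ref{thm: bandit_converge}: summing over $t$ telescopes the distance terms, the $\frac{\alpha_t^2 C_g^2}{2}$ terms accumulate into the $O(\sqrt{T})$ regret contribution, and the last term vanishes in expectation because $\EB[g_t\mid\eta_t]=\bar g_t$.
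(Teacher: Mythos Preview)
Your proof is correct and essentially identical to the paper's: both expand the squared distance $(\eta_{t+1}-\eta)^2$, invoke the non-expansiveness of the projection, bound the stochastic gradient by $C_g$ via Lemma~\ref{lem: bound_grad}, and apply concavity of $J$ to pass from $\bar g_t(\eta-\eta_t)$ to $J(\eta)-J(\eta_t)$. The only cosmetic difference is that the paper starts directly from the projected update, whereas you first derive the identity for the unprojected iterate and then remark how projection enters; the ingredients and structure are the same.
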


\begin{proof}
    By $\eta_{t+1}=\Pi_{\Theta}\left(\eta_t+\alpha_t\frac{\partial J(\eta_t; X_t)}{\partial\eta}\right)$, we have
    \begin{align*}
        \frac{1}{2}(\eta_{t+1}-\eta)^2-\frac{1}{2}(\eta_{t}-\eta)^2&\overset{(a)}{\le}\frac{1}{2}\left(\eta_t-\eta+\alpha_t\frac{\partial J(\eta_t; X_t)}{\partial\eta}\right)^2-\frac{1}{2}(\eta_{t}-\eta)^2\\
        &=\frac{\alpha_t^2}{2}\left|\frac{\partial J(\eta_t; X_t)}{\partial\eta}\right|^2+\alpha_t(\eta_t-\eta)\frac{\partial J(\eta_t; X_t)}{\partial\eta}\\
        &\overset{(b)}{\le}\frac{\alpha_t^2 C_g^2}{2}+\alpha_t(\eta_t-\eta)\left(\frac{\partial J(\eta_t; X_t)}{\partial\eta}-\frac{\partial J(\eta_t)}{\partial\eta}\right)+\alpha_t(\eta_t-\eta)\frac{\partial J(\eta_t)}{\partial \eta}\\
        &\overset{(c)}{\le}\frac{\alpha_t^2 C_g^2}{2}+\alpha_t(\eta_t-\eta)\left(\frac{\partial J(\eta_t; X_t)}{\partial\eta}-\frac{\partial J(\eta_t)}{\partial\eta}\right)+\alpha_t(J(\eta_t)-J(\eta)),
    \end{align*}
    where $(a)$ holds by the projection property, $(b)$ holds by Lemma~\ref{lem: bound_grad}, and $(c)$ holds by concavity of $J(\eta)$.
\end{proof}


\begin{lem}
    \label{lem: bound_final}
    If $\alpha_t$ is non-decreasing, then we have:
    \begin{align*}
        \alpha_T\left(J(\eta^*) - J(\eta_T)\right)\le\frac{1}{T}\sum_{t=1}^T\alpha_t\left(J(\eta^*)- J(\eta_t)\right)+\sum_{k=1}^{T-1}\frac{1}{k(k+1)}\sum_{t=T-k}^T\alpha_t\left(J(\eta_{T-k}) -  J(\eta_t)\right).
    \end{align*}
\end{lem}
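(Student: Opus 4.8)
The plan is to recognize this as a weighted suffix-averaging identity in the spirit of Shamir--Zhang and to prove it by telescoping. Write $c_t:=J(\eta^*)-J(\eta_t)$, which is nonnegative because $\eta^*$ maximizes the concave objective $J$. For $0\le k\le T-1$ define the weighted suffix average
\[
    G_k:=\frac{1}{k+1}\sum_{t=T-k}^T\alpha_t c_t ,
\]
so that $G_0=\alpha_T\big(J(\eta^*)-J(\eta_T)\big)$ is exactly the left-hand side and $G_{T-1}=\frac1T\sum_{t=1}^T\alpha_t\big(J(\eta^*)-J(\eta_t)\big)$ is exactly the first term on the right. The identity $G_0=G_{T-1}+\sum_{k=1}^{T-1}(G_{k-1}-G_k)$ then reduces the lemma to showing that each telescoping increment $G_{k-1}-G_k$ is at most the $k$-th summand $\frac{1}{k(k+1)}\sum_{t=T-k}^T\alpha_t\big(J(\eta_{T-k})-J(\eta_t)\big)$ of the second term.

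First I would expand the increment. Using $\frac1k-\frac{1}{k+1}=\frac{1}{k(k+1)}$ and peeling off the $t=T-k$ term,
\[
    G_{k-1}-G_k=\frac{1}{k(k+1)}\sum_{t=T-k+1}^T\alpha_t c_t-\frac{1}{k+1}\alpha_{T-k}c_{T-k}.
\]
On the other side, substituting $J(\eta_{T-k})-J(\eta_t)=c_t-c_{T-k}$ rewrites the $k$-th summand of the second term as $\frac{1}{k(k+1)}\sum_{t=T-k}^T\alpha_t c_t-\frac{c_{T-k}}{k(k+1)}\sum_{t=T-k}^T\alpha_t$. Taking the difference (summand minus increment), the two $\alpha_{T-k}c_{T-k}$ contributions combine through $\frac{1}{k(k+1)}+\frac{1}{k+1}=\frac1k$, and everything collapses to the clean residual
\[
    R_k:=\frac{c_{T-k}}{k}\left(\alpha_{T-k}-\frac{1}{k+1}\sum_{t=T-k}^T\alpha_t\right),
\]
i.e.\ the $k$-th summand equals $(G_{k-1}-G_k)+R_k$.

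It remains to sign $R_k$, and this is the one delicate point. The parenthesis is $\alpha_{T-k}$ minus the plain average of $\alpha_{T-k},\dots,\alpha_T$, so it is nonnegative precisely when the step sizes are \emph{non-increasing} over that window, $\alpha_{T-k}\ge\alpha_{T-k+1}\ge\cdots\ge\alpha_T$; combined with $c_{T-k}\ge0$ this gives $R_k\ge0$. (This is the relevant regime: the schedule $\alpha_{t'}=\diam(\Theta)/(C_g\sqrt{t'})$ of Theorem~\ref{thm: bandit_converge} is decreasing, and the stated hypothesis should read ``non-increasing'' for the displayed inequality to point this way — with non-decreasing steps the residual is nonpositive and the inequality reverses.) Given $R_k\ge0$ we have $G_{k-1}-G_k\le$ the $k$-th summand, and summing over $k=1,\dots,T-1$ yields $G_0-G_{T-1}\le$ (second term), which rearranges to the claim. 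Equivalently, the exact gap is $\sum_{k=1}^{T-1}R_k=\sum_{j=1}^{T-1}\frac{c_j}{T-j}\big(\alpha_j-\frac{1}{T-j+1}\sum_{t=j}^T\alpha_t\big)\ge0$. The main obstacle is purely the bookkeeping that isolates $R_k$ and pins down its sign; the telescoping and the nonnegativity $c_t\ge0$ are routine.
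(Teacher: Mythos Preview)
Your proposal is correct and follows essentially the same Shamir--Zhang suffix-averaging argument as the paper: your $G_k$ is the paper's $S_k$, and your telescoping bound $G_{k-1}-G_k\le(\text{$k$-th summand})$ is equivalent to the paper's step $kS_{k-1}\le kS_k+\frac{1}{k+1}\sum_{t=T-k}^T\alpha_t(J(\eta_{T-k})-J(\eta_t))$. You are also right that the hypothesis should read ``non-increasing'' --- the schedule $\alpha_t=\diam(\Theta)/(C_g\sqrt{t})$ actually used in Theorem~\ref{thm: bandit_converge} is decreasing, and the paper's own justification ``(a) holds by $\alpha_t$ is non-decreasing'' carries the same typo.
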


\begin{proof}
    This proof technique was firstly derived in \cite{shamir2013stochastic}. Here we give a proof for a completeness consideration. We denote $S_k=\frac{1}{k+1}\sum_{t=T-k}^T\alpha_t\left(J(\eta^*) - J(\eta_t)\right)$, which satisfies:
    \begin{align*}
        (k+1)S_k &= \sum_{t=T-k}^T\alpha_t\left(J(\eta^*)- J(\eta_{T-k})\right) + \sum_{t=T-k}^T\alpha_t\left(J(\eta_{T-k}) - J(\eta_t)\right)\\
        &\overset{(a)}{\le}(k+1)\alpha_{T-k}\left(J(\eta^*)-J(\eta_{T-k})\right)+\sum_{t=T-k}^T\alpha_t\left(J(\eta_{T-k}) -  J(\eta_t)\right),
    \end{align*}
    where $(a)$ holds by $\alpha_t$ is non-decreasing. Then by definition of $S_k$, we have:
    \begin{align*}
        k S_{k-1} &= (k+1)S_k -\alpha_{T-k}\left(J(\eta^*)- J(\eta_{T-k})\right)\\
        &\le k S_k+\frac{1}{k+1}\sum_{t=T-k}^T\alpha_t\left(J(\eta_{T-k}) - J(\eta_t)\right).
    \end{align*}
    Summing $k$ from $k=1,...,T-1$, we have the final result:
    \begin{align*}
        S_0\le S_{T-1}+\sum_{k=1}^{T-1}\frac{1}{k(k+1)}\sum_{t=T-k}^T\alpha_t\left(J(\eta_{T-k}) -  J(\eta_t)\right).
    \end{align*}
\end{proof}

\begin{proof}[Proof of Theorem~\ref{thm: bandit_converge}]
    Firstly, we omit the $(s,a)$ dependence. From Lemma~\ref{lem: bound_final}, it is clear that we need upper bounds for the following two terms:
    \begin{align*}
        \Delta_1 &= \frac{1}{T}\sum_{t=1}^T\alpha_t(J(\eta^*)- J(\eta_t)),\\
        \Delta_2 &= \sum_{k=1}^{T-1}\frac{1}{k(k+1)}\sum_{t=T-k}^T\alpha_t\left(J(\eta_{T-k}) -  J(\eta_t)\right).
    \end{align*}
    For $\Delta_1$, applying Lemma~\ref{lem: error_extra}, we have:
    \begin{align}
        \Delta_1&\le\frac{1}{2T}(\eta_1-\eta^*)^2+\frac{C_g^2}{2T}\sum_{t=1}^T\alpha_t^2+\frac{1}{T}\sum_{t=1}^T\alpha_t(\eta_t-\eta^*)\left(\frac{\partial J(\eta_t; X_t)}{\partial\eta}-\frac{\partial J(\eta_t)}{\partial\eta}\right)\notag\\
        &\overset{(a)}{\le}\frac{\diam(\Theta)^2}{2T}+\frac{\diam(\Theta)^2(1+\ln T)}{2T}+\frac{1}{T}\sum_{t=1}^T\alpha_t(\eta_t-\eta^*)\left(\frac{\partial J(\eta_t; X_t)}{\partial\eta}-\frac{\partial J(\eta_t)}{\partial\eta}\right),
        \label{eq: delta_1}
    \end{align}
    where $(a)$ holds by $\alpha_t=\frac{\diam(\Theta)}{C_g\sqrt{ t}}$. 
    For $\Delta_2$, by Lemma~\ref{lem: error_extra}, we have:
    \begin{align*}
        \Delta_2&\le\sum_{k=1}^{T-1}\frac{1}{k(k+1)}\sum_{t=T-k}^T\frac{C_g^2\alpha_t^2}{2}+\alpha_t(\eta_t-\eta_{T-k})\left(\frac{\partial J(\eta_t; X_t)}{\partial\eta}-\frac{\partial J(\eta_t)}{\partial\eta}\right).
    \end{align*}
    Letting $\alpha_t=\frac{\diam(\Theta)}{C_g\sqrt{t}}$, we notice
    \begin{align*}
        \sum_{k=1}^{T-1}\frac{1}{k(k+1)}\sum_{t=T-k}^T\frac{C_g^2\alpha_t^2}{2}&=\frac{\diam(\Theta)^2}{2}\sum_{k=1}^{T-1}\frac{1}{k(k+1)}\sum_{t=T-k}^T\frac{1}{t}\\
        &=\frac{\diam(\Theta)^2}{2T}\sum_{k=1}^{T-1}\frac{1}{k(k+1)}+\frac{\diam(\Theta)^2}{2}\sum_{k=1}^{T-1}\frac{1}{k(k+1)}\sum_{t=T-k}^{T-1}\frac{1}{t}\\
        &\le\frac{\diam(\Theta)^2}{2T}+\frac{\diam(\Theta)^2}{2}\sum_{k=1}^{T-1}\sum_{t=T-k}^{T-1}\frac{1}{tk(k+1)}\\
        &\overset{(a)}{=}\frac{\diam(\Theta)^2}{2T}+\frac{\diam(\Theta)^2}{2}\sum_{k=1}^{T-1}\sum_{t=1}^{k}\frac{1}{(T-t)k(k+1)}\\
        &\overset{(b)}{=}\frac{\diam(\Theta)^2}{2T}+\frac{\diam(\Theta)^2}{2}\sum_{t=1}^{T-1}\sum_{k=t}^{T-1}\frac{1}{(T-t)k(k+1)}\\
        &=\frac{\diam(\Theta)^2}{2T}+\frac{\diam(\Theta)^2}{2}\sum_{t=1}^{T-1}\frac{1}{tT}\\
        &\le\frac{\diam(\Theta)^2(2+\ln T)}{2T},
    \end{align*}
    where $(a)$ holds by variable substitution of $t$, $(b)$ holds by interchanging the order of summation. Thus, we have:
    \begin{align*}
        \Delta_2\le\frac{\diam(\Theta)^2(2+\ln T)}{2T}+\sum_{k=1}^{T-1}\frac{1}{k(k+1)}\sum_{t=T-k}^T\alpha_t(\eta_t-\eta_{T-k})\left(\frac{\partial J(\eta_t; X_t)}{\partial\eta}-\frac{\partial J(\eta_t)}{\partial\eta}\right).
    \end{align*}
    Combining the upper bounds of $\Delta_1$ and $\Delta_2$, and denote $Z_t=\frac{\partial J(\eta_t; X_t)}{\partial\eta}-\frac{\partial J(\eta_t)}{\partial\eta}$, we have:
    \begin{align*}
        &\alpha_T(J(\eta^*)-J(\eta_T))\le\Delta_1+\Delta_2\notag\\
        &\le\frac{\diam(\Theta)^2(2+\ln T)}{T}+\frac{1}{T}\sum_{t=1}^T\alpha_t(\eta_t-\eta^*)Z_t+\sum_{k=1}^{T-1}\frac{1}{k(k+1)}\sum_{t=T-k}^T\alpha_t(\eta_t-\eta_{T-k})Z_t\notag\\
        &=\frac{\diam(\Theta)^2(2+\ln T)}{T}+\frac{1}{T}\sum_{t=1}^T\alpha_t(\eta_t-\eta^*)Z_t+\sum_{t=2}^{T}Z_t\left(\sum_{k=T-t+1}^{T-1}\frac{\alpha_t(\eta_t-\eta_{T-k})}{k(k+1)}\right).
    \end{align*}
    It is worth noticing that, for any $\lambda\in\RB$, $\EB[Z_t|\FM_{t-1}]=0$ and $\EB[\exp(\lambda Z_t)|\FM_{t-1}]\le\exp\left(\frac{\lambda^2C_g^2}{2}\right)$. As $\eta\in\Theta$, for any $\lambda\in\RB$, we have:
    \begin{align*}
        \EB\left[\exp\left(\frac{\lambda}{T}\sum_{t=1}^T\alpha_t(\eta_t-\eta^*)Z_t\right)\right]&\le\exp\left(\frac{2\lambda^2\diam(\Theta)^2C_g^2\sum_{t=1}^T\alpha_t^2}{T^2}\right)\notag\\
        &\le\exp\left(\frac{2\lambda^2\diam(\Theta)^4(1+\ln T)}{T^2}\right).
    \end{align*}
    Furthermore, we notice that:
    \begin{align*}
        \left(\sum_{k=T-t+1}^{T-1}\frac{\alpha_t(\eta_t-\eta_{T-k})}{k(k+1)}\right)^2&\le\alpha_t^2\diam(\Theta)^2\left(\sum_{k=T-t+1}^{T-1}\frac{1}{k(k+1)}\right)^2=\frac{\alpha_t^2\diam(\Theta)^2(t-1)}{T(T-t+1)}.
    \end{align*}
    Then, for any $\lambda\in\RB$, it implies:
    \begin{align*}
        \EB\left[\exp\left(\lambda\sum_{t=2}^{T}Z_t\left(\sum_{k=T-t+1}^{T-1}\frac{\alpha_t(\eta_t-\eta_{T-k})}{k(k+1)}\right)\right)\right]&\le\exp\left(2\lambda^2\sum_{t=2}^T\frac{\alpha_t^2\diam(\Theta)^2C_g^2(t-1)}{T(T-t+1)}\right)\notag\\
        &\le\exp\left(\frac{2\lambda^2\diam(\Theta)^4(1+\ln T)}{T}\right).
    \end{align*}
    Now we take $(s,a)$-dependence into consideration. By Lemma~\ref{lem: max_gaussian}, we have:
    \begin{align*}
        &\EB\left[\max_{(s,a)\in\SM\times\AM}\left(\sup_{\eta}J^{(s,a)}(\eta)-J^{(s,a)}(\eta_T(s,a))\right)\right]\notag\\
        &\le\frac{\diam(\Theta)C_g(2+\ln T)}{\sqrt{T}}+\frac{4\sqrt{2}\diam(\Theta)C_g\sqrt{(1+\ln T)\ln |\SM||\AM|}}{\sqrt{T}}\notag\\
        &\le\frac{\diam(\Theta)C_g(2+\ln T)(4\sqrt{2\ln|\SM||\AM|}+1)}{\sqrt{T}}.
    \end{align*}
\end{proof}

\section{Proofs of Section~\ref{sec: onlinermdp}}
\label{apd: onlinermdp}

\begin{lem}
    \label{lem: concen_error}
    For $N_t$ and $w\in\RB$, when $(1-\beta_t)\beta_{t-1}\le\beta_t$, we have: 
    \begin{align*}
        \EB\exp(w N_{t+1}(s,a))\le\exp\left(\frac{w^2\beta_t(1+\gamma C_M)^2}{2}\right).
    \end{align*}
\end{lem}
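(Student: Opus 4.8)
The plan is to bound the moment generating function (MGF) $\EB\exp(w N_{t+1}(s,a))$ by induction on $t$, following the sub-Gaussian recursion technique of \citet{wainwright2019stochastic}. Fix $(s,a)$ and drop it from the notation. The update reads $N_{t+1}=(1-\beta_t)N_t+\beta_t\zeta_t$ with $N_0=0$, where $\zeta_t:=\varepsilon_{r,t}+\gamma I_{t,1}$. First I would verify that $\zeta_t$ is a conditionally mean-zero, conditionally sub-Gaussian increment relative to the filtration $(\GM_{t-1})$. Writing $g_t:=r_t+\gamma\widehat{J}_t=r_t+\gamma J(\eta_{t,T'},V_t;s_t')$, the inner-loop iterate $\eta_{t,T'}$ and the value estimate $V_t$ are $\GM_{t-1}$-measurable, so $\EB[\widehat{J}_t\mid\GM_{t-1}]=\tilde{J}_t$ and $\EB[r_t\mid\GM_{t-1}]=R(s,a)$; hence $\zeta_t=g_t-\EB[g_t\mid\GM_{t-1}]$ and $\EB[\zeta_t\mid\GM_{t-1}]=0$.

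Second, I would establish the conditional sub-Gaussian bound with the intended constant. Given $\GM_{t-1}$, Assumption~\ref{asmp: bound_f} gives $|\widehat{J}_t|\le C_M$ and $r_t\in[0,1]$, so $g_t$ lies in an interval of width at most $1+2\gamma C_M$. Hoeffding's lemma for bounded centered variables then yields $\EB[\exp(w\beta_t\zeta_t)\mid\GM_{t-1}]\le\exp(w^2\beta_t^2(1+2\gamma C_M)^2/8)$, and the crude inequality $(1+2\gamma C_M)^2\le 4(1+\gamma C_M)^2$ upgrades this to $\exp(w^2\beta_t^2(1+\gamma C_M)^2/2)$, matching the target constant.

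Third, I would run the induction. Since $N_t$ and $\beta_t$ are $\GM_{t-1}$-measurable, $\EB[\exp(wN_{t+1})\mid\GM_{t-1}]=\exp(w(1-\beta_t)N_t)\,\EB[\exp(w\beta_t\zeta_t)\mid\GM_{t-1}]$. Taking total expectation, bounding the last factor by the second step, and invoking the inductive hypothesis $\EB\exp(w'N_t)\le\exp(w'^2\beta_{t-1}(1+\gamma C_M)^2/2)$ with $w'=w(1-\beta_t)$, I obtain $\EB\exp(wN_{t+1})\le\exp\big(\tfrac{w^2(1+\gamma C_M)^2}{2}[(1-\beta_t)^2\beta_{t-1}+\beta_t^2]\big)$. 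The hypothesis $(1-\beta_t)\beta_{t-1}\le\beta_t$ gives $(1-\beta_t)^2\beta_{t-1}+\beta_t^2\le(1-\beta_t)\beta_t+\beta_t^2=\beta_t$, which closes the induction; the base case $t=0$ follows from $N_1=\beta_0\zeta_0$ together with $\beta_0^2\le\beta_0$.

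The main obstacle is pinning the sub-Gaussian parameter to exactly $(1+\gamma C_M)^2$ rather than a larger quantity: the naive range of the centered increment $\zeta_t$ is $2(1+\gamma C_M)$, so one must instead apply Hoeffding's lemma to the \emph{uncentered} bounded variable $g_t$ (width $1+2\gamma C_M$) and absorb the slack through $(1+2\gamma C_M)^2\le 4(1+\gamma C_M)^2$. A secondary point requiring care is the two nested filtrations: $\zeta_t$ is centered and sub-Gaussian only conditionally on $\GM_{t-1}$, not on $\FM_{t-1}$, since the inner-loop samples defining $\eta_{t,T'}$ must already be frozen before the fresh draw $(r_t,s_t')$ that generates $\zeta_t$ is taken.
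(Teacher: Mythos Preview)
Your proof is correct and follows the same approach as the paper: induction on $t$, conditional Hoeffding for the increment $\zeta_t=\varepsilon_{r,t}+\gamma I_{t,1}$ given $\GM_{t-1}$, and the recursion-closing inequality $(1-\beta_t)^2\beta_{t-1}+\beta_t^2\le\beta_t$. Your justification of the sub-Gaussian constant---working with the uncentered $g_t$ of width $1+2\gamma C_M$ and then using $(1+2\gamma C_M)^2\le4(1+\gamma C_M)^2$---is in fact more careful than the paper's, which simply asserts $|\zeta_t|\le1+\gamma C_M$ before invoking Hoeffding.
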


\begin{proof}[Proof of Lemma~\ref{lem: concen_error}]
    We prove the claim by induction on $t$. By Assumption~\ref{asmp: bound_f}, we have:
    \begin{align*}
        |\varepsilon_{t,1}(s,a)+\gamma I_{t,1}(s,a)|\le 1+\gamma C_M.
    \end{align*}
    By Hoeffding's Lemma, we have:
    \begin{align*}
        \EB[\exp(w(\varepsilon_{t,1}+\gamma I_{t,1}(s,a)))|\GM_{t-1}]\le\exp\left(\frac{w^2(1+\gamma C_M)^2}{2}\right).
    \end{align*}
    Therefor, the claim holds for $t=0$. Now we assume the claim holds for $t-1$:
    \begin{align*}
        \EB\exp(w N_{t}(s,a))\le\exp\left(\frac{w^2\beta_{t-1}(1+\gamma C_M)^2}{2}\right)\,.
        \label{eq: induction}
    \end{align*}
    Then, for $N_{t+1}(s,a)$, we have:
    \begin{align*}
        \EB\exp(w N_{t+1}(s,a))&=\EB\exp(w(1-\beta_t) N_{t}(s,a)+w\beta_t(\varepsilon_{r,t}(s,a)+\gamma I_{t,1}(s,a)))\\
        &=\EB\left(\exp(w(1-\beta_t)N_t(s,a))\cdot \EB[\exp(w\beta_t(\varepsilon_{r,t}(s,a)+\gamma I_{t,1}(s,a)))|\GM_{t-1}]\right)\\
        &\le\EB\exp(w(1-\beta_t)N_t(s,a))\cdot\exp\left(\frac{w^2\beta_t^2(1+\gamma C_M)^2}{2}\right)\\
        &\le\exp\left(\frac{w^2(1-\beta_t)^2\beta_{t-1}(1+\gamma C_M)^2}{2}\right)\cdot\exp\left(\frac{w^2\beta_t^2(1+\gamma C_M)^2}{2}\right),
    \end{align*}
    where the last inequality holds by assumption holding for $t-1$. Then we have:
    \begin{align*}
        \EB\exp(w N_{t+1}(s,a))\le\exp\left(\frac{w^2((1-\beta_t)^2\beta_{t-1}+\beta_t^2)(1+\gamma C_M)^2}{2}\right).
    \end{align*}
    By $(1-\beta_t)\beta_{t-1}\le\beta_t$, we finally have:
    \begin{align*}
        \EB\exp(w N_{t+1}(s,a))\le\exp\left(\frac{w^2\beta_t(1+\gamma C_M)^2}{2}\right).
    \end{align*}
\end{proof}

\begin{proof}[Proof of Lemma~\ref{lem: expect_N}]
    By Lemma~\ref{lem: concen_error}, we have:
    \begin{align*}
        \EB\exp(w\|N_t\|_{\infty})&\le\sum_{(s,a)\in\SM\times\AM}\EB\exp(w|N_t(s,a)|)\\
        &\le\sum_{(s,a)\in\SM\times\AM}\EB\exp(w N_t(s,a))+\EB\exp(-w N_t(s,a))\\
        &\le2|\SM||\AM|\exp\left(\frac{w^2\beta_{t-1}(1+\gamma C_M)^2}{2}\right).
    \end{align*}
    Thus, the tail bound of $\|N_t\|_{\infty}$ satisfies:
    \begin{align*}
        \PB\left(\|N_t\|_\infty\ge\tau\right)\le2|\SM||\AM|\exp\left(-\frac{\tau^2}{2\beta_{t-1}(1+\gamma C_M)^2}\right).
    \end{align*}
    By choosing $\tau_0=\sqrt{2\beta_{t-1}(1+\gamma C_M)^2\ln(2|\SM||\AM|)}$, the expectation of $\|N_t\|_{\infty}$ satisfies:
    \begin{align*}
        \EB\|N_t\|_{\infty}=\int_{0}^{+\infty}\PB(\|N_t\|_{\infty}\ge \tau)d\tau&=\int_{0}^{\tau_0}\PB(\|N_t\|_{\infty}\ge\tau)d\tau +\int_{\tau_0}^{+\infty}\PB(\|N_t\|_{\infty}\ge\tau)d\tau\\
        &\le\tau_0+\int_{\tau_0}^{+\infty}\PB(\|N_t\|_{\infty}\ge\tau)d\tau\\
        &\overset{(a)}{\le}\tau_0+\frac{2\beta_{t-1}(1+\gamma C_M)^2}{\tau_0}\\
        &\le2\sqrt{2\beta_{t-1}(1+\gamma C_M)^2\ln(2|\SM||\AM|)},
    \end{align*}
    where we use $\int_{c}^{+\infty}\exp(-t^2)dt\le\int_{c}^{+\infty}\exp(-ct)dt$ in (a).
\end{proof}

\begin{proof}[Proof of Lemma~\ref{lem: bound_delta}]
    By the fact $-(a_t+b_t+c_t)\1+N_t\le\Delta_t\le(a_t+b_t+c_t)\1+N_t$, we have:
    \begin{align*}
        \EB\|\Delta_T\|_{\infty}&\le a_T+\EB b_T+\EB c_T+\EB\|N_T\|_{\infty}\\
        &= \|\Delta_0\|_{\infty}\cdot\left(1-\sum_{t=0}^{T-1}\beta_{t,T-1}\right)+\frac{\gamma}{1-\gamma}\sum_{t=0}^{T-1}\beta_{t,T-1}\left(\EB\|N_t\|_{\infty}+\EB\|I_{t,2}\|_{\infty}\right)+\EB\|N_{T}\|_{\infty}\\
        &\le \|\Delta_0\|_{\infty}\cdot\left(1-\sum_{t=0}^{T-1}\beta_{t,T-1}\right)+\frac{\gamma}{1-\gamma}\sum_{t=0}^{T-1}\beta_{t,T-1}\left(\EB\|N_t\|_{\infty}+\varepsilon_{\text{opt}}\right)+\EB\|N_{T}\|_{\infty}.
    \end{align*}
    Combining with Lemma~\ref{lem: expect_N}, we have:
    \begin{align*}
        \EB\|\Delta_T\|_{\infty}\le\|\Delta_0\|_{\infty}\cdot\left(1-\sum_{t=0}^{T-1}\beta_{t,T-1}\right)+\frac{\gamma}{1-\gamma}\sum_{t=0}^{T-1}\beta_{t,T-1}\left(\sqrt{\beta_{t-1}}C_{N}+\varepsilon_{\text{opt}}\right)+\sqrt{\beta_T} C_N,
    \end{align*}
    where $C_N:=2\sqrt{2(1+\gamma C_M)^2\ln(2|\SM||\AM|)}$.
\end{proof}

\begin{lem}
    \label{lem: coeff_cal}
    For $\beta_t=\frac{1}{1+(1-\gamma)(t+1)}$, we have:
    \begin{align*}
        \prod_{t=0}^{T-1}(1-\beta_t(1-\gamma))=\frac{1}{1+(1-\gamma)T}.
    \end{align*}
\end{lem}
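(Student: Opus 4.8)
The plan is to reduce the product to a telescoping form by computing the single factor $1-\beta_t(1-\gamma)$ in closed form. First I would substitute $\beta_t=\frac{1}{1+(1-\gamma)(t+1)}$ into $1-\beta_t(1-\gamma)$ and combine over a common denominator:
\[
    1-\beta_t(1-\gamma)=1-\frac{1-\gamma}{1+(1-\gamma)(t+1)}=\frac{1+(1-\gamma)(t+1)-(1-\gamma)}{1+(1-\gamma)(t+1)}=\frac{1+(1-\gamma)t}{1+(1-\gamma)(t+1)}.
\]

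Next I would introduce the shorthand $a_t:=1+(1-\gamma)t$, so that the identity just derived reads $1-\beta_t(1-\gamma)=\frac{a_t}{a_{t+1}}$. The product then becomes
\[
    \prod_{t=0}^{T-1}(1-\beta_t(1-\gamma))=\prod_{t=0}^{T-1}\frac{a_t}{a_{t+1}},
\]
which telescopes: every numerator $a_{t+1}$ cancels against the denominator of the preceding factor, leaving only $\frac{a_0}{a_T}$. Since $a_0=1$ and $a_T=1+(1-\gamma)T$, this equals $\frac{1}{1+(1-\gamma)T}$, as claimed.

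There is no genuine obstacle here; the only point requiring a line of care is the algebraic simplification of the numerator in the first display, where the two copies of $(1-\gamma)$ must be combined correctly to yield $1+(1-\gamma)t$ rather than an off-by-one expression. Once that factor is written as $a_t/a_{t+1}$, the telescoping cancellation is immediate and the result follows. This lemma is purely a bookkeeping identity that feeds the explicit rate computation in Theorem~\ref{thm: Delta_converge}, so beyond verifying the cancellation no further estimates are needed.
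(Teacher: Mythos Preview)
Your proposal is correct and matches the paper's approach: the paper's own proof simply states ``The result is obtained by calculation directly,'' and your telescoping computation is exactly that direct calculation spelled out.
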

\begin{proof}[Proof of Lemma~\ref{lem: coeff_cal}]
    The result is obtained by calculation directly.
\end{proof}

\begin{proof}[Proof of Theorem~\ref{thm: Delta_converge}]
    By Lemma~\ref{lem: coeff_cal}, we have:
    \begin{align*}
        1-\sum_{t=0}^{T-1}\beta_{t,T-1}&=\frac{1}{1+(1-\gamma)T},\\
        \beta_{t,T-1}&=\frac{1-\gamma}{1+(1-\gamma)T}.
    \end{align*}
    Combining with Lemma~\ref{lem: bound_delta}, we have:
    \begin{align*}
        \EB\|\Delta_T\|_{\infty}&\le\frac{\|\Delta_0\|_{\infty}}{1+(1-\gamma)T}+\frac{\gamma C_N}{1+(1-\gamma)T}\sum_{t=0}^{T-1}\sqrt{\beta_{t-1}}+\frac{\gamma T\varepsilon_{\text{opt}}}{1+(1-\gamma)T}+\sqrt{\beta_T}C_N\\
        &\le\frac{\|\Delta_0\|_{\infty}}{1+(1-\gamma)T}+\frac{2C_N}{\sqrt{(1-\gamma)^3T}}+\frac{\varepsilon_{\text{opt}}}{1-\gamma}+\frac{C_N}{\sqrt{1+(1-\gamma)T}}.
    \end{align*}
    Combining with Theorem~\ref{thm: bandit_converge}, we finally have
    \begin{align*}\EB\|\Delta_T\|_{\infty}\le\frac{\|\Delta_0\|_{\infty}}{1+(1-\gamma)T}+\frac{2C_N}{\sqrt{(1-\gamma)^3T}}+\frac{\diam(\Theta)C_g(2+\ln T')}{(1-\gamma)\sqrt{T'}}+\frac{C_N}{\sqrt{1+(1-\gamma)T}}.
    \end{align*}
\end{proof}

\section{Proofs of Section~\ref{sec: markovian}}
\label{apd: markovian}
\begin{proof}[Proof of Lemma~\ref{asmp: eta_lip_v}]
    To ease the notations, we omit the $(s,a)$ dependence here. We notice $\eta_V^*$ satisfies the first order condition:
    \[
        \EB_{P^*}\nabla f^*\left(\frac{\eta_V^*-V(s')}{\lambda}\right)=1.
    \]
    Differential by $V$, we have:
    \begin{align*}
        \EB_{P^*}\nabla^2 f^*\left(\frac{\eta_V^*-V(s')}{\lambda}\right)\frac{\partial \eta_V^*}{\partial V}=\EB_{P^*}\nabla^2 f^*\left(\frac{\eta_V^*-V(s')}{\lambda}\right)\frac{\partial V(s')}{\partial V}.
    \end{align*}
    Taking $\|\cdot\|_{1}$ both sides, we have:
    \begin{align*}
        \left\|\frac{\partial \eta_V^*}{\partial V}\right\|_1\cdot\left|\EB_{P^*}\nabla^2 f^*\left(\frac{\eta_V^*-V(s')}{\lambda}\right)\right|=\EB_{P^*}\left|\nabla^2 f^*\left(\frac{\eta_V^*-V(s')}{\lambda}\right)\right|.
    \end{align*}
    Moreover, we notice $f^*(\cdot)$ is a convex function, thus $\nabla^2 f^*(\cdot)\ge0$. Thus, we have $ \left\|\frac{\partial \eta_V^*}{\partial V}\right\|_1=1$. Finally, for any $V_1,V_2\in[0,(1-\gamma)^{-1}]^{|\SM|}$, we have:
    \begin{align*}
        \left|\eta_{V_1}^*-\eta_{V_2}^*\right|=\left|\left\langle\frac{\partial \eta_{\tilde{V}}^*}{\partial V},V_1-V_2\right\rangle\right|\le\left\|V_1-V_2\right\|_{\infty},
    \end{align*}
    where $\tilde{V}$ lies in the convex combination of $V_1$ and $V_2$.
\end{proof}

\begin{proof}[Proof of Lemma~\ref{lem: markov_delta}]
    Following Algorithm~\ref{alg: markoviandata_new}, we have:
    \begin{align*}
        \delta_{t+1}^2(s,a)=&\left(\Pi_{\Theta}\left(\eta_{t}(s,a)+\alpha_t\1(\xi_t=(s,a))\frac{\partial J_{s_{t+1}}(\eta_t(s,a);V_t)}{\partial\eta}\right)-\eta_{t+1}^*(s,a)\right)^2\notag\\
        \overset{(a)}{\le}&\left(\eta_{t}(s,a)+\alpha_t\1(\xi_t=(s,a))\frac{\partial J_{s_{t+1}}(\eta_t(s,a);V_t)}{\partial\eta}-\eta_{t+1}^*(s,a)\right)^2\notag\\
        =&\delta_t^2(s,a)+2\alpha_t\delta_t(s,a)\1(\xi_t=(s,a))\frac{\partial J_{s_{t+1}}(\eta_t(s,a);V_t)}{\partial\eta}+2\delta_t(s,a)\left(\eta_{t}^*(s,a)-\eta_{t+1}^*(s,a)\right)\notag\\
        &+\left(\alpha_t\1(\xi_t=(s,a))\frac{\partial J_{s_{t+1}}(\eta_t(s,a);V_t)}{\partial\eta}+\eta_t^*(s,a)-\eta_{t+1}^*(s,a)\right)^2\notag\\
        \overset{(b)}{\le}&\delta_t^2(s,a)+2\alpha_t\delta_t(s,a)\1(\xi_t=(s,a))\frac{\partial J_{s_{t+1}}(\eta_t(s,a);V_t)}{\partial\eta}+2\delta_t(s,a)\left(\eta_{t}^*(s,a)-\eta_{t+1}^*(s,a)\right)\notag\\
        &+2\alpha_t^2\1(\xi_t=(s,a))\left(\frac{\partial J_{s_{t+1}}(\eta_t(s,a);V_t)}{\partial\eta}\right)^2+2\left(\eta_t^*(s,a)-\eta_{t+1}^*(s,a)\right)^2\notag\\
        \overset{(c)}{=}&\delta_t^2(s,a)+2\alpha_t\delta_t(s,a)\1(\xi_t=(s,a))\left(\frac{\partial J(\eta_t(s,a);V_t)}{\partial\eta}+g_t(s,a)\right)\notag\\
        &+2\delta_t(s,a)\left(\eta_{t}^*(s,a)-\eta_{t+1}^*(s,a)\right)+2\alpha_t^2\1(\xi_t=(s,a))\left(\frac{\partial J_{s_{t+1}}(\eta_t(s,a);V_t)}{\partial\eta}\right)^2\notag\\
        &+2\left(\eta_t^*(s,a)-\eta_{t+1}^*(s,a)\right)^2\notag\\
        \overset{(d)}{\le}&\delta_t^2(s,a)-2\kappa\alpha_t\delta_t^2(s,a)\1(\xi_t=(s,a))+2\alpha_t\delta_t(s,a)\1(\xi_t=(s,a))g_t(s,a)\notag\\
        &+H_1(s,a)+H_2(s,a)+H_3(s,a),
    \end{align*}
    where $(a)$ holds by projection property (Lemma~\ref{lem: proj}), $(b)$ holds by $(a+b)^2\le 2a^2+2b^2$, $(c)$ holds by setting $g_t(s,a):=\frac{\partial J_{s_{t+1}}(\eta_t(s,a);V_t)}{\partial\eta}-\EB\left[\left.\frac{\partial J_{s_{t+1}}(\eta_t(s,a);V_t)}{\partial\eta}\right|\FM_t\right]$, and $(d)$ holds by Assumption~\ref{asmp: monotone}.

    \paragraph{Term $H_1(s,a)$:}By $2ab\le \lambda a^2 + \lambda^{-1} b^2$ for any $\lambda>0$, we have:
    \begin{align*}
        H_1(s,a)&=2\delta_t(s,a)\left(\eta_{t}^*(s,a)-\eta_{t+1}^*(s,a)\right)\notag\\
        &\le\lambda\delta_t^2(s,a)+\frac{\left(\eta_{t}^*(s,a)-\eta_{t+1}^*(s,a)\right)^2}{\lambda}\notag\\
        &\overset{(a)}{=}\kappa\alpha_td_\pi(s,a)\delta_t^2(s,a)+\frac{\left(\eta_{t}^*(s,a)-\eta_{t+1}^*(s,a)\right)^2}{\kappa\alpha_td_\pi(s,a)}\notag\\
        &\overset{(b)}{\le}\kappa\alpha_td_\pi(s,a)\delta_t^2(s,a)+\frac{\left\|V_{t}-V_{t+1}\right\|_{\infty}^2}{\kappa\alpha_td_\pi(s,a)}\notag\\
        &\overset{(c)}{\le}\kappa\alpha_td_\pi(s,a)\delta_t^2(s,a)+\frac{4\beta_t^2C_M^2}{\kappa\alpha_td_\pi(s,a)},
    \end{align*}
    where $(a)$ holds by setting $\lambda=\kappa\alpha_t d_\pi(s,a)$, $(b)$ holds by Lemma~\ref{asmp: eta_lip_v}, and $(c)$ holds by updating rule in Algorithm~\ref{alg: markoviandata_new} and Assumption~\ref{asmp: bound_f}.

    \paragraph{Term $H_2(s,a)$:} By Lemma~\ref{lem: bound_grad}, we have:
    \begin{align*}
        H_2(s,a)\le2\alpha_t^2\1(\xi_t=(s,a))C_g^2.
    \end{align*}

    \paragraph{Term $H_3(s,a)$:} By Lemma~\ref{asmp: eta_lip_v}, we have:
    \[
        H_3(s,a) \le 2 \left\|V_t-V_{t+1}\right\|_{\infty}^2 \le 8\beta_t^2C_M^2.
    \]

    Combing all above together, we have:
    \begin{align*}
        \delta_{t+1}^2(s,a)\le&\left(1-\kappa\alpha_td_{\pi}(s,a)\right)\delta_{t}^2(s,a)-2\kappa\alpha_t\delta_t^2(s,a)\left(\1(\xi_t=(s,a))-d_{\pi}(s,a)\right)\notag\\
        &+2\alpha_t\delta_t(s,a)\1(\xi_t=(s,a))g_t(s,a)+2\alpha_t^2\1(\xi_t=(s,a))C_g^2\notag\\
        &+\frac{4\beta_t^2C_M^2}{\kappa\alpha_td_\pi(s,a)}+8\beta_t^2C_M^2.
    \end{align*}
    By induction, we have:
    \begin{align*}
        \delta_{t+1}^2(s,a)\le&\prod_{i=0}^t\left(1-\kappa\alpha_i d_{\pi}(s,a)\right)\cdot\delta_{0}^2(s,a)\notag\\
        &-2\kappa\sum_{i=0}^t\alpha_i\delta_i^2(s,a)(\1(\xi_i=(s,a))-d_\pi(s,a))\prod_{j=i+1}^t(1-\kappa\alpha_j d_\pi(s,a))\notag\\
        &+2\sum_{i=0}^t\alpha_i\delta_i(s,a)\1(\xi_i=(s,a))g_i(s,a)\prod_{j=i+1}^t(1-\kappa\alpha_j d_\pi(s,a))\notag\\
        &+2C_g^2\sum_{i=0}^t\alpha_i^2\1(\xi_i=(s,a))\prod_{j=i+1}^t(1-\kappa\alpha_j d_\pi(s,a))\notag\\
        &+\frac{4C_M^2}{\kappa d_\pi(s,a)}\sum_{i=0}^t\frac{\beta_i^2}{\alpha_i}\prod_{j=i+1}^t(1-\kappa\alpha_j d_\pi(s,a))\notag\\
        &+8C_M^2\sum_{i=0}^t\beta_i^2\prod_{j=i+1}^t(1-\kappa\alpha_j d_\pi(s,a))\\
        :=&I_1(s,a)+I_2(s,a)+I_3(s,a)+I_4(s,a)+I_5(s,a)+I_6(s,a).
    \end{align*}
    By Lemma~\ref{lem: alpha_ineq}, we know $(1-\kappa\alpha_{i+1}d_\pi(s,a))\alpha_i\le\alpha_{i+1}$ and we can bound $I_1(s,a)$, $I_4(s,a)$, $I_5(s,a)$ and $I_6(s,a)$ by:
    \begin{align*}
        &I_1(s,a)\le\frac{(1-\kappa\alpha_0 d_\pi(s,a))\alpha_t}{\alpha_0}\delta_0^2(s,a)\le\frac{\alpha_t\diam^2(\Theta)}{\alpha_0},\\
        &I_4(s,a)\le2C_g^2\alpha_t\sum_{i=0}^t\alpha_i\1(\xi_i=(s,a))\le2C_g^2\alpha_t\sum_{i=0}^t\alpha_i,\\
        &I_5(s,a)\le\frac{4C_M^2\alpha_t}{\kappa d_\pi(s,a)}\sum_{i=0}^t\frac{\beta_i^2}{\alpha_i^2},\\
        &I_6(s,a)\le8C_M^2\alpha_t\sum_{i=0}^t\frac{\beta_i^2}{\alpha_i}.
    \end{align*}
    For $I_2(s,a)$, by Lemma~\ref{lem: markov_ineq}, we let $f_t(s,a)=\delta_t(s,a)^2$ and find:
    \begin{align*}
        \left|\delta_{t+1}^2(s,a)-\delta_t^2(s,a)\right|&\le2\diam(\Theta)\left|\delta_{t+1}(s,a)-\delta_t(s,a)\right|\notag\\
        &\le2\diam(\Theta)\left(\left|\eta_{t+1}(s,a)-\eta_t(s,a)\right|+\left|\eta_{t+1}^*(s,a)-\eta_t^*(s,a)\right|\right)\notag\\
        &\le2\diam(\Theta)\left(\alpha_t C_g+2\beta_t  C_M\right).
    \end{align*}
    Thus, setting $\alpha_t = \frac{1}{\kappa d_{\min}(t+p_{\alpha})^{\alpha}}$, we have:
    \begin{align*}
        \EB\left\|I_2\right\|_{\infty}\le&\frac{\kappa\alpha_t \diam^2(\Theta) M}{1-\rho}\left(3+\ln(t+1+p_{\alpha})+\kappa\sum_{k=0}^t\alpha_k+\frac{2\sum_{k=0}^n(\alpha_t C_g+2\beta_t C_M)}{\diam(\Theta)}\right)\notag\\
        &+\frac{6\sqrt{2\kappa^{-1}d_{\min}^{-1}\alpha_t\diam^2(\Theta) M^2\ln 2|\SM||\AM|}}{1-\rho}.
    \end{align*}
    For $I_3(s,a)$, noting that $\delta_t(s,a)\1(\xi_t=(s,a))$ is measurable w.r.t. $\FM_t$, we can apply Lemma~\ref{lem: recursion_hoeffding} and obtain:
    \begin{align*}
        \EB\left\|I_3\right\|_{\infty}\le2\sqrt{32\kappa^{-1}d_{\min}^{-1}\alpha_{t}\diam^2(\Theta)C_g^2\ln(2|\SM||\AM|)}.
    \end{align*}
    Setting $\alpha=\frac{2}{3}$ in $\alpha_t$, $p^{\dagger}:=p_{\frac{2}{3}}=\left\lceil\left(\frac{d_{\max}}{d_{\min}}\right)^{\frac{3}{2}}\right\rceil$ and $\beta_t = \frac{1}{(1-\gamma)d_{\min}(t+p^\dagger)}$, we have:
    \begin{align*}
        \left\|I_1\right\|_{\infty}\le&\frac{2 \diam^2(\Theta)}{d_{\min}(t+1)^{\frac{2}{3}}},\\
        \EB\left\|I_2\right\|_{\infty}\le&\frac{3\diam(\Theta)M(\kappa\diam(\Theta)+2C_g+2d_{\min}\sqrt{\ln(2|\SM||\AM|)})}{\kappa d_{\min}^{2}(1-\rho)(t+1)^{\frac{1}{3}}}\notag\\
        &+\frac{\diam(\Theta)M(7d_{\min}\diam(\Theta)+4(1-\gamma)^{-1}L_V C_M)\ln(t+1+p^\dagger)}{ d_{\min}^{2}(1-\rho)(t+1)^{\frac{2}{3}}},\\
        \EB\left\|I_3\right\|_{\infty}\le&\frac{12\diam(\Theta)C_g\sqrt{\ln(2|\SM||\AM|)}}{\kappa d_{\min}(t+1)^{\frac{1}{3}}},\\
        \left\|I_4\right\|_{\infty}\le&\frac{6C_g^2}{\kappa^{2}d_{\min}^{2}(t+1)^{\frac{1}{3}}},\\
        \left\|I_5\right\|_{\infty}\le&\frac{12 C_M^2}{d_{\min}^{2}(1-\gamma)^{2}(t+1)^{\frac{1}{3}}},\\
        \left\|I_6\right\|_{\infty}\le&\frac{32  C_M^2}{d_{\min}^{2}(1-\gamma)^{2}(t+1)^{\frac{2}{3}}}.
    \end{align*}
    Combing all above, we have:
    \begin{align*}
        \EB\|\delta_{t+1}\|^2_{\infty}\le\frac{\Phi_1}{(t+1)^{\frac{1}{3}}}+\frac{\Phi_2\ln(t+1+p^\dagger)}{(t+1)^{\frac{2}{3}}}+\frac{\Phi_3}{(t+1)^{\frac{2}{3}}},
    \end{align*}
    where
    \begin{align*}
        \Phi_1=&\frac{3\diam(\Theta)M(\kappa\diam(\Theta)+2C_g+2d_{\min}\sqrt{\ln(2|\SM||\AM|)})}{\kappa d_{\min}^{2}(1-\rho)}\notag\\
        &+\frac{6C_g(2\kappa d_{\min}\diam(\Theta)\sqrt{\ln(2|\SM||\AM|)}+2C_g)}{\kappa^2d_{\min}^2}+\frac{12  C_M^2}{d_{\min}^2(1-\gamma)^2},\\
        \Phi_2=&\frac{\diam(\Theta)M(7d_{\min}\diam(\Theta)+4(1-\gamma)^{-1}L_V C_M)}{ d_{\min}^{2}(1-\rho)},\\
        \Phi_3=&\frac{2 \diam^2(\Theta)}{d_{\min}}+\frac{32  C_M^2}{d_{\min}^{2}(1-\gamma)^{2}}.
    \end{align*}
\end{proof}

\begin{proof}[Proof of Theorem~\ref{thm: markov_q}]
    We denote $\Delta_{t}(s,a):=Q_{t+1}(s,a)-Q_{\robp}^*(s,a)$. By Algorithm~\ref{alg: markoviandata_new}, we have:
    \begin{align*}
        \Delta_{t+1}(s,a)=&(1-\beta_t\1(\xi_t=(s,a)))\Delta_t(s,a)+\beta_t\1(\xi_t=(s,a))(\varepsilon_{r,t}(s,a)+\gamma\varepsilon_{J,t}(s,a))\notag\\
        &+\gamma\beta_t\1(\xi_t=(s,a))\left(J^{(s,a)}(\eta_t(s,a);V_t)-J^{(s,a)}(\eta_t^*;V_t)\right)\notag\\
        &+\gamma\beta_t\1(\xi_t=(s,a))\left(J^{(s,a)}(\eta_t^*;V_t)-J^{(s,a)}(\eta^*;V^*_{\robp})\right)\notag\\
        :=&(1-\beta_t\1(\xi_t=(s,a)))\Delta_t(s,a)+\beta_t Z_{t,1}(s,a)+\beta_t Z_{t,2}(s,a)\notag\\
        &+\gamma\beta_t\1(\xi_t=(s,a))\left(J^{(s,a)}(\eta_t^*;V_t)-J^{(s,a)}(\eta^*;V^*_{\robp})\right)
    \end{align*}
    where $\varepsilon_{J,t}(s,a)=J_{s_{t+1}}(\eta_t(s,a);V_t)-J^{(s,a)}(\eta_t(s,a);V_t)$ and $Z_t(s,a)$ covers the second and third terms. Noticing $\left|J^{(s,a)}(\eta_t^*;V_t)-J^{(s,a)}(\eta^*;V^*_{\robp})\right|\le\|\Delta_t\|_{\infty}$, we have:
    \begin{align*}
        \Delta_{t+1}(s,a)\le&(1-\beta_t\1(\xi_t=(s,a)))\Delta_t(s,a)+\beta_t (Z_{t,1}(s,a)+Z_{t,2}(s,a))+\gamma\beta_t\1(\xi_t=(s,a))\|\Delta_{t}\|_{\infty},\notag\\
        =&(1-\beta_t d_{\pi}(s,a))\Delta_t(s,a)+\beta_t (Z_{t,1}(s,a)+Z_{t,2}(s,a))\notag\\
        &+\beta_t(d_\pi(s,a)-\1(\xi_t=(s,a)))(\Delta_t(s,a)-\gamma\|\Delta_t\|_{\infty})+\gamma\beta_t d_\pi(s,a)\|\Delta_t\|_{\infty}
    \end{align*}
    By Lemma~\ref{lem: recursion}, we have:
    \begin{align*}
        \|\Delta_t\|_{\infty}\le\frac{\beta_t\|\Delta_0\|_{\infty}}{\beta_0}+\|B_t\|_{\infty}+\gamma\beta_t\sum_{k=0}^{t-1}\|B_k\|_{\infty},
    \end{align*}
    where $\{(B_t(s,a))_{(s,a)\in\SM\times\AM}\}_{t\ge0}$ satisfies $B_0 = \0$ and 
    \begin{align*}
        B_{t+1}(s,a)=&(1-\beta_ td_\pi(s,a))B_{t}(s,a)+\beta_t\left(Z_{t,1}(s,a)+Z_{t,2}(s,a)\right)\notag\\
        &+\beta_t(d_\pi(s,a)-\1(\xi_t=(s,a)))(\Delta_t(s,a)-\gamma\|\Delta_t\|_{\infty}),\notag\\
        =&\sum_{i=0}^t\beta_i Z_{i,1}(s,a)\prod_{j=i+1}^t(1-\beta_j d_\pi(s,a))+\sum_{i=0}^t\beta_i Z_{i,2}(s,a)\prod_{j=i+1}^t(1-\beta_j d_\pi(s,a))\notag\\
        &+\sum_{i=0}^t\beta_i (d_\pi(s,a)-\1(\xi_t=(s,a)))(\Delta_t(s,a)-\gamma\|\Delta_t\|_{\infty})\prod_{j=i+1}^t(1-\beta_j d_\pi(s,a))\notag\\
        :=&I_{t+1,1}(s,a)+I_{t+1,2}(s,a)+I_{t+1,3}(s,a).
    \end{align*}
    For $I_{t+1,1}(s,a)$, we notice that $Z_{t,1}(s,a)$ satisfies $\EB[Z_{t,1}(s,a)|\FM_t]=0$ and $|Z_{t,1}(s,a)|\le1+\gamma C_M$. By Lemma~\ref{lem: recursion_hoeffding}, we have:
    \begin{align*}
        \EB\|I_{t+1}\|_{\infty}\le3\sqrt{2d_{\min}^{-1}\beta_t (1+\gamma C_M)^2\ln 2|\SM||\AM|}.
    \end{align*}
    For $I_{t+1,2}(s,a)$, we notice $|Z_{t,2}(s,a)|\le\frac{\gamma\delta_t(s,a)^2}{2\sigma\lambda}$ by Assumption~\ref{asmp: smooth}. Thus, we have:
    \begin{align*}
        \EB\|I_{t+1,2}\|_{\infty}\le\frac{\gamma\beta_t}{2\sigma\lambda}\sum_{i=0}^t\|\delta_i\|_{\infty}^2.
    \end{align*}
    For $I_{t+1,3}(s,a)$, we notice:
    \begin{align*}
        \left\|\left(\Delta_{t+1}-\gamma\|\Delta_{t+1}\|_{\infty}\right)-\left(\Delta_{t}-\gamma\|\Delta_{t}\|_{\infty}\right)\right\|_{\infty}\le&(1+\gamma)\left\|\Delta_{t+1}-\Delta_t\right\|_{\infty}\notag\\
        =&(1+\gamma)\left\|Q_{t+1}-Q_t\right\|_{\infty}\notag\\
        \le&2(1+\gamma)\beta_t C_M.
    \end{align*}
    By Lemma~\ref{lem: markov_ineq}, we have:
    \begin{align*}
        \EB\left\|I_{t+1,3}\right\|_{\infty}\le&\frac{2(1+\gamma)\beta_t C_M M}{1-\rho}\left(3+\ln(t+1+p_{\beta})+2\sum_{k=0}^t\beta_k\right)\notag\\
        &+\frac{12(1+\gamma)\sqrt{2d_{\min}^{-1}\beta_t C_M^2M^2\ln 2|\SM||\AM|}}{1-\rho}.
    \end{align*}
    By setting $\beta_t = \frac{1}{(1-\gamma)d_{\min}(t+p_{\beta})}$, where $p_{\beta}=\lceil\frac{d_{\max}}{(1-\gamma)d_{\min}}\rceil$, we have:
    \begin{align*}
        \EB\|B_t\|_{\infty}\le&3\sqrt{\frac{2C_M^2\ln2|\SM||\AM|}{d_{\min}^2(1-\gamma)(t+1)}}+\frac{\gamma}{2\sigma\lambda d_{\min}(1-\gamma)(t+1)}\sum_{i=0}^t\EB\|\delta_i\|_{\infty}^2\notag\\
        &+\frac{18(1+\gamma)C_M M\ln(t+1+p_{\beta})}{(1-\rho)(1-\gamma)^2d^2_{\min}(t+1)}+12(1+\gamma)\sqrt{\frac{2C_M^2 M^2\ln2|\SM||\AM|}{(1-\gamma)(1-\rho)^2d_{\min}^2(t+1)}}.
    \end{align*}
    By Lemma~\ref{lem: markov_delta}, we have:
    \begin{align*}
        \sum_{i=0}^t\EB\|\delta_i\|_{\infty}^2&\le\diam^2(\Theta)+\sum_{i=1}^t\left(\frac{\Phi_1}{i^{\frac{1}{3}}}+\frac{\Phi_2\ln(i+p^\dagger)}{i^{\frac{2}{3}}}+\frac{\Phi_3}{i^{\frac{2}{3}}}\right)\notag\\
        &\le\diam^2(\Theta)+\frac{3\Phi_1 }{2}t^{\frac{2}{3}}+3\Phi_2t^{\frac{1}{3}}\ln(t+p^\dagger)+3\Phi_3 t^{\frac{1}{3}}\notag\\
        &\le\diam^2(\Theta)+\frac{3\Phi_1}{2}t^{\frac{2}{3}}+3(\Phi_2+2\Phi_3)t^{\frac{1}{3}}\ln(t+1+p^\dagger).
    \end{align*}
    Thus, we have
    \begin{align*}
        \EB\|B_t\|_{\infty}\le&3\sqrt{\frac{2C_M^2\ln2|\SM||\AM|}{d_{\min}^2(1-\gamma)(t+1)}}+\frac{\diam^2(\Theta)}{2\sigma\lambda d_{\min}(1-\gamma)(t+1)}+\frac{3\Phi_1}{4\sigma\lambda d_{\min}(1-\gamma)(t+1)^{\frac{1}{3}}}\notag\\
        &+\frac{3(\Phi_2+2\Phi_3)\ln(t+1+p^\dagger)}{2\sigma\lambda d_{\min}(1-\gamma)(t+1)^{\frac{2}{3}}}\notag\\
        &+\frac{36C_M M\ln(t+1+p_{\beta})}{(1-\rho)(1-\gamma)^2d^2_{\min}(t+1)}+24\sqrt{\frac{2C_M^2 M^2\ln2|\SM||\AM|}{(1-\gamma)(1-\rho)^2d_{\min}^2(t+1)}}.
    \end{align*}
    The dominating term in $\EB\|B_t\|_{\infty}$ is of order $(t+1)^{\frac{1}{3}}$, thus, the dominating term in $\EB\|\Delta_t\|_{\infty}$ satisfies:
    \begin{align*}
        \EB\|\Delta_t\|_{\infty}\le \widetilde{\OM}\left(\frac{\Phi_1}{\sigma\lambda d_{\min}(1-\gamma)^2(t+1)^{\frac{1}{3}}}\right).
    \end{align*}
\end{proof}

\section{Auxiliary Lemma}
\label{apd: aux}
\begin{lem}[Exercise 2.12 in \citet{wainwright2019high}]
    \label{lem: max_gaussian}
    Let $\{X_i\}_{i=1}^n$ be a sequence of mean-zero random variables, which satisfy (for any $\lambda\in\RB$):
    \begin{align*}
        \EB[\exp(\lambda X_i)]\le\exp\left(\frac{\lambda^2\sigma^2}{2}\right).
    \end{align*}
    Then, the following inequality holds:
    \begin{align*}
        \EB\left[\max_{i=1,\cdots,n}X_i\right]\le\sqrt{2\sigma^2\ln n}.
    \end{align*}
\end{lem}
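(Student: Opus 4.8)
The plan is to use the exponential-moment (Chernoff/softmax) method, exploiting that a maximum of nonnegative quantities is dominated by their sum. The single useful idea is to introduce a free parameter $\lambda>0$, push it through the convex map $x\mapsto e^{\lambda x}$, and optimize only at the very end.

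First I would fix an arbitrary $\lambda>0$ and apply Jensen's inequality to the convex function $\exp(\lambda\,\cdot)$, giving $\exp\!\big(\lambda\,\EB[\max_i X_i]\big)\le\EB\big[\exp(\lambda\max_i X_i)\big]$. Since $\exp$ is monotone increasing, $\exp(\lambda\max_i X_i)=\max_i\exp(\lambda X_i)$, and because each $\exp(\lambda X_i)$ is nonnegative, the maximum is bounded by the sum, $\max_i\exp(\lambda X_i)\le\sum_{i=1}^n\exp(\lambda X_i)$. Taking expectations and invoking the sub-Gaussian hypothesis $\EB[\exp(\lambda X_i)]\le\exp(\lambda^2\sigma^2/2)$ termwise then yields $\exp\!\big(\lambda\,\EB[\max_i X_i]\big)\le n\exp(\lambda^2\sigma^2/2)$.

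Next I would take logarithms and divide by $\lambda>0$ to obtain the one-parameter bound $\EB[\max_i X_i]\le \frac{\ln n}{\lambda}+\frac{\lambda\sigma^2}{2}$. The last step is to minimize the right-hand side over $\lambda>0$; the optimum is at $\lambda=\sqrt{2\ln n/\sigma^2}$, where both terms equal $\sigma\sqrt{(\ln n)/2}$, producing the claimed bound $\EB[\max_i X_i]\le\sqrt{2\sigma^2\ln n}$.

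There is no genuine obstacle here, since every step is elementary; the only points requiring care are that the argument uses $\lambda>0$ (so that dividing by $\lambda$ preserves the inequality and the minimizing $\lambda$ is admissible) and that the MGF bound is applied at this positive $\lambda$, which the two-sided sub-Gaussian hypothesis supplies. If a bound on $\EB[\max_i|X_i|]$ were needed one would symmetrize by running the same argument on $\{-X_i\}_{i=1}^n$ and combining, but the statement as worded concerns only $\max_i X_i$, so the one-sided computation above suffices.
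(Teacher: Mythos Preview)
Your proposal is correct and follows exactly the same approach as the paper's proof: Jensen's inequality on the convex exponential, bounding $\max_i e^{\lambda X_i}$ by $\sum_i e^{\lambda X_i}$, applying the sub-Gaussian MGF bound termwise, taking logarithms, and optimizing over $\lambda>0$ to obtain $\sqrt{2\sigma^2\ln n}$. There is no meaningful difference between the two arguments.
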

\begin{proof}
    Indeed, for any $\lambda>0$ we have:
    \begin{align*}
        \exp\left(\lambda\EB\left[\max_{i=1,\cdots,n}X_i\right]\right)&\overset{(a)}{\le}\EB\left[\exp\left(\lambda\max_{i=1,\cdots,n}X_i\right)\right]\notag\\
        &\le\EB\left[\sum_{i=1}^n\exp(\lambda X_i)\right]\notag\\
        &\le n\exp\left(\frac{\lambda^2\sigma^2}{2}\right).
    \end{align*}
    Thus, we have:
    \begin{align*}
        \EB\left[\max_{i=1,\cdots,n}X_i\right]\le\inf_{\lambda>0}\left(\frac{\ln n}{\lambda}+\frac{\lambda\sigma^2}{2}\right)=\sqrt{2\sigma^2\ln n}.
    \end{align*}
\end{proof}

\begin{lem}[Lemma 3.1 in \citet{bubeck2015convex}]
    \label{lem: proj}
    Let $\|\cdot\|$ denote the Euclidean norm on set $\XM\subset\RB^d$. For any $y\in\RB^d$, we define the projection operator $\Pi_{\XM}$ on $\XM$ by:
    \begin{align*}
        \Pi_{\XM}(y)=\argmin_{x\in\XM}\|x-y\|.
    \end{align*}
    Then, for any $x\in\XM$ and $y\in\RB^d$, we have:
    \begin{align*}
        \left(\Pi_{\XM}(y)-x\right)^\top\left(\Pi_{\XM}(y)-y\right)\le0,
    \end{align*}
    which also implies $\|\Pi_{\XM}(y)-x\|^2+\|\Pi_{\XM}(y)-y\|^2\le\|y-x\|^2$.
\end{lem}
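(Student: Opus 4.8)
The plan is to treat this as the standard variational characterization of the Euclidean projection onto a nonempty closed convex set, which is exactly the setting in which $\Pi_{\XM}$ is well-defined and single-valued. Write $p := \Pi_{\XM}(y)$, the unique minimizer of the strictly convex map $x \mapsto \|x-y\|^2$ over $\XM$. The first inequality is a first-order optimality condition, and the second follows from it by expanding a single square.

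First I would establish the variational inequality $(p-y)^\top(x-p)\ge 0$ for every $x\in\XM$. Fix such an $x$; by convexity of $\XM$ the segment $p_t := (1-t)p + tx$ lies in $\XM$ for all $t\in[0,1]$. Define $\phi(t) := \|p_t - y\|^2 = \|(p-y) + t(x-p)\|^2$. Since $p$ minimizes over $\XM$, we have $\phi(t)\ge\phi(0)$ on $[0,1]$, so the right derivative at $0$ is nonnegative: $\phi'(0) = 2(p-y)^\top(x-p)\ge 0$. Rearranging gives $(p-x)^\top(p-y)\le 0$, which is precisely $\left(\Pi_{\XM}(y)-x\right)^\top\left(\Pi_{\XM}(y)-y\right)\le 0$.

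For the second claim I would simply decompose $y-x = (y-p)+(p-x)$ and expand:
\[
    \|y-x\|^2 = \|y-p\|^2 + 2(y-p)^\top(p-x) + \|p-x\|^2.
\]
By the variational inequality just proved, $(y-p)^\top(p-x) = -(p-y)^\top(p-x)\ge 0$, so dropping this nonnegative cross term yields $\|y-x\|^2 \ge \|y-p\|^2 + \|p-x\|^2$, i.e.\ $\|\Pi_{\XM}(y)-y\|^2 + \|\Pi_{\XM}(y)-x\|^2 \le \|y-x\|^2$.

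There is essentially no hard step here: the only point requiring care is that $\XM$ be nonempty, closed, and convex, so that $p$ exists and is unique (guaranteeing both the well-posedness of $\Pi_{\XM}$ and the feasibility of the segment $p_t$ used above). In the applications of this paper $\XM$ is a compact interval such as $\Theta$ or $[0,(1-\gamma)^{-1}]$, so these hypotheses hold automatically. Since the statement is quoted verbatim from \citet{bubeck2015convex}, one could alternatively just invoke that reference, but the two-line argument above is self-contained.
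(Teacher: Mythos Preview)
Your proof is correct and is precisely the standard argument for this projection lemma. The paper itself does not supply a proof at all: it merely states the lemma with the citation to \citet{bubeck2015convex} and moves on, so there is nothing to compare against beyond noting that your self-contained derivation is exactly what one would find in that reference.
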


\begin{lem}
    \label{lem: alpha_ineq}
    Denote $\alpha_t=\frac{1}{(1+t)^\alpha}$, where $\alpha\in(0,1]$ and $t\in \ZB$. Then, for any $t\in\ZB_+$, we have:
    \begin{align*}
        (1-\alpha_{t})\alpha_{t-1}\le\alpha_{t}.
    \end{align*}
\end{lem}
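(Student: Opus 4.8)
The plan is to reduce the claimed inequality to a standard subadditivity property of the power function. First I would substitute the explicit forms $\alpha_t = (1+t)^{-\alpha}$ and $\alpha_{t-1} = t^{-\alpha}$ into $(1-\alpha_t)\alpha_{t-1}\le\alpha_t$ and clear the strictly positive factor $t^\alpha(1+t)^\alpha$. Writing the left side as $\alpha_{t-1} - \alpha_t\alpha_{t-1}$ and multiplying through, the inequality is seen to be equivalent to the single scalar inequality $(1+t)^\alpha - 1 \le t^\alpha$, i.e. $(1+t)^\alpha \le t^\alpha + 1$, for every integer $t\ge 1$.

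Next I would establish $(1+t)^\alpha \le t^\alpha + 1$. The cleanest route is to invoke subadditivity of the map $x\mapsto x^\alpha$ on $[0,\infty)$ for $\alpha\in(0,1]$: since this map is concave (its second derivative $\alpha(\alpha-1)x^{\alpha-2}$ is nonpositive) and vanishes at the origin, a standard argument gives $(a+b)^\alpha \le a^\alpha + b^\alpha$ for all $a,b\ge 0$; taking $a=t$ and $b=1$ yields exactly the desired bound. If I preferred a fully self-contained argument I would instead set $g(t):=t^\alpha + 1 - (1+t)^\alpha$, observe that $g(0)=0$, and compute $g'(t)=\alpha\bigl(t^{\alpha-1}-(1+t)^{\alpha-1}\bigr)\ge 0$ because $x\mapsto x^{\alpha-1}$ is nonincreasing when $\alpha\le 1$; monotonicity of $g$ then forces $g(t)\ge g(0)=0$ for all $t\ge 0$.

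There is no substantial obstacle here: once the inequality is cleared of denominators, everything rests on the elementary concavity (equivalently, subadditivity) of the power function for exponents in $(0,1]$. The only points worth a word of care are that the denominators are strictly positive, so multiplying through preserves the direction of the inequality, and that the restriction $t\ge 1$ guarantees $\alpha_{t-1}$ is well-defined; both are immediate. Reversing the algebraic reduction from the first step then recovers $(1-\alpha_t)\alpha_{t-1}\le\alpha_t$ and completes the proof.
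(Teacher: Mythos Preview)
Your proposal is correct and essentially identical to the paper's proof: after the same algebraic reduction to $(1+t)^\alpha - 1 \le t^\alpha$, the paper defines $f(t)=(1+t)^\alpha-1-t^\alpha$ and shows $f'(t)\le 0$ with $f(0)=0$, which is precisely your self-contained $g(t)=-f(t)$ argument with the sign flipped. The subadditivity route you mention as an alternative is a trivially equivalent repackaging.
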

\begin{proof}
    For $\alpha=1$, the result holds trivially. For $0<\alpha<1$, we denote $f(t)=(1+t)^{\alpha}-1-t^{\alpha}$, where $t\in[0,+\infty)$. The derivative of $f(x)$ satisfies:
    \begin{align*}
        f'(t)=\alpha\left(\frac{1}{(1+t)^{1-\alpha}}-\frac{1}{t^{1-\alpha}}\right)\le 0.
    \end{align*}
    Thus, $f(t)\le f(0)=0$ and our result is obtained.
\end{proof}

\begin{lem}
    \label{lem: sum_t}
    Denote $\alpha_t=\frac{1}{(1+t)^\alpha}$, where $\alpha\in(0,1]$ and $t\in \ZB$. Then, for any $i\le j$ we have:
    \begin{align*}
        \sum_{t=i}^{j} \alpha_t\le
        \begin{cases}
            \frac{(1+j)^{1-\alpha}-i^{1-\alpha}}{1-\alpha}& \hspace{4pt}\text{if $\alpha\in(0,1)$,}\\
            \1(i=0)+\ln(j+1)-\ln(\1(i=0)+i)&\hspace{4pt}\text{if $\alpha=1$.}
        \end{cases}
    \end{align*}
\end{lem}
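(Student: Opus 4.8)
The plan is to use the standard integral comparison for a monotonically decreasing summand. Since $\alpha \in (0,1]$, the function $g(x) := (1+x)^{-\alpha}$ is positive and decreasing on $[0,\infty)$, so for every integer $t \ge 1$ one has the pointwise bound $\alpha_t = g(t) \le \int_{t-1}^{t} g(x)\,dx$ (the value $g(t)$ is the smallest of $g$ on $[t-1,t]$). Summing this over a block of consecutive indices telescopes the integration intervals into a single integral, which I can evaluate in closed form using the antiderivative of $g$.

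First I would treat the generic case $i \ge 1$. Summing the pointwise bound over $t = i, \dots, j$ gives $\sum_{t=i}^{j} \alpha_t \le \int_{i-1}^{j} g(x)\,dx$. For $\alpha \in (0,1)$ the antiderivative is $\frac{(1+x)^{1-\alpha}}{1-\alpha}$, which yields $\frac{(1+j)^{1-\alpha} - i^{1-\alpha}}{1-\alpha}$; for $\alpha = 1$ the antiderivative is $\ln(1+x)$, which yields $\ln(j+1) - \ln(i)$. Both expressions coincide with the claimed bound after substituting $\1(i=0) = 0$, so this case is immediate.

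The only delicate point is the edge case $i = 0$, where the pointwise bound cannot be applied at $t=0$ because $g(-1)$ is undefined (and indeed $\alpha_0 = 1$ is the single largest term). Here I would peel off the $t=0$ term explicitly, writing $\sum_{t=0}^{j} \alpha_t = 1 + \sum_{t=1}^{j} \alpha_t$ and bounding the residual sum by $\int_{0}^{j} g(x)\,dx$ exactly as above. For $\alpha = 1$ this gives precisely $1 + \ln(j+1)$, which matches the stated bound since then $\1(i=0)=1$ and $\ln(\1(i=0)+i) = \ln 1 = 0$. For $\alpha \in (0,1)$ it gives $1 + \frac{(1+j)^{1-\alpha} - 1}{1-\alpha}$, and I would finish by verifying that this is at most $\frac{(1+j)^{1-\alpha}}{1-\alpha}$; after cancellation this reduces to the elementary inequality $1 - \frac{1}{1-\alpha} \le 0$, which holds because $1-\alpha \in (0,1)$ forces $\frac{1}{1-\alpha} > 1$. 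This last verification is the main (and only mild) obstacle in the argument; everything else is routine calculus, and the two cases $\alpha \in (0,1)$ and $\alpha = 1$ are handled in complete parallel by simply swapping the antiderivative.
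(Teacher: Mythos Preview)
Your proposal is correct and follows essentially the same integral-comparison argument as the paper. The only cosmetic difference is in the case $\alpha\in(0,1)$ with $i=0$: the paper simply writes $\sum_{t=i}^j\alpha_t\le\int_{i-1}^{j}(1+x)^{-\alpha}\,dx$ uniformly in $i$ (the improper integral at $x=-1$ converges since $\alpha<1$, and the monotonicity bound $g(0)\le\int_{-1}^{0}g(x)\,dx$ still holds), whereas you peel off $\alpha_0=1$ and then verify $1\le\frac{1}{1-\alpha}$, which is of course equivalent.
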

\begin{proof}
    For $\alpha\in(0,1)$, we have:
    \begin{align*}
        \sum_{t=i}^j\alpha_t\le\int_{i-1}^j\frac{1}{(1+x)^{\alpha}}dx=\frac{(1+j)^{1-\alpha}-i^{1-\alpha}}{1-\alpha}.
    \end{align*}
    For $\alpha=1$ and $i\ge 1$, we have:
    \begin{align*}
        \sum_{t=i}^j\alpha_t\le\int_{i-1}^j\frac{1}{1+x}dx=\ln\left(1+j\right)-\ln i.
    \end{align*}
    For $\alpha=1$ and $i=0$, we have:
    \begin{align*}
        \sum_{t=0}^j\alpha_t\le1+\int_{0}^j\frac{1}{1+x}dx=1+\ln\left(1+j\right).
    \end{align*}
\end{proof}

\begin{lem}
    For any $\alpha\in(0,1)$ and $t>0$, we have:
    \begin{align}
        (t+1)^\alpha-t^\alpha\le\frac{1}{t^{1-\alpha}}.\notag
    \end{align}
\end{lem}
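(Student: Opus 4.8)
The plan is to exploit the concavity of the map $g(x)=x^\alpha$ on $(0,\infty)$ when $\alpha\in(0,1)$. First I would record that $g$ is differentiable with $g'(x)=\alpha x^{\alpha-1}$, and that $g''(x)=\alpha(\alpha-1)x^{\alpha-2}<0$ since $\alpha<1$, so $g$ is (strictly) concave on $(0,\infty)$. The quantity we must control, $(t+1)^\alpha-t^\alpha$, is exactly the increment $g(t+1)-g(t)$, so the whole statement reduces to bounding this increment by the derivative at the left endpoint.

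The key step is the tangent-line bound for concave functions, $g(y)\le g(x)+g'(x)(y-x)$ for all $x,y>0$. Applying it with $y=t+1$ and $x=t$ gives $(t+1)^\alpha-t^\alpha\le g'(t)=\alpha t^{\alpha-1}$. Since $0<\alpha\le 1$ and $t^{\alpha-1}>0$, I would then drop the factor $\alpha$ to obtain $(t+1)^\alpha-t^\alpha\le t^{\alpha-1}=1/t^{1-\alpha}$, which is the claim. An equivalent route is via the Mean Value Theorem: there exists $\xi\in(t,t+1)$ with $(t+1)^\alpha-t^\alpha=\alpha\xi^{\alpha-1}$, and because $\alpha-1<0$ the function $\xi\mapsto\xi^{\alpha-1}$ is decreasing, so $\xi^{\alpha-1}\le t^{\alpha-1}$, again yielding the bound.

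There is no substantive obstacle here, as this is a routine calculus estimate; the only points requiring care are the sign $\alpha-1<0$, which is precisely what orients the monotonicity of $x^{\alpha-1}$ in the favorable direction, and the final appeal to $\alpha\le1$ to discard the factor $\alpha$. I would present the concavity argument as the main proof, since it avoids introducing the intermediate point $\xi$.
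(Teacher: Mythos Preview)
Your proof is correct. The paper takes a slightly different route: it sets $f(x)=(1+x)^{\alpha}-x-1$, shows $f'(x)=\alpha(1+x)^{\alpha-1}-1<\alpha-1<0$, deduces $f(x)\le f(0)=0$, and then substitutes $x=1/t$ to rescale back to the claim. In effect the paper first proves the normalized inequality $(1+x)^{\alpha}\le 1+x$ and then multiplies through by $t^{\alpha}$, whereas you apply the tangent-line bound for the concave map $x\mapsto x^{\alpha}$ directly at the point $t$. Your argument is a touch sharper, since it yields $(t+1)^{\alpha}-t^{\alpha}\le \alpha\, t^{\alpha-1}$ before discarding the factor $\alpha$, and it avoids the auxiliary substitution; both are routine calculus and equally valid.
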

\begin{proof}
    We denote $f(x)=\left(1+x\right)^{\alpha}-x-1$, where $x\ge 0$. Its derivative satisfies:
    \begin{align}
        f'(x)=\frac{\alpha}{(1+x)^{1-\alpha}}-1<\alpha-1<0.\notag
    \end{align}
    Thus, $f(x)\le f(0)=0$. Taking $x=\frac{1}{t}$, we have:
    \begin{align}
        \left(1+\frac{1}{t}\right)^{\alpha}-\frac{1}{t}-1\le0.\notag
    \end{align}
    Arranging terms, the final result is obtained.
\end{proof}

\begin{lem}
    \label{lem: recursion}
    Suppose $\{(X_t(i))_{i\in[d]}\}_{t\ge0}$ and $\{(Y_t(i))_{i\in[d]}\}_{t\ge0}$ are two sequences that satisfy the following inequalities:
    \begin{align*}
        Y_{t+1}(i)&\le(1-\alpha_t c_i)Y_t(i)+\alpha_t X_t(i)+\gamma \alpha_t d(i)\|Y_t\|_{\infty},\\
        Y_{t+1}(i)&\ge(1-\alpha_t c_i)Y_t(i)+\alpha_t X_t(i)-\gamma \alpha_t d(i)\|Y_t\|_{\infty},
    \end{align*}
    where $\gamma\in[0,1)$, $c_i>0$, $(1-(1-\gamma)\alpha_t c_i)\alpha_{t-1}\le\alpha_t$, and $\alpha_0\le c_i^{-1}$ for all $t\ge1$ and $i\in[d]$. Then, we have:
    \begin{align*}
        \|Y_t\|_{\infty}\le\frac{\alpha_t\|Y_0\|_{\infty}}{\alpha_0}+\|B_t\|_{\infty}+\gamma\alpha_t\sum_{k=0}^{t-1}\|B_k\|_{\infty},
    \end{align*}
    where $\{(B_t(i))_{i\in[d]}\}_{t\ge0}$ satisfies $B_{t+1}(i)=(1-\alpha_t c_i)B_t(i)+\alpha_t X_t(i)$ and $B_0=\0$.
\end{lem}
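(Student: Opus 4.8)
The plan is to peel off the $X_t$-driven part of the iteration, which is reproduced exactly by $B_t$, and to control only the extra contribution of the coupling term $\gamma\alpha_t d(i)\|Y_t\|_\infty$. First I would introduce the deviation sequence $D_t(i):=Y_t(i)-B_t(i)$, so that $D_0=Y_0$. Subtracting the defining recursion $B_{t+1}(i)=(1-\alpha_t c_i)B_t(i)+\alpha_t X_t(i)$ from the two-sided bounds on $Y_{t+1}(i)$, the terms $\alpha_t X_t(i)$ cancel and $D_t$ obeys the clean recursion
\begin{align*}
    D_{t+1}(i)=(1-\alpha_t c_i)D_t(i)+\gamma\alpha_t d(i)E_t(i),\qquad |E_t(i)|\le\|Y_t\|_\infty,
\end{align*}
where $E_t(i)$ is the signed coupling perturbation bounded by $\|Y_t\|_\infty$. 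Since $\alpha_0 c_i\le1$ and the step sizes are non-increasing, every factor $1-\alpha_t c_i$ lies in $[0,1]$, so I may pass to absolute values throughout.

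The crux is that $E_t$ is self-referential: $\|Y_t\|_\infty\le\|B_t\|_\infty+\|D_t\|_\infty$, so the bound on $D_t$ re-enters its own recursion, and moreover the rates $c_i$ differ across coordinates so a single scalar recursion is unavailable. I would resolve this by folding the self-term into the effective contraction rate. Writing $w_t:=\|D_t\|_\infty$ and $b_t:=\|B_t\|_\infty$ and using $d(i)\le c_i$,
\begin{align*}
    |D_{t+1}(i)|\le(1-\alpha_t c_i)w_t+\gamma\alpha_t d(i)\bigl(b_t+w_t\bigr)\le\bigl(1-(1-\gamma)\alpha_t c_i\bigr)w_t+\gamma\alpha_t d(i)\,b_t.
\end{align*}
The self-term has now been absorbed and the genuine per-coordinate contraction factor is exactly $1-(1-\gamma)\alpha_t c_i$, which is precisely the factor appearing in the step-size hypothesis.

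With this in hand I would prove the bound on $w_t$ by induction on $t$, taking the base case from $w_0=\|Y_0\|_\infty$ and $b_0=0$. In the inductive step I substitute the hypothesis for $w_t$ and invoke the key inequality $(1-(1-\gamma)\alpha_t c_i)\alpha_{t-1}\le\alpha_t$ together with monotonicity of $\alpha_t$ to downgrade the accumulated $\alpha$-weights by one index per step — this is the robust analogue of how Lemma~\ref{lem: alpha_ineq} is used in the generative-model analysis. Iterating the homogeneous part produces $\prod_{j<t}(1-\alpha_j c_i)\le\alpha_t/\alpha_0$, giving the term $\tfrac{\alpha_t}{\alpha_0}\|Y_0\|_\infty$, while each freshly entering $b_t$ picks up a coefficient $\gamma\alpha_t d(i)\le\gamma\alpha_t$ by $d(i)\le1$ and then contracts at every subsequent step, contributing the accumulation sum $\gamma\alpha_t\sum_{k<t}b_k$. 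Taking the maximum over $i$ and combining with $\|Y_t\|_\infty\le b_t+w_t$ yields the claim.

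I expect the main obstacle to be the bookkeeping of the step-size weights in the accumulation term under heterogeneous $c_i$: the telescoping $(1-(1-\gamma)\alpha_t c_i)\alpha_{t-1}\le\alpha_t$ has to be applied simultaneously to every previously accumulated $\|B_k\|_\infty$ while keeping all multiplicative factors in $[0,1]$. A careful accounting shows that the most recent summand naturally carries weight $\alpha_{t-1}$ rather than $\alpha_t$ (it enters one step ``too late'' to be contracted once more), so the cleanest bound one obtains is $\gamma\alpha_{t-1}\sum_{k<t}\|B_k\|_\infty$; since $\alpha_{t-1}\le C\alpha_t$ for the polynomially decaying step sizes used in Theorem~\ref{thm: markov_q}, this affects only constants and not the stated convergence rate.
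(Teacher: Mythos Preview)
Your approach is essentially the paper's: it introduces auxiliary sequences $A_t$ and $C_t$ with $A_{t+1}=(1-(1-\gamma)\alpha_t c_{\min})A_t$ and $C_{t+1}(i)=(1-(1-\gamma)\alpha_t c_i)\|C_t\|_\infty+\gamma\alpha_t c_i\|B_t\|_\infty$, proves the sandwich $|Y_t(i)-B_t(i)|\le A_t+\|C_t\|_\infty$ by induction, and bounds $A_t,\|C_t\|_\infty$ via the same telescoping you use---your $w_t=\|D_t\|_\infty$ plays exactly the role of $A_t+\|C_t\|_\infty$, and your absorption of the self-referential term into the effective rate $1-(1-\gamma)\alpha_t c_i$ is precisely the mechanism that gives $C_t$ its contraction. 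Your caveat about the accumulation coefficient naturally being $\alpha_{t-1}$ rather than $\alpha_t$ is accurate and applies equally to the paper's derivation.
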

\begin{proof}
    We construct auxiliary sequences:
    \begin{align*}
        A_{t+1}&=(1-(1-\gamma)\alpha_t c_{\min})A_t,\\
        B_{t+1}(i)&=(1-\alpha_t c_i)B_t(i)+\alpha_t X_t(i),\\
        C_{t+1}(i)&=(1-(1-\gamma)\alpha_t c_i)\|C_t\|_{\infty}+\gamma\alpha_t c_i\|B_t\|_{\infty},
    \end{align*}
    where $A_0=\|Y_0\|_{\infty}$, $B_0 = \0$ and $C_0=\0$. By induction, for $t$, we assume:
    \begin{align*}
        -(A_t+\|C_t\|_{\infty})+B_t(i)\le Y_t(i)\le (A_t+\|C_t\|_{\infty})+B_t(i).
    \end{align*}
    Then, for $t+1$, we have:
    \begin{align*}
        Y_{t+1}&\le(1-\alpha_t c_i)\left((A_t+\|C_t\|_{\infty})+B_t(i)\right)+\alpha_t X_t(i)+\gamma\alpha_t c_i\left((A_t+\|C_t\|_{\infty})+\|B_t\|_{\infty}\right)\notag\\
        &\le (A_{t+1}+\|C_{t+1}\|_{\infty})+B_{t+1}(i).
    \end{align*}
    Reversely, we have:
    \begin{align*}
        Y_{t+1}&\ge(1-\alpha_t c_i)\left(-(A_t+\|C_t\|_{\infty})+B_t(i)\right)+\alpha_t X_t(i)-\gamma\alpha_t c_i\left((A_t+\|C_t\|_{\infty})+\|B_t\|_{\infty}\right)\notag\\
        &\ge-(A_{t+1}+\|C_{t+1}\|_{\infty})+B_{t+1}(i).
    \end{align*}
    Thus, the following inequality holds for $t\ge0$:
    \begin{align*}
        -(A_t+\|C_t\|_{\infty})+B_t(i)\le Y_t(i)\le (A_t+\|C_t\|_{\infty})+B_t(i).
    \end{align*}
    By definition of $A_t$, $B_t$, $C_t$ and $(1-(1-\gamma)\alpha_t c_i)\alpha_{t-1}\le\alpha_t$, the following upper bouds lead to the final result:
    \begin{align*}
        A_t &= \prod_{k=0}^{t-1}\left(1-(1-\gamma)\alpha_k c_{\min}\right)\cdot \|Y_0\|_{\infty}\le\frac{\alpha_t\|Y_0\|_{\infty}}{\alpha_0},\\
        B_t(i)&=\sum_{k=0}^{t-1}\alpha_k X_k(i)\prod_{j=k+1}^{t-1}(1-(1-\gamma)\alpha_j c_i),\\
        \|C_t\|_{\infty}&\le\gamma\sum_{k=0}^{t-1}\alpha_{k}\|B_k\|_{\infty}\prod_{j=k+1}^{t-1}(1-(1-\gamma)\alpha_jc_{\min})\le\gamma\alpha_t\sum_{k=0}^{t-1}\|B_k\|_{\infty}.
    \end{align*}
    
\end{proof}

\begin{lem}
    \label{lem: recursion_hoeffding}
    Suppose $\{(X_t(i))_{i\in[d]}\}_{t\ge0}$ is a martingale difference w.r.t.\ filtration $\{\FM_{t}\}_{t\ge0}$, satisfying $\EB[X_t(i)|\FM_t]=0$ and $|X_t(i)|\le M$, a.s. for all $i\in[d]$. For recursion $Y_{t+1}(i)=(1-\alpha_t c_i)Y_t(i)+\alpha_t X_t(i)$, where $Y_0(i)=0$, $c_i>0$, $(1-\alpha_{t} c_i)\alpha_{t-1}\le\alpha_t$, and $\alpha_0\le c_{i}^{-1}$ for all $t\ge1$ and $i\in[d]$, for any $\lambda\in\RB$, we have:
    \begin{align}
        \EB\exp\left(\lambda \left\|Y_t\right\|_{\infty}\right)\le\exp\left(\frac{\lambda^2\alpha_{t-1}M^2}{2c_{\min}}\right),\label{eq: seq_hof}
    \end{align}
    where $c_{\min} = \min_{i\in[d]}c_i$. And also, $\EB\left\|Y_t\right\|\le3\sqrt{2c_{\min}^{-1}\alpha_{t-1}M^2\ln2d}$.
\end{lem}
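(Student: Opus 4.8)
The plan is to establish the moment generating function (MGF) bound one coordinate at a time and then pass to the maximum over coordinates. Fix an index $i\in[d]$ and a scalar $w\in\RB$; the substance of \eqref{eq: seq_hof} is the coordinatewise claim
\begin{align*}
    \EB\exp\!\left(w Y_t(i)\right)\le\exp\!\left(\frac{w^2\alpha_{t-1}M^2}{2c_i}\right),
\end{align*}
which I would prove by induction on $t\ge1$. For the base case $t=1$, note $Y_1(i)=\alpha_0 X_0(i)$ since $Y_0=\0$; as $X_0(i)$ is bounded and mean zero, Hoeffding's lemma gives $\EB\exp(w\alpha_0 X_0(i))\le\exp(w^2\alpha_0^2M^2/2)$, and the hypothesis $\alpha_0\le c_i^{-1}$ upgrades $\alpha_0^2$ to $\alpha_0/c_i$, matching the claim with $\alpha_{t-1}=\alpha_0$.

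For the inductive step I would condition on $\FM_t$. Since $\EB[X_t(i)\mid\FM_t]=0$ and $|X_t(i)|\le M$, Hoeffding's lemma yields $\EB[\exp(w\alpha_t X_t(i))\mid\FM_t]\le\exp(w^2\alpha_t^2M^2/2)$. Writing $Y_{t+1}(i)=(1-\alpha_t c_i)Y_t(i)+\alpha_t X_t(i)$, using that $Y_t(i)$ is $\FM_t$-measurable, the tower rule, and the induction hypothesis applied with the rescaled argument $w(1-\alpha_t c_i)$, I obtain
\begin{align*}
    \EB\exp(wY_{t+1}(i))\le\exp\!\left(\frac{w^2M^2}{2}\left[\frac{(1-\alpha_t c_i)^2\alpha_{t-1}}{c_i}+\alpha_t^2\right]\right).
\end{align*}
The induction closes provided the scalar inequality $(1-\alpha_t c_i)^2\alpha_{t-1}+\alpha_t^2 c_i\le\alpha_t$ holds. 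I would derive this from the structural hypotheses: monotonicity of $\alpha_t$ with $\alpha_0\le c_i^{-1}$ gives $\alpha_t c_i\le1$, hence $0\le 1-\alpha_t c_i\le1$; then $(1-\alpha_t c_i)^2\alpha_{t-1}\le(1-\alpha_t c_i)\bigl[(1-\alpha_t c_i)\alpha_{t-1}\bigr]\le(1-\alpha_t c_i)\alpha_t$ by the step-size condition $(1-\alpha_t c_i)\alpha_{t-1}\le\alpha_t$, and adding $\alpha_t^2 c_i$ telescopes to exactly $\alpha_t$.

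Finally, to reach the expectation bound I would use $c_i\ge c_{\min}$ so that every coordinate obeys the MGF bound with variance proxy $\alpha_{t-1}M^2/c_{\min}$, and combine coordinates by a union bound over both signs: for any $\lambda\in\RB$,
\begin{align*}
    \EB\exp(\lambda\|Y_t\|_\infty)\le\sum_{i\in[d]}\Bigl(\EB\exp(\lambda Y_t(i))+\EB\exp(-\lambda Y_t(i))\Bigr)\le2d\exp\!\left(\frac{\lambda^2\alpha_{t-1}M^2}{2c_{\min}}\right).
\end{align*}
Applying Jensen's inequality, taking logarithms, and optimizing over $\lambda$ exactly as in Lemma~\ref{lem: max_gaussian} then yields $\EB\|Y_t\|_\infty\le\sqrt{2c_{\min}^{-1}\alpha_{t-1}M^2\ln(2d)}$, with the displayed constant $3$ absorbing the slack. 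I expect the main obstacle to be not the probabilistic machinery—which mirrors Lemma~\ref{lem: concen_error}—but the deterministic recursion inequality above: it is the unique point at which all three hypotheses on $\{\alpha_t\}$ and $c_i$ are simultaneously required, and the variance proxy remains at the sharp level $\alpha_{t-1}/c_i$ only because the contraction factor $(1-\alpha_t c_i)^2$ precisely offsets the injected noise $\alpha_t^2 c_i$.
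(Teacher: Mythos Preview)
Your proposal is correct and mirrors the paper's proof: induction on $t$ to obtain a coordinatewise sub-Gaussian MGF bound (closing the recursion via the step-size condition $(1-\alpha_t c_i)\alpha_{t-1}\le\alpha_t$), followed by a union over $\pm Y_t(i)$ across $i\in[d]$. The only cosmetic difference is the final step: the paper converts the MGF bound into a tail bound and integrates to reach the constant~$3$, whereas your Jensen--optimize argument (as in Lemma~\ref{lem: max_gaussian}) in fact delivers the tighter constant~$1$.
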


\begin{proof}
    Firstly, we prove the following inequality by induction.
    \begin{align*}
        \EB\exp\left(\lambda Y_t(i)\right)\le\exp\left(\frac{\lambda^2\alpha_{t-1}M^2}{2c_{\min}}\right),
    \end{align*}
    which is true when $t=1$. For $Y_{t+1}(i)$, we have:
    \begin{align*}
        \EB\exp(\lambda Y_{t+1}(i))&=\EB\exp(\lambda((1-\alpha_t c_i)Y_t(i)+\alpha_t X_t(i)))\notag\\
        &\le\EB\exp\left(\frac{\lambda^2(1-\alpha_t c_i)^2\alpha_{t-1}M^2}{2c_{\min}}+\frac{\lambda^2\alpha_t^2M^2}{2}\right)\notag\\
        &\le\exp\left(\frac{\lambda^2\alpha_t M^2}{2c_{\min}}\right),
    \end{align*}
    where the last inequality is true due to $(1-\alpha_{t}c_i)\alpha_{t-1}\le\alpha_{t}$. Thus, for $\left\|Y_t\right\|_{\infty}$, we have:
    \begin{align*}
        \EB\exp(\lambda\left\|Y_t\right\|_{\infty})&\le\sum_{i\in[d]}\EB\exp(\lambda|Y_t(i)|)\notag\\
        &\le\sum_{i\in[d]}\EB\exp(\lambda Y_t(i))+\EB\exp(-\lambda Y_t(i))\notag\\
        &\le2d\exp\left(\frac{\lambda^2\alpha_{t-1} M^2}{2c_{\min}}\right).
    \end{align*}
    Then, the tail bound of $\left\|Y_t\right\|_{\infty}$ satisfies:
    \begin{align*}
        \PB\left(\left\|Y_t\right\|_\infty\ge\tau\right)\le2d\exp\left(-\frac{c_{\min}\tau^2}{2\alpha_{t-1}M^2}\right).
    \end{align*}
    By choosing $\tau_0=\sqrt{2 c^{-1}_{\min}\alpha_{t-1}M^2\ln2d}$, the expectation of $|Y_t|$ satisfies:
    \begin{align*}
        \EB\left\|Y_t\right\|_{\infty}=\int_{0}^{+\infty}\PB(\left\|Y_t\right\|_{\infty}\ge \tau)d\tau&=\int_{0}^{\tau_0}\PB(\left\|Y_t\right\|_{\infty}\ge\tau)d\tau +\int_{\tau_0}^{+\infty}\PB(\left\|Y_t\right\|_{\infty}\ge\tau)d\tau\\
        &\le\tau_0+\int_{\tau_0}^{+\infty}\PB(\left\|Y_t\right\|_{\infty}\ge\tau)d\tau\\
        &\overset{(a)}{\le}\tau_0+\frac{2\alpha_{t-1}M^2}{c_{\min}\tau_0}\\
        &\le3\sqrt{2c^{-1}_{\min}\alpha_{t-1}M^2\ln2d},
    \end{align*}
    where we use $\int_{c}^{+\infty}\exp(-x^2)dx\le\int_{c}^{+\infty}\exp(-cx)dx$ in (a).
\end{proof}

\begin{lem}
    \label{lem: markov_ineq}
    Denote $\{\xi_t\}_{t=0}^{T}$ is the random variable on a Markovian decision chain $(\SM\times\AM, P^\pi)$, which satisfies fast mixing property in Assumption~\ref{asmp: fastmixing}, and $\FM_t:=\sigma(\cup_{k<t}\sigma(\xi_k))$, for any recursion $X_{t+1}(s,a) = (1-\alpha_t d_{\pi}(s,a))X_t(s,a)+\alpha_t f_t(s,a)\left(\1(\xi_t=(s,a))-d_\pi(s,a)\right)$ satisfying $\alpha_t=\frac{1}{d_{\min}(t+p_\alpha)^\alpha}$, where $p_{\alpha}:=\left\lceil\left(\frac{d_{\max}}{d_{\min}}\right)^{1/\alpha}\right\rceil$, and $f_t(s,a)$ is measurable w.r.t. $\FM_t$, we have:
    \begin{align*}
        \EB\left\|X_{t+1}\right\|_{\infty}\le&\frac{\alpha_t M_f M}{1-\rho}\left(3+\ln(t+1+p_{\alpha})+\sum_{k=0}^t\alpha_k+\frac{\sum_{k=0}^n\EB\left\|f_{k+1}-f_k\right\|_{\infty}}{M_f}\right)\notag\\
        &+\frac{6\sqrt{2d_{\min}^{-1}\alpha_tM_f^2M^2\ln 2|\SM||\AM|}}{1-\rho},
    \end{align*}
    where:
    \begin{align*}
        \sum_{k=0}^t\alpha_k \le 
        \begin{cases}
            \frac{\ln(t+p_1)-\ln(p_1-1)}{d_{\min}} &\hspace{4pt}\text{if $\alpha=1$},\\
            \frac{(t+p_{\alpha})^{1-\alpha}-(p_{\alpha}-1)^{1-\alpha}}{1-\alpha}&\hspace{4pt}\text{if $\alpha\in(0,1)$}.
        \end{cases}
    \end{align*}
\end{lem}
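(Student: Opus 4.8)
The plan is to unroll the linear recursion into a closed form and then tame the mixing-induced noise $\1(\xi_i=(s,a))-d_\pi(s,a)$ with the Poisson-equation decomposition used throughout Section~\ref{sec: markovian}. First I would prove by induction that
\[
    X_{t+1}(s,a)=\sum_{i=0}^{t}\alpha_i f_i(s,a)\big(\1(\xi_i=(s,a))-d_\pi(s,a)\big)\prod_{j=i+1}^{t}\big(1-\alpha_j d_\pi(s,a)\big).
\]
The factors $(1-\alpha_j d_\pi(s,a))$ lie in $[0,1)$ because $\alpha_j d_\pi(s,a)\le \alpha_j d_{\max}\le 1$ by the choice $p_\alpha=\lceil(d_{\max}/d_{\min})^{1/\alpha}\rceil$, and moreover the scaled version of Lemma~\ref{lem: alpha_ineq}, namely $(1-\alpha_t d_\pi(s,a))\alpha_{t-1}\le\alpha_t$, shows that each weight obeys $\alpha_i\prod_{j=i+1}^{t}(1-\alpha_j d_\pi(s,a))\le\alpha_t$. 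This keeps all coefficients uniformly bounded by $\alpha_t$, which is what produces the overall $\alpha_t$ prefactor in the statement.

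Next I would substitute $\1(\xi_i=(s,a))-d_\pi(s,a)=\psi(s,a;\xi_i)-\PM\psi(s,a;\xi_i)$ and exploit the defining property $\PM\psi(s,a;\xi_i)=\EB[\psi(s,a;\xi_{i+1})\mid\FM_{i+1}]$ to split the noise into two pieces,
\[
    \1(\xi_i=(s,a))-d_\pi(s,a)=\big(\psi(s,a;\xi_i)-\psi(s,a;\xi_{i+1})\big)+\big(\psi(s,a;\xi_{i+1})-\EB[\psi(s,a;\xi_{i+1})\mid\FM_{i+1}]\big).
\]
By Assumption~\ref{asmp: fastmixing} and summing the TV bound geometrically one has $|\psi(s,a;\xi)|\le M/(1-\rho)$, so the second (martingale-difference) part, weighted by the adapted and uniformly bounded coefficients $\alpha_i f_i(s,a)\prod_{j}(1-\alpha_j d_\pi(s,a))$, is a sum of bounded martingale increments. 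Applying Lemma~\ref{lem: recursion_hoeffding} (Azuma--Hoeffding over the $|\SM||\AM|$ coordinates) then yields the square-root term $\tfrac{6\sqrt{2d_{\min}^{-1}\alpha_t M_f^2 M^2\ln 2|\SM||\AM|}}{1-\rho}$.

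For the first (telescoping) part I would apply summation by parts, transferring the difference $\psi(s,a;\xi_i)-\psi(s,a;\xi_{i+1})$ onto the deterministic-in-$\psi$ weights $g_i(s,a):=\alpha_i f_i(s,a)\prod_{j=i+1}^{t}(1-\alpha_j d_\pi(s,a))$. The boundary terms are bounded using $|\psi|\le M/(1-\rho)$ and $\|g_i\|\le\alpha_t M_f$, while the increments $\|g_{i+1}-g_i\|$ decompose into a part coming from $|\alpha_{i+1}-\alpha_i|$ together with the product factor (which, after summation, produces the $\ln(t+1+p_\alpha)$ and $\sum_{k}\alpha_k$ contributions) and a part coming from $\|f_{i+1}-f_i\|_\infty$ (which produces the $\sum_k\|f_{k+1}-f_k\|_\infty/M_f$ contribution). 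Collecting the martingale and telescoping bounds, pulling out the common $\tfrac{\alpha_t M_f M}{1-\rho}$ factor, gives the claimed inequality. The auxiliary bound on $\sum_{k=0}^{t}\alpha_k$ follows by the same integral comparison as in Lemma~\ref{lem: sum_t}, treating $\alpha=1$ and $\alpha\in(0,1)$ separately.

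The main obstacle I anticipate is the summation-by-parts bookkeeping for the telescoping term: one must track the boundary contributions and the coefficient increments $\|g_{i+1}-g_i\|$ through the time-varying products $\prod_{j=i+1}^{t}(1-\alpha_j d_\pi(s,a))$, and cleanly separate the $\alpha$-increment part from the $f$-increment part without losing the uniform $\alpha_t$ weight. A secondary subtlety is the index shift in the Poisson decomposition (the conditional expectation advances the time index by one), which must be handled consistently with the filtration $\FM_{i+1}$ so that the martingale-difference property is exact.
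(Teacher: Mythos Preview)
Your proposal is correct and follows essentially the same route as the paper: both use the Poisson-equation splitting of $\1(\xi_i=(s,a))-d_\pi(s,a)$ into a bounded martingale difference (handled by Lemma~\ref{lem: recursion_hoeffding}) and a telescoping remainder whose coefficient increments produce the $\ln(t+1+p_\alpha)$, $\sum_k\alpha_k$, and $\sum_k\|f_{k+1}-f_k\|_\infty$ terms. The only cosmetic difference is packaging: the paper inserts $\PM\psi(\cdot;\xi_{t-1})$ (a backward shift) at the recursion level and works with the auxiliary sequence $\widetilde X_t:=X_t+\alpha_t f_t\,\PM\psi(\cdot;\xi_{t-1})$, which performs your Abel summation implicitly and directly yields four increments $\Delta_{1,t},\ldots,\Delta_{4,t}$, whereas you unroll first, insert $\psi(\cdot;\xi_{i+1})$ (a forward shift), and carry out the summation by parts explicitly on the weights $g_i$.
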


\begin{proof}
    By Assumption~\ref{asmp: fastmixing}, we notice that:
    \begin{align*}
        \left|\sum_{t=0}^{+\infty}\left(P_t^\pi(s,a|\xi_t)-d_\pi(s,a)\right)\right|&\le\sum_{t=0}^{+\infty}\max_{(s,a)\in\SM\times\AM}d_{TV}(\rmP_t^\pi(\cdot|s,a), d_{\pi}(\cdot))\notag\\
        &\le\frac{M}{1-\rho}.
    \end{align*}
    Thus, we can decompose the $\1(\xi_t=(s,a))-d_\pi(s,a)$ into:
    \begin{align*}
        \1(\xi_t=(s,a))-d_\pi(s,a)&=\left(\sum_{k=0}^{+\infty}P_k^\pi(s,a|\xi_t)-d_\pi(s,a)\right)-\left(\sum_{k=1}^{+\infty}P_k^\pi(s,a|\xi_t)-d_\pi(s,a)\right)\notag\\
        &:=\psi(s,a;\xi_t)-\PM\psi(s,a;\xi_t),
    \end{align*}
    where $\PM\psi(s,a;\xi):=\sum_{(s',a')\in\SM\times\AM}\psi(s,a;s',a') P^\pi(s',a'|\xi)$. Thus, the update rule of $X_t(s,a)$ can be written by:
    \begin{align*}
        X_{t+1}(s,a) =& (1-\alpha_t d_{\pi}(s,a))X_t(s,a)+\alpha_t f_t(s,a)\left(\psi(s,a;\xi_t)-\PM\psi(s,a;\xi_t)\right)\notag\\
        =&(1-\alpha_t d_{\pi}(s,a))X_t(s,a)+\alpha_t f_t(s,a)\left(\psi(s,a;\xi_t)-\PM\psi(s,a;\xi_{t-1})\right)\notag\\
        &+\alpha_t f_t(s,a)\left(\PM\psi(s,a;\xi_{t-1})-\PM\psi(s,a;\xi_{t})\right).
    \end{align*}
    We denote $\widetilde{X}_{t}(s,a) = X_t(s,a)+\alpha_t f_t(s,a)\PM\psi(s,a;\xi_{t-1}))$, then we have:
    \begin{align*}
        \widetilde{X}_{t+1}(s,a) =& (1-\alpha_t d_{\pi}(s,a))\widetilde{X}_t(s,a)+\alpha_t f_t(s,a)\left(\psi(s,a;\xi_t)-\PM\psi(s,a;\xi_{t-1})\right)\notag\\
        &+\alpha_t^2 d_\pi(s,a)f_t(s,a)\PM\psi(s,a;\xi_{t-1})+\alpha_{t+1}f_{t+1}(s,a)\PM\psi(s,a;\xi_t)-\alpha_{t}f_{t}(s,a)\PM\psi(s,a;\xi_t)\notag\\
        =&(1-\alpha_t d_{\pi}(s,a))\widetilde{X}_t(s,a)+\alpha_t f_t(s,a)\left(\psi(s,a;\xi_t)-\PM\psi(s,a;\xi_{t-1})\right)\notag\\
        &+\alpha_t^2 d_\pi(s,a)f_t(s,a)\PM\psi(s,a;\xi_{t-1})+\alpha_{t+1}\left(f_{t+1}(s,a)-f_t(s,a)\right)\PM\psi(s,a;\xi_t)\notag\\
        &+\left(\alpha_{t+1}-\alpha_{t}\right)f_{t}(s,a)\PM\psi(s,a;\xi_t)\notag\\
        :=&(1-\alpha_t d_{\pi}(s,a))\widetilde{X}_t(s,a)+\alpha_t \Delta_{1,t}(s,a)+\alpha_t^2 d_\pi(s,a)\Delta_{2,t}(s,a)\notag\\
        &+\alpha_{t+1}\Delta_{3,t}(s,a)+(\alpha_{t+1}-\alpha_t)\Delta_{4,t}(s,a),
    \end{align*}
    where
    \begin{align*}
        \Delta_{1,t}(s,a)&:=f_t(s,a)\left(\psi(s,a;\xi_t)-\PM\psi(s,a;\xi_{t-1})\right)\notag\\
        \Delta_{2,t}(s,a)&:=f_t(s,a)\PM\psi(s,a;\xi_{t-1})\notag\\
        \Delta_{3,t}(s,a)&:=\left(f_{t+1}(s,a)-f_t(s,a)\right)\PM\psi(s,a;\xi_t)\notag\\
        \Delta_{4,t}(s,a)&:=f_{t}(s,a)\PM\psi(s,a;\xi_t).
    \end{align*}
    Recursively solving above equation, we have:
    \begin{align*}
        \widetilde{X}_{t+1}(s,a) =& \prod_{k=0}^{t}(1-\alpha_k d_\pi(s,a))\cdot\widetilde{X}_{0}(s,a)+\sum_{k=0}^t \alpha_k\Delta_{1,k}(s,a)\prod_{i=k+1}^t(1-\alpha_i d_\pi(s,a))\notag\\
        &+\sum_{k=0}^t\alpha_k^2 d_\pi(s,a)\Delta_{2,k}(s,a)\prod_{i=k+1}^t(1-\alpha_i d_\pi(s,a))\notag\\
        &+\sum_{k=0}^t\alpha_{k+1}\Delta_{3,k}(s,a)\prod_{i=k+1}^t(1-\alpha_i d_\pi(s,a))\notag\\
        &+\sum_{k=0}^t(\alpha_{k+1}-\alpha_k)\Delta_{4,k}(s,a)\prod_{i=k+1}^t(1-\alpha_i d_\pi(s,a)).
    \end{align*}
    For the second term, we notice $\Delta_{1,k}(s,a)$ is a martingale difference w.r.t. filtration $\{\FM_t\}_{t\ge0}$, which satisfies $\EB[\Delta_{1,k}(s,a)|\FM_{k}]=0$ and $|\Delta_{1,k}(s,a)|\le \frac{2M_f M}{1-\rho}$. Thus, by Lemma~\ref{lem: recursion_hoeffding}, we have:
    \begin{align*}
        \EB\max_{(s,a)\in\SM\times\AM}\left|\sum_{k=0}^t \alpha_k\Delta_{1,k}(s,a)\prod_{i=k+1}^t(1-\alpha_i d_\pi(s,a))\right|\le \frac{6\sqrt{2d_{\min}^{-1}\alpha_tM_f^2M^2\ln 2|\SM||\AM|}}{1-\rho}.
    \end{align*}
    For other terms, we apply inequality $(1-\alpha_t d_\pi(s,a))\alpha_{t-1}\le\alpha_t$. Thus, we can bound $\EB\left\|\widetilde{X}_{t+1}\right\|_{\infty}$ by:
    \begin{align*}
        \EB\left\|\widetilde{X}_{t+1}\right\|_{\infty}\le&\frac{(1-\alpha_0 d_{\min})\alpha_t}{\alpha_0}\EB\left\|\widetilde{X}_0\right\|_{\infty}+\frac{6\sqrt{2d_{\min}^{-1}\alpha_tM_f^2M^2\ln 2|\SM||\AM|}}{1-\rho}+\alpha_t\frac{M_f M\sum_{k=0}^t\alpha_k}{1-\rho}\notag\\
        &+\alpha_t\frac{M\sum_{k=0}^t\EB\|f_{k+1}-f_k\|_{\infty}}{1-\rho}+\alpha_t\frac{M_f M\sum_{k=0}^t\frac{\alpha_k-\alpha_{k+1}}{\alpha_k}}{1-\rho}.
    \end{align*}
    Then, by definition of $\widetilde{X}_{t}$ and $\EB\left\|\widetilde{X}_0\right\|_{\infty}\le\frac{M_f M}{1-\rho}$, we have:
    \begin{align*}
        \EB\left\|X_{t+1}\right\|_{\infty}\le&\alpha_t\frac{(1-\alpha_0 d_{\min})M_f M}{1-\rho}+\frac{6\sqrt{2d_{\min}^{-1}\alpha_tM_f^2M^2\ln 2|\SM||\AM|}}{1-\rho}+\alpha_t\frac{M_f M\sum_{k=0}^t\alpha_k}{1-\rho}\notag\\
        &+\alpha_t\frac{M\sum_{k=0}^t\EB\|f_{k+1}-f_k\|_{\infty}}{1-\rho}+\alpha_t\frac{M_f M\sum_{k=0}^t\frac{\alpha_k-\alpha_{k+1}}{\alpha_k}}{1-\rho}+\alpha_{t+1}\frac{M_f M}{1-\rho}.
    \end{align*}
    By Lemma~\ref{lem: sum_t}, when $\alpha=1$, we have:
    \begin{align*}
        &\sum_{k=0}^t\alpha_k\le\frac{\ln\left(t+p_{1}\right)-\ln\left(p_1-1\right)}{d_{\min}}\notag\\
        &\sum_{k=0}^t\frac{\alpha_k-\alpha_{k+1}}{\alpha_k}=\sum_{k=0}^{t}\frac{1}{k+1+p_1}\le\ln\left(t+p_1+1\right)-\ln p_1.
    \end{align*}
    When $\alpha\in(0,1)$, we have:
    \begin{align*}
        &\sum_{k=0}^t\alpha_k\le\frac{(t+p_{\alpha})^{1-\alpha}-(p_{\alpha}-1)^{1-\alpha}}{1-\alpha}\notag\\
        &\sum_{k=0}^t\frac{\alpha_k-\alpha_{k+1}}{\alpha_k}=\sum_{k=0}^{t}1-\frac{(k+p_{\alpha})^{\alpha}}{(k+p_{\alpha}+1)^{\alpha}}\le\sum_{k=0}^t\frac{1}{k+p_{\alpha}}\le\ln(t+p_{\alpha})+\frac{1}{p_{\alpha}}-\ln p_{\alpha}.
    \end{align*}
    Combining all above together, we have:
    \begin{align*}
        \EB\left\|X_{t+1}\right\|_{\infty}\le&\frac{\alpha_t M_f M}{1-\rho}\left(3+\ln(t+1+p_{\alpha})+\sum_{k=0}^t\alpha_k+\frac{\sum_{k=0}^n\EB\left\|f_{k+1}-f_k\right\|_{\infty}}{M_f}\right)\notag\\
        &+\frac{6\sqrt{2d_{\min}^{-1}\alpha_tM_f^2M^2\ln 2|\SM||\AM|}}{1-\rho}.
    \end{align*}
\end{proof}

\end{appendix}

\end{document}